\documentclass{article}

\usepackage{microtype}
\usepackage{graphicx}
\usepackage{subcaption}
\usepackage{booktabs} % for professional tables

\usepackage{hyperref}
\usepackage{mathrsfs}

\usepackage[accepted]{icml2026}

\usepackage{amsmath}
\usepackage{amssymb}
\usepackage{mathtools}
\usepackage{amsthm}
\usepackage{xspace}
\usepackage{enumitem}

\newcommand{\ReWA}{{ReWA}\xspace}

\newcommand{\Lone}{{\text{[A1]}}}
\newcommand{\Lzero}{{\text{[A0]}}}
\newcommand{\Lp}{{\text{[Ap]}}}

\newcommand{\LoneOpt}{{\text{[B1]}}}
\newcommand{\LzeroOpt}{{\text{[B0]}}}
\newcommand{\LpOpt}{{\text{[Bp]}}}

\newcommand{\LpRepara}{{\text{[Cp]}}}

\newcommand{\eg}{{\emph{e.g.}}}
\newcommand{\x}{{\boldsymbol{x}}}
\newcommand{\y}{{\boldsymbol{y}}}
\newcommand{\bu}{{\boldsymbol{u}}}

\newcommand{\cB}{\mathcal{B}}

\newcommand{\cE}{\mathcal{E}}

\newcommand{\cL}{\mathcal{L}}

\newcommand{\cN}{\mathcal{N}}
\newcommand{\cO}{\mathcal{O}}

\newcommand{\bR}{\mathbb{R}}

\DeclareMathOperator*{\argmin}{arg\,min}

\usepackage[capitalize,noabbrev]{cleveref}

\theoremstyle{plain}
\newtheorem{theorem}{Theorem}[section]
\newtheorem{proposition}[theorem]{Proposition}
\newtheorem{lemma}[theorem]{Lemma}
\newtheorem{corollary}[theorem]{Corollary}
\theoremstyle{definition}
\newtheorem{definition}[theorem]{Definition}
\newtheorem{assumption}[theorem]{Assumption}
\newtheorem{example}[theorem]{Example}
\newtheorem{fact}[theorem]{Fact}
\theoremstyle{remark}
\newtheorem{remark}[theorem]{Remark}

\usepackage[textsize=tiny]{todonotes}

\icmltitlerunning{Theoretical Analysis of Sparse Optimization with Reparameterization, Weight Decay, and Adaptive Learning Rate}

\begin{document}

\twocolumn[
  \icmltitle{\texorpdfstring{Theoretical Analysis of Sparse Optimization with \\ Reparameterization, Weight Decay, and Adaptive Learning Rate}{Theoretical Analysis of Sparse Optimization with Reparameterization, Weight Decay, and Adaptive Learning Rate}}

  % It is OKAY to include author information, even for blind submissions: the
  % style file will automatically remove it for you unless you've provided
  % the [accepted] option to the icml2026 package.

  % List of affiliations: The first argument should be a (short) identifier you
  % will use later to specify author affiliations Academic affiliations
  % should list Department, University, City, Region, Country Industry
  % affiliations should list Company, City, Region, Country

  % You can specify symbols, otherwise they are numbered in order. Ideally, you
  % should not use this facility. Affiliations will be numbered in order of
  % appearance and this is the preferred way.
  \icmlsetsymbol{equal}{*}

  \begin{icmlauthorlist}
    \icmlauthor{Huangyu Xu}{ict,ucas}
    \icmlauthor{Jingqin Yang}{thu}
    \icmlauthor{Qianqian Xu}{ict,baai}
    \icmlauthor{Jiaye Teng}{sufe,idss}\textsuperscript{*}
  \end{icmlauthorlist}

  \icmlaffiliation{ict}{State Key Laboratory of AI Safety, Institute of Computing Technology, Chinese Academy of Sciences, Beijing, China}
  \icmlaffiliation{ucas}{School of Computer Science and Technology, University of Chinese Academy of Sciences, Beijing, China}
  \icmlaffiliation{baai}{Beijing Academy of Artificial Intelligence (BAAI), Beijing, China}
  \icmlaffiliation{thu}{IIIS, Tsinghua University, Beijing, China}
  \icmlaffiliation{sufe}{School of Statistics and Management, Shanghai University of Finance and Economics, Shanghai, China}
  \icmlaffiliation{idss}{Institute of Data Science and Statistics, Shanghai University of Finance and Economics, Shanghai, China}

  \icmlcorrespondingauthor{Jiaye Teng}{tengjiaye@sufe.edu.cn}

  % You may provide any keywords that you find helpful for describing your
  % paper; these are used to populate the "keywords" metadata in the PDF but
  % will not be shown in the document
  \icmlkeywords{Machine Learning, ICML}

  \vskip 0.3in
]

% this must go after the closing bracket ] following \twocolumn[ ...

% This command actually creates the footnote in the first column listing the
% affiliations and the copyright notice. The command takes one argument, which
% is text to display at the start of the footnote. The \icmlEqualContribution
% command is standard text for equal contribution. Remove it (just {}) if you
% do not need this facility.

% Use ONE of the following lines. DO NOT remove the command.
% If you have no special notice, KEEP empty braces:
\printAffiliationsAndNotice{}  % no special notice (required even if empty)
% Or, if applicable, use the standard equal contribution text:
% \printAffiliationsAndNotice{\icmlEqualContribution}

\begin{abstract}
Sparse optimization is a fundamental challenge in various practical applications. 
A popular approach to sparse optimization is $\ell_p$ regularization.
However, it may encounter optimization instability due to the unbounded gradients when $0<p<1$.
In this paper, we introduce a novel approach to sparse optimization termed \ReWA, based on \emph{R}eparameterization, \emph{W}eight decay, and \emph{A}daptive learning rate. \ReWA\ is closely connected to $\ell_p$-regularization, yet it unveils a distinct optimization landscape that helps mitigate instability issues. 
Experiments on CIFAR-10 and ImageNet with ResNets demonstrate that \ReWA\ leads to significant sparsity improvements over the $\ell_1$-regularization approach while preserving test accuracy.
\end{abstract}
\newcommand{\MM}{M}

\section{Introduction}
\label{sec: intro}
This paper focuses on sparse training, which involves identifying solutions with a minimal number of non-zero coefficients while adhering to specified constraints~\citep{natarajan1995sparse,candes2006near,lustig2007sparse,zhang2015survey}.
The most direct approach to sparse training relies on $\ell_0$ regularization.
However, directly solving $\ell_0$ regularization poses significant challenges due to its inherent non-continuity~\citep{DBLP:journals/siamcomp/Natarajan95,DBLP:conf/icml/ChenGWWYY17,DBLP:journals/corr/abs-1908-01014}.
A practical approach is to relax $\ell_0$ regularization with $\ell_1$ regularization:
\begin{equation}
\label{eqn: LoneOpt}
\tag{\LoneOpt}
    \min_{\x} L_1(\x) = f(\x) + \lambda_1  \| \x \|_1,
\end{equation}
where $\x \in \bR^d$ denotes a variable, $f(\cdot) \in \bR$ denotes an objective function, and $\lambda_1$ denotes a regularization parameter. 
Extensive research in compressed sensing and LASSO has both theoretically and empirically showcased the effectiveness of the $\ell_1$ regularization~\citep{tibshirani1996regression,zou2006adaptive,candes2006near,candes2008restricted}.
% For instance, \citet{candes2006near} demonstrated that solutions derived from $\ell_1$ regularization exactly recover sparse signals in noiseless linear regimes under the Restricted Isometry Property (RIP) conditions.
Nonetheless, a line of works argues that $\ell_1$ regularization may introduce bias, potentially limiting its applicability in broader scenarios~\citep{zhang2008sparsity,van2014asymptotically,DBLP:journals/spl/Chartrand07}.

To overcome the limitations of $\ell_1$ regularization, researchers have explored diverse approaches. 
These include $\ell_p$ regularization with $0 < p < 1$~\citep{frank1993statistical,fu1998penalized},
combinational techniques~\citep{portilla2007l0,feng2013complementarity},
implicit bias towards sparsity~\citep{DBLP:conf/nips/GunasekarLSS18,DBLP:conf/nips/AroraCHL19},
and smooth approximations for the challenging $\ell_0$ regularization~\citep{DBLP:conf/iclr/LouizosWK18}.
Among these, the $\ell_p$ regularization branch stands out since (a.) it offers the flexibility to be deployed into general loss functions without mandating explicit problem definitions, and (b.) it comes with accompanying theoretical guarantees~\citep{DBLP:journals/spl/Chartrand07,chartrand2008restricted,saab2008stable,DBLP:conf/isbi/Chartrand09,DBLP:journals/tit/ZhengMWWL17}.
We note that other nonconvex methods such as SCAD~\citep{fan2001variable}, MCP, and adaptive Lasso also address the bias of $\ell_1$; we view $\ell_p$ as a complementary rather than competing approach, and discuss the relationship in detail in Appendix~\ref{appendix: scad-comparison}.
The $\ell_p$ regularization approach is formally defined as follows, where $\lambda_2$ denotes a regularization parameter. 
\begin{equation}
\label{eqn: LpOpt}
\tag{\LpOpt}
    \min_{\x} L_p(\x) = f(\x) + \lambda_2  \| \x \|_p^p.
\end{equation}

However, optimizing $\ell_p$ regularization remains difficult due to its unbounded gradient and non-smooth nature, limiting its practicality primarily to linear regimes~\citep{DBLP:journals/tsp/MarjanovicS12,marjanovic2013exact,DBLP:journals/access/WenCLQ18}.
Recent works~\citep{DBLP:journals/csda/Hoff17,DBLP:journals/corr/abs-2307-03571} have attempted to reparameterize $\ell_p$ regularization with \ref{eqn: LpRepara}. 
For some given odd $K>0$, consider the form
\begin{equation}
\label{eqn: LpRepara}
\tag{\LpRepara}
\begin{split}
    &\min_{\y_1, \cdots, \y_K} L_p^\prime(\y_1, \cdots, \y_K) \\
   =&  f( \y_1 \odot \y_2 \odot  \cdots \odot \y_K ) + \frac{\lambda}{2} \sum_{i \in [K]} \| \y_i \|_2^2,
\end{split}
\end{equation}
where $\odot$ denotes the element-wise product (\emph{i.e.}, Hadamard product), $\y$ denotes the reparameterized vector (the subscript is omitted when the context is clear), and $\lambda$ denotes the regularization parameter. 
The local and global minimizers of \ref{eqn: LpRepara} closely relate to those of $\ell_p$ regularization with $p=2/K$ given a proper $\lambda$~\citep{DBLP:journals/csda/Hoff17,DBLP:journals/corr/abs-2307-03571}. 
Notably, \ref{eqn: LpRepara} has bounded gradients around zero, making the optimization easier compared to $\ell_p$ regularization.
Nonetheless, this approach may encounter optimization instability in general scenarios, posing challenges for direct application in complex real-world datasets (See Figure~\ref{fig: ablation_study_of_K_and_M}, Example~\ref{example: adaptive lr} and Theorem~\ref{thm: adaptive learning rate} for more details).

\begin{algorithm*}[t]
\caption{\ReWA\ algorithm
} 
\label{alg: optimization sparsity} 
\begin{algorithmic}[1] 
\REQUIRE object function $f(\x)$ with gradient $\nabla f(\x)$, initialization $\x(0)$, weight decay parameter $\lambda$, learning rate scheduler $\eta_t \in \bR$, element-wise power function $(\cdot)^K$, parameter $M, \epsilon \in \bR$

\STATE{Set $\y(0) = \x(0)^{1/K}$}
\FOR{$t \in [T]$}
\STATE{$ \y(t+1) = (1- \lambda \eta_t) \y(t) - \eta_t \frac{\y^{\MM}(t)}{\y^{K-1}(t)+\epsilon \boldsymbol{1}} \odot \y^{K-1}(t) \odot \nabla f(\y^K(t))$}
% \STATE{$\y(t+1) = (1- \lambda \eta_t) \y(t) - \eta_t \nabla f(\y^K(t))$}
\STATE{\text{(Optional)} $\x(t+1) = \y^K(t+1)$}
\ENDFOR

\ENSURE $\x(T) = \y^K(T)$ .

\end{algorithmic} 
\end{algorithm*}

In this paper, we propose a new sparse optimization method called~\ReWA\footnote{Code: \href{https://github.com/childofcuriosity/rewa}{github.com/childofcuriosity/rewa}.}, as illustrated in Algorithm~\ref{alg: optimization sparsity}.
\ReWA comprises three components: \emph{R}eparameterization, \emph{W}eight decay and \emph{A}daptive learning rate. 
Different from the conventional techniques which directly operate on $\x \in \bR^d$, \ReWA projects the vector $\x$ to the power of $1/K$ element-wisely inspired by loss form~\ref{eqn: LpRepara}, yielding $\y \in \bR^d$. 
It then iterates on the vector $\y$, using weight decay and adaptive learning rate.
The iteration of \ReWA is derived based on the reparameterized loss form~\ref{eqn: LpRepara}, which is closely related to $\ell_p$ regularization where $p \in (0, 1)$. 
Hence, \ReWA is well-positioned for superior performance, given the tendency for $\ell_p$ approximation to outperform $\ell_1$ approximation.

Notably, \ReWA is distinct from directly applying GD/SGD on loss form~\ref{eqn: LpRepara}.
The key distinction comes from the \textbf{coordinate-wise adaptive learning rate}.
This adaptive learning rate ensures stable training and enables its application to complex real-world datasets (We refer to Example~\ref{example: adaptive lr} for further details).
However, a question arises: does the new iteration with the adaptive learning rate still maintain a connection to the $\ell_p$ regularization with $0 < p < 1$?
We prove in Theorem~\ref{thm: ReWARegularizer} that \ReWA relates to the following regularizer.
This creates an \emph{empirical tip} on properly setting $\epsilon$ and $M$ which leads to an $\ell_p$ regularization with $0 < p < 1$.
\begin{equation*}\resizebox{0.49\textwidth}{!}{$\displaystyle
R(\x) = \frac{K}{1-\MM+K} \| \x\|_{1+(1-\MM)/K}^{1+(1-\MM)/K}+ \epsilon \frac{K}{2-\MM} \|\x\|_{(2-\MM)/K}^{(2-\MM)/K}$,}
\end{equation*}

We further provide additional theoretical evidence on each component of \ReWA, including reparameterization~(Theorem~\ref{prop: reparamaterization}), weight decay~(Theorem~\ref{thm: weightdecay}), and adaptive learning rate~(Theorem~\ref{thm: adaptive learning rate}).
Besides, Theorem~\ref{thm: ReWA} further presents advantages of $\ell_p$ regularizers. 
Empirical verifications on both synthetic (Section~\ref{sec: synthetic dataset}) and real-world datasets (CIFAR-10 and ImageNet in Section~\ref{sec: real-world dataset}) demonstrate that \ReWA consistently returns sparse solutions. 

\begin{figure}[t]  
 \centering
 \includegraphics[width=0.5\textwidth]{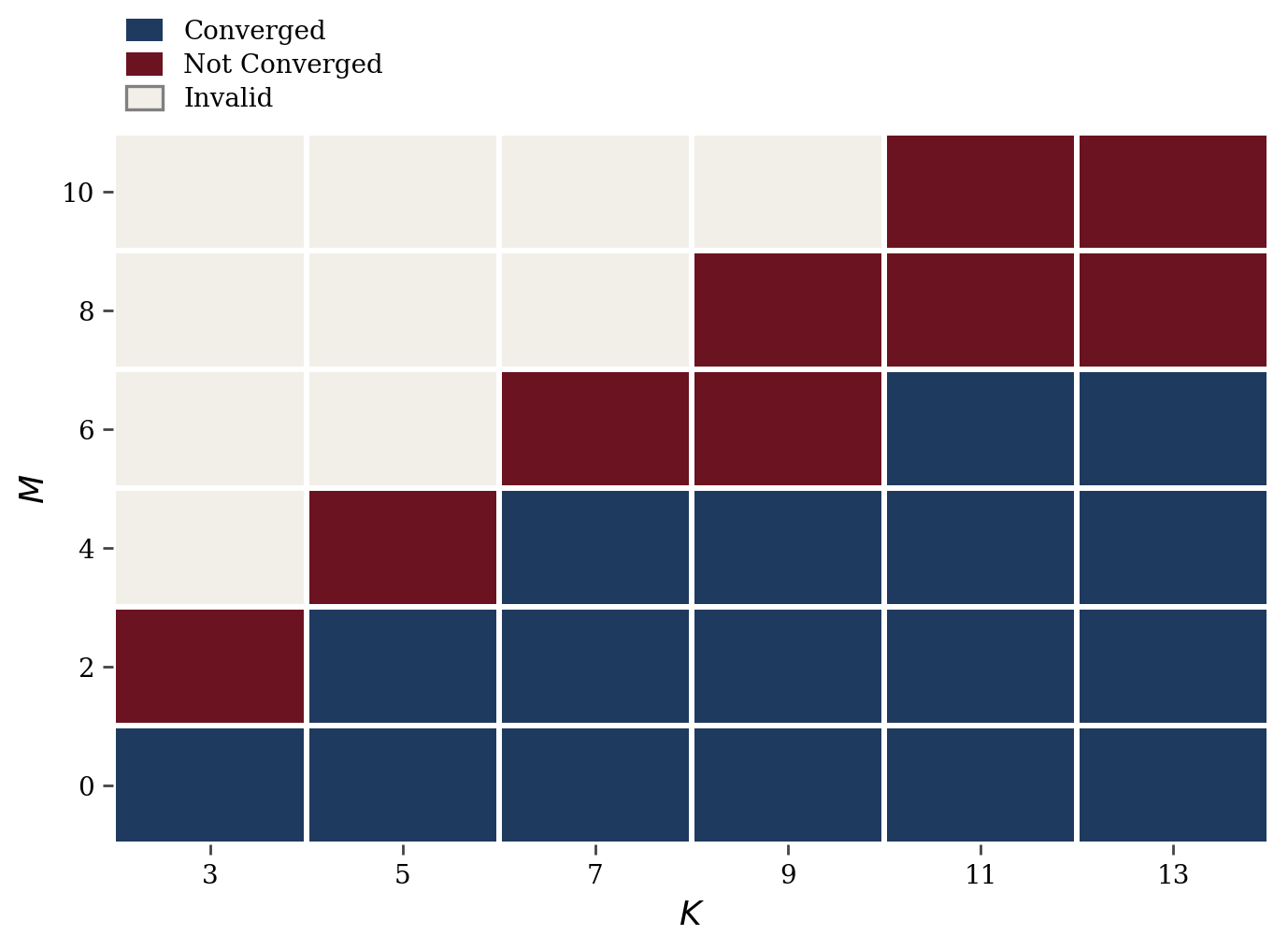} 
 \caption{The ablation study of $K$ and $M$ on linear regression models. Blue means the returned test loss could be small, red means the test loss is large, and white means that $M > K-1$.}
  \label{fig: ablation_study_of_K_and_M}  
  \vspace{-1.5em}
\end{figure}  

\emph{Role of sparse optimization.}
One of the classic applications revolves around sparse signal recovery, notably prevalent in the medical domain~\citep{lustig2007sparse,lustig2008compressed,marvasti2012unified,stankovic2019tutorial}.
In the realm of modern machine learning, sparse optimization is crucial in enhancing the effectiveness of network pruning strategies, contributing to model efficiency and performance improvement~\citep{DBLP:conf/nips/HanPTD15,DBLP:journals/corr/HanMD15,DBLP:conf/iclr/LouizosWK18,DBLP:conf/nips/SavareseSM20,xia2023sheared}.
Recent studies also highlight the potential of sparsity in various facets: 
augmenting fine-tuning performance in large language models~\citep{DBLP:conf/icml/PanigrahiSZA23},
overcoming catastrophic forgetting issues in continual learning scenarios~\citep{DBLP:conf/icml/KangMMYHHY22,DBLP:conf/nips/SchwarzJPLT21}, 
and bolstering out-of-distribution performance capabilities~\citep{DBLP:conf/icml/ZhangAX0C21}.

\section{Related Work}
\label{sec: related work}

\textbf{Reparameterization-based sparsification.}
Reparameterizing a variable as a product of auxiliary variables, combined with weight decay, induces sparse regularization~\citep{grandvalet1998least,DBLP:journals/csda/Hoff17,DBLP:journals/corr/abs-2307-03571}.
Even without weight decay, reparameterization exhibits an implicit bias toward sparsity in specific settings such as linear models and matrix factorization~\citep{DBLP:conf/nips/GunasekarLSS18,DBLP:conf/nips/AroraCHL19,DBLP:conf/colt/WoodworthGLMSGS20}.
Recent works further develop practical algorithms~\citep{jacobs2025mask,gadhikar2026sign} and study the dynamics when explicit regularization is added~\citep{jacobs2025mirror}.

\textbf{$\ell_p$ regularization theory.}
$\ell_p$ regularization ($0 < p < 1$) provides stronger sparsity guarantees than $\ell_1$ and better approximates $\ell_0$ as $p \to 0$~\citep{DBLP:journals/spl/Chartrand07,chartrand2008restricted}.
However, its non-smooth, unbounded-gradient nature makes direct optimization challenging beyond linear regimes~\citep{DBLP:journals/tsp/MarjanovicS12,fan2001variable}.

\textbf{Adaptive learning rate methods.}
Adaptive optimizers such as Adam~\citep{kingma2014adam} and AdamW~\citep{DBLP:conf/iclr/LoshchilovH19} have become standard in deep learning by scaling each coordinate's update by the square root of accumulated squared gradients.
This per-coordinate normalization naturally counteracts the vanishing-gradient effect introduced by high-order reparameterizations, motivating the adaptive learning rate design in \ReWA.

We provide detailed discussions and additional related works in Appendix~\ref{related works}.

\section{\ReWA}
\label{sec: rewa}
This section introduces the fundamental concepts of \ReWA proposed in Algorithm~\ref{alg: optimization sparsity}.
We first introduce the derivation of \ReWA in Section~\ref{sec: derivation}, including the relationships among $\ell_p$ regularizations, \ref{eqn: LpRepara} and \ReWA.
We subsequently provide the corresponding implicit regularizer of \ReWA in Section~\ref{sec: theoretical analysis}.
We finally provide additional theoretical verifications on each component of \ReWA in Section~\ref{sec: additional evidence}.

\subsection{Insights Behind \ReWA}
\label{sec: derivation}

This section introduces insights of ReWA. 
Firstly, Section~\ref{subsec 1} establishes the relationship between \ref{eqn: LpRepara} and  $\ell_p$ regularizations regarding the global minimizers, local minimizers, and stationary points.
Subsequently, Section~\ref{subsec 2} demonstrates that directly applying GD / SGD on \ref{eqn: LpRepara} with a constant learning rate may lead to optimization instability issues.
To address this, we introduce a novel adaptive learning rate in Equation~\ref{eqn: ReWA iteration}, based on the analysis in Example~\ref{example: adaptive lr}. 
% This forms the iteration of ReWA.

\subsubsection{From $\ell_p$ Regularization To \ref{eqn: LpRepara}}
\label{subsec 1}

We start from the reparameterization form~\ref{eqn: LpRepara} mentioned before~\citep{DBLP:journals/csda/Hoff17,DBLP:journals/corr/abs-2307-03571}.
It has been proven that the local and global minimizers of \ref{eqn: LpOpt} ($\ell_p$ regularizations) and \ref{eqn: LpRepara} are closely related, given that $p = 2/K \in (0, 1)$. 
Therefore, minimizing \ref{eqn: LpRepara} can be a meaningful surrogate of minimizing \ref{eqn: LpOpt}.
We make a slight extension to the results for stationary points (See Lemma~\ref{lem: SparseReWAwithLP}), which is useful under non-convex training regimes.

\begin{lemma}[Relationship between \ref{eqn: LpRepara}\ and \ref{eqn: LpOpt}, the first two arguments are from \cite{DBLP:journals/csda/Hoff17,DBLP:journals/corr/abs-2307-03571}]
\label{lem: SparseReWAwithLP}
For a differentiable function $g(\x)$, a stationary point is a variable $\x$ such that $\nabla g(\x) = 0$,
and a substationary point is a variable with $\x \odot \nabla g(\x) = 0$.
Consider problem \ref{eqn: LpRepara}\ with parameter $\lambda$ and problem \ref{eqn: LpOpt}\ with parameter $K \lambda$,
\begin{itemize}[nosep]
\item The global minimizer of \ref{eqn: LpRepara}, if exists, must be the global minimizer of \ref{eqn: LpOpt}. 
\item The local minimizer of \ref{eqn: LpRepara}, if exists, must be the local minimizer of \ref{eqn: LpOpt}. 
\item If \ref{eqn: LpRepara}\ is differentiable, its stationary point, if exists, must be the substationary point of \ref{eqn: LpOpt}. 
\end{itemize}
\end{lemma}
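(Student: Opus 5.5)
The whole argument rests on one pointwise inequality. For a fixed $\x\in\bR^d$, consider the minimum of the penalty $\frac{\lambda}{2}\sum_{i\in[K]}\|\y_i\|_2^2$ over all factorizations $\y_1\odot\cdots\odot\y_K=\x$. The problem separates across coordinates. For a scalar $x_j$, the AM--GM inequality gives
\[
\frac{1}{K}\sum_{i\in[K]} y_{i,j}^2 \;\ge\; \Bigl(\prod_{i\in[K]} y_{i,j}^2\Bigr)^{1/K} = |x_j|^{2/K},
\]
with equality iff $|y_{1,j}|=\cdots=|y_{K,j}|=|x_j|^{1/K}$. Since $K$ is odd, such a balanced factorization with the correct sign exists. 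Hence
\[
\min_{\y_1\odot\cdots\odot\y_K=\x}\ \frac{\lambda}{2}\sum_{i}\|\y_i\|_2^2 \;=\; \frac{K\lambda}{2}\|\x\|_{2/K}^{2/K},
\]
and therefore $L_p'(\y_1,\dots,\y_K)\ge L_p(\y_1\odot\cdots\odot\y_K)$, with equality exactly at balanced factorizations. Here $L_p$ uses $p=2/K$. I will take the regularization constant as whatever makes this identity exact (up to the factor-$2$ convention in the statement's ``$K\lambda$''), and I will read ``minimizer of \ref{eqn: LpRepara} is a minimizer of \ref{eqn: LpOpt}'' as a statement about the product $\x=\y_1\odot\cdots\odot\y_K$.

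\textbf{Global minimizers.} If $(\y_i)$ is a global minimizer of $L_p'$, it must be balanced: otherwise, replacing it by the balanced factorization of the same product strictly decreases $L_p'$. Hence $L_p(\x)=L_p'(\y)$. For any $\x'$, take its balanced factorization $\y'$; then $L_p(\x')=L_p'(\y')\ge L_p'(\y)=L_p(\x)$.

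\textbf{Local minimizers.} A local minimizer is again balanced. The same replacement argument works with a perturbation argument: move continuously along a path from $(\y_i)$ toward the balanced point with the same product, which strictly lowers the penalty. Now take $\x'$ near $\x$ and define $y_{i,j}'=y_{i,j}\,(x'_j/x_j)^{1/K}$ when $x_j\neq0$, and $y_{i,j}'=(x'_j)^{1/K}$ when $x_j=0$. Balancedness forces $\y_{i,j}=0$ in the latter case. This map is continuous, so $\y'\to\y$ as $\x'\to\x$, and it produces balanced factorizations of $\x'$. Therefore $L_p(\x')=L_p'(\y')\ge L_p'(\y)=L_p(\x)$ locally.

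\textbf{Stationary points.} Write $\x=\prod_i\y_i$. The stationarity conditions read
\[
\Bigl(\prod_{k\ne i}\y_k\Bigr)\odot\nabla f(\x)+\lambda\y_i=0 \quad\text{for each } i.
\]
Multiplying the $i$-th condition coordinatewise by $\y_i$ gives $\x\odot\nabla f(\x)+\lambda\y_i^2=0$. So $\y_i^2$ is the same for every $i$, i.e.\ stationary points are automatically balanced, and $y_{i,j}^2=|x_j|^{2/K}$. Substituting back gives
\[
\x\odot\nabla f(\x)+\lambda|\x|^{2/K}=0 .
\]
This is precisely $\x\odot\nabla L_p(\x)=0$ under the convention $x\cdot\frac{d}{dx}|x|^p=p|x|^p$, which also defines the product at $x=0$. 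That is substationarity.

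\textbf{Main obstacle.} I expect the local-minimizer step to be the delicate one. Zero coordinates make the $K$-th-root reparameterization of neighbors non-smooth, and one must check both that local minimizers are balanced and that the lifting map is continuous at zeros. For the first point, I would argue directly: an unbalanced point admits an arbitrarily small product-preserving perturbation that decreases the penalty, namely scaling one factor by $(1+t)$ and another by $(1+t)^{-1}$.
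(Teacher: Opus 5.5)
Your proof is correct and follows the same route as the paper's. The shared steps are balancedness of local minimizers via a product-preserving two-factor rescaling, identification of \ref{eqn: LpRepara} with \ref{eqn: LpOpt} on balanced factorizations, and multiplying the $i$-th stationarity equation by $\y_i$ to force $\y_i\odot\y_i$ to be independent of $i$. Your AM--GM identity and explicit continuous lift make rigorous what the paper states loosely, namely its ``monotone relationship'' step for local minimizers. Your factor-of-two caveat is also right: the paper's proof uses the penalty $\lambda\sum_i\|\y_i\|_2^2$, for which $K\lambda$ is exact, whereas the main-text $\frac{\lambda}{2}$ normalization gives the matching parameter $K\lambda/2$.
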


Lemma~\ref{lem: SparseReWAwithLP} sketches the relationships between \ref{eqn: LpRepara}\ and \ref{eqn: LpOpt} on the global minimizers, local minimizers, and stationary points, where \ref{eqn: LpRepara}\ serves as a smooth surrogate sharing the same minimizers as \ref{eqn: LpOpt}.
% Generally, although we focus on the different loss forms in \ref{eqn: LpRepara}\ and \ref{eqn: LpOpt}, they are closely related.
Therefore, optimizing \ref{eqn: LpRepara}\ is meaningful when \ref{eqn: LpOpt} is hard to optimize.
However, the two landscapes are significantly different.
Specifically, the original \ref{eqn: LpOpt}\ is non-smooth with unbounded gradients around zero, while \ref{eqn: LpRepara}\ is smooth with bounded gradients.
The key difference makes \ref{eqn: LpRepara}\ easier to optimize.

\subsubsection{From \ref{eqn: LpRepara} To \ReWA}
\label{subsec 2}

To minimize \ref{eqn: LpRepara}, one may use an SGD iteration, namely, for all $i \in [K]$,
\begin{equation}\resizebox{0.49\textwidth}{!}{$\displaystyle
    \y_i(t+1) = (1- \lambda \eta_t) \y_i(t) 
    - \boldsymbol{\eta}_t^\prime \odot \frac{\odot_{i}\y_i(t)}{\y_i(t)} \odot \nabla f(\odot_{i}\y_i(t)),$}
\end{equation}
where $\boldsymbol{\eta}_t^\prime \in \bR^d$ denotes the adaptive learning rate which will be discussed later, $\odot_{i}\y_i(t) \triangleq \y_1(t) \odot \cdots \odot \y_{K}(t)$, and $\frac{\odot_{i}\y_i(t)}{\y_i(t)} \triangleq\y_1(t) \odot \cdots \y_{i-1}(t) \odot \y_{i+1}(t) \odot \cdots \y_{K}(t) $.
% \begin{equation}
% \begin{split}
%     &\y_i(t+1) = (1- \lambda \eta_t) \y_i(t) \\
%     &- \boldsymbol{\eta}_t^\prime \odot \frac{\y_1(t) \odot \cdots \odot \y_K(t)}{\y_i(t)} \odot \nabla f( \y_1(t) \odot \cdots \odot \y_K(t)),
% \end{split}
% \end{equation}
% where $\boldsymbol{\eta}_t^\prime \in \bR^d$ denotes the adaptive learning rate which will be discussed later, and we denote $\frac{\y_1(t) \odot \cdots \odot \y_K(t)}{\y_i(t)} \triangleq \y_1(t) \odot \cdots \y_{i-1}(t) \odot \y_{i+1}(t) \odot \cdots \y_{K}(t) $.

Due to the symmetry of $\y_i$ in \ref{eqn: LpRepara} and for efficiency, we tie $\y_i := \y_1$ for all $i \in [K]$, reducing cost by a factor of $K$.
By denoting $\y=\y_i$ for simplicity, this leads to 
\begin{equation}
\label{eqn: without alr}
    \y(t+1) = (1- \lambda \eta_t) \y(t) - \boldsymbol{\eta}_t^\prime \odot \y^{K-1}(t) \odot \nabla f(\y^K(t)),
\end{equation}
where the power is coordinate-wise. 
Compared to the iteration based on \ref{eqn: LpOpt}, the iteration in Equation~\ref{eqn: without alr} has bounded gradients around zero.
However, if we just set $\boldsymbol{\eta}_t^\prime = \eta \boldsymbol{1}$, namely, each coordinate uses the same learning rate, its corresponding training process is unstable due to the illness of the loss landscape. 
Specifically, the term $\y^{K-1}(t)$ may create high-order stationary points around zero for each coordinate.
Therefore, when some coordinate is mistakenly trapped in the zero point, it is hard to escape from that. 
The following example in Example~\ref{example: adaptive lr}  illustrates the phenomenon, demonstrating the optimization issue encountered in the absence of an adaptive learning rate.
% that the term $\y^{K-1}(t)$ may do harm to the model performance.
\begin{example}[Adaptive Learning Rate]
\label{example: adaptive lr} 
Consider a one-dimensional loss $f(\x) = (\x - 1)^2$ with reparameterization $\x \in \bR$ with $\y^K \in \bR$.
The goal is to optimize $\y$ to find the minimizer $\x = \y = 1$.
We initialize at $\y(0) = -1$ and compare two optimization schemes of $\y(t+1)$:
\begin{itemize}[nosep]
    \item Adaptive: $\y(t) - \eta \nabla f(\y^K(t))$;
    \item Non-adaptive: $\y(t) - \eta^\prime \y^{K-1}(t) \odot \nabla f(\y^K(t))$
\end{itemize}
% \begin{itemize}[nosep]
%     \item With adaptive learning rate (Equation~\ref{eqn: with alr}): \begin{equation}
%         \y(t+1) = \y(t) - \eta \nabla f(\y^K(t)).\label{eqn: with alr} 
%     \end{equation}
%     \item Without adaptive learning rate (Equation~\ref{eqn: without alr 2})
%     \begin{equation}
%         \y(t+1) = \y(t) - \eta^\prime \y^{K-1}(t) \odot \nabla f(\y^K(t)).\label{eqn: without alr 2}
%     \end{equation}
% \end{itemize}
It holds that:
\begin{itemize}[nosep]
    \item For adaptive learning rate with $\eta < \frac{1}{2K}$, for any $T> 0$, $| \y(T) - 1 | \leq 2(1-\frac{2\eta}{K-1})^{T}$;
    \item For non-adaptive learning rate with $\eta^\prime < 1/4$, for any $T>0$, $| \y(T) - 1 | \geq 1$.
\end{itemize}
\end{example}

Example~\ref{example: adaptive lr} demonstrates that an adaptive learning rate helps in escaping from the zero saddle point.
When the ground truth and the initialization have different signs, it is hard to escape from the zero saddle point in Equation~\ref{eqn: without alr}, leading to a large loss.
We refer the readers to Appendix~\ref{appendix: proof of A} for the whole proof of Example~\ref{example: adaptive lr}.

% Example~\ref{prop: adaptive lr} highlights that standard gradient descent fails because the update step scaled by $\y^{K-1}$ becomes extremely small near the origin, preventing the parameter from escaping. The adaptive update, which emulates the normalization effect of adaptive learning rates, is immune to this issue and robustly finds the minimizer. A full proof is provided in Appendix~\ref{appendix: proof of A}.
% 

% \jiaye{We illustrate this phenomenon in Figure~\ref{}. }

Due to the above discussions on Example~\ref{example: adaptive lr},
one needs to use a coordinate-wise adaptive learning rate to reduce the effect of $\y^{K}$ and stabilize the optimization process. 
To this end, we propose a new adaptive learning rate $\boldsymbol{\eta}_t^\prime = \eta_t \frac{\y^{\MM}(t)}{\y^{K-1}(t)+\epsilon \boldsymbol{1}}$ with $\eta_t, \epsilon \in \bR^+$ and $0\leq \MM<K-1$, leading to the \ReWA iteration:
\begin{equation}
\label{eqn: ReWA iteration}
\begin{split}
     \y(t+1) =& (1- \lambda \eta_t) \y(t) \\
     &- \eta_t \frac{\y^{\MM}(t)}{\y^{K-1}(t)+\epsilon \boldsymbol{1}} \odot \y^{K-1}(t) \odot \nabla f(\y^K(t)).
\end{split}
\end{equation}
The adaptive learning rate can be split into three parts. 
We present the intuition behind the three terms below, and provide the theoretical insights in Section~\ref{sec: theoretical analysis}. 
\begin{itemize}[itemsep=2pt,topsep=0pt,parsep=0pt]
    \item Term $\eta_t \in \bR$, which represents the scale of the stepsize. 
    \item Term $\frac{1}{\y^{K-1}(t)+\epsilon \boldsymbol{1}}$, which eliminate the effects of $\y^{K-1}(t)$ when $\y(t)$ is large. We will further show that this will lead to an $\ell_p~(0<p<1)$ regularization with scale $\epsilon$. Similar terms can be found in Adam-like optimizers. 
    \item Term $\y^{\MM}(t)$ with $\MM \in [0, K-1]$, which maintains an tradeoff between the sparsity (large $\MM$) and optimization stability (small $\MM$). This introduces an $\ell_p~(0<p<1)$ regularization where $p$ relates to $\MM$. The constraint $\MM \le K-1$ ensures this ratio that remains bounded as $\y(t)$ grows.
\end{itemize}

\subsection{Implicit Bias}
\label{sec: theoretical analysis}
This section analyzes the regularizer related to the \ReWA in Theorem~\ref{thm: ReWARegularizer}.
Notably, different from \ref{eqn: LpRepara}, \ReWA introduces an adaptive learning rate, which potentially alters the optimization landscape, resulting in a different regularizer compared to Lemma~\ref{lem: SparseReWAwithLP}.

\begin{theorem}[Implicit Bias]
    \label{thm: ReWARegularizer}
Define the regularizer $R(\x)$
\begin{equation}\resizebox{0.49\textwidth}{!}{$\displaystyle
R(\x) = \frac{K}{1-\MM+K} \| \x\|_{1+(1-\MM)/K}^{1+(1-\MM)/K}+ \epsilon \frac{K}{2-\MM} \|\x\|_{(2-\MM)/K}^{(2-\MM)/K}$,}
\end{equation}
% \begin{equation}
% \begin{split}
%     R(\x) =& \frac{K}{1-\MM+K} \| \x\|_{1+(1-\MM)/K}^{1+(1-\MM)/K}  \\
%     &+ \epsilon \frac{K}{2-\MM} \|\x\|_{(2-\MM)/K}^{(2-\MM)/K}.   
% \end{split}
% \end{equation}
Assume that the parameter $\x(t)$ is bounded, and the function $f(\cdot)$ is bounded and smooth. 
For the \ReWA algorithm with an even integer $\MM$, it holds that
\begin{equation*}
    \| V(\x(T)) \odot \nabla [f(\x(T)) + R(\x(T))] \|^2 \leq \frac{\log T}{\sqrt{T}},
\end{equation*}
where  $V(\x(t)) = \frac{\x^{1 + (\MM-2)/(2K)}(t)}{\sqrt{\x^{1-1/K}(t)+\epsilon \boldsymbol{1}}} $ denotes a vector satisfying that its $i$-th coordinate $V_i(\x(T)) = 0$ if and only if $\x_i(T) = 0$.
\end{theorem}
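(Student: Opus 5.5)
The plan is to view the \ReWA iteration in Equation~\ref{eqn: ReWA iteration} as a preconditioned gradient descent on the loss $F(\x) = f(\x) + R(\x)$, written in the $\x = \y^K$ coordinates, and then apply the standard nonconvex descent-lemma argument with stepsize $\eta_t \propto 1/\sqrt{t}$.

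\textbf{Step 1: rewrite the update as a gradient of $F$.} Fix a coordinate and write $y = x^{1/K}$. The update direction is
\[
d(y) = \lambda y + \frac{y^{M}\, y^{K-1}}{y^{K-1}+\epsilon}\, f'(y^K).
\]
Computing $\partial_x R$ coordinatewise gives
\[
\partial_x R = \operatorname{sign}(x)\,|x|^{(1-M)/K} + \epsilon\, \operatorname{sign}(x)\,|x|^{(2-M)/K - 1}.
\]
Since $M$ is even, $y^M = |y|^M$, and this is what makes the sign bookkeeping work. Multiplying by the preconditioner $P(y) = y^{M}/(y^{K-1}+\epsilon)$, and using the chain-rule factor $K y^{K-1}$ from $\dot x = K y^{K-1}\dot y$, I expect the identity
\[
d(y) = P(y)\bigl[\, y^{K-1} f'(x) + c\, y^{K-1}\, \partial_x R(x) \,\bigr],
\]
up to the absorption of $\lambda$ and the constants into $R$. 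In other words, the weight-decay term $\lambda y$ equals $P(y)\, y^{K-1}\, \partial_x R$ after expanding $(y^{K-1}+\epsilon)\, y^{1-M}$ into its two monomials. Those two monomials produce exactly the two $\ell_p$ terms of $R$. So, with $\x = \y^K$, the update reads
\[
\y(t+1) = \y(t) - \eta_t\, D(\y)\, \nabla F(\x), \qquad D = P\, \y^{K-1}.
\]

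\textbf{Step 2: descent inequality in $\y$-space.} Let $G(\y) = F(\y^K)$, so that $\nabla_{\y} G = K \y^{K-1} \odot \nabla F$. Smoothness of $G$ on the bounded region, which follows from the boundedness of $\x(t)$ and the smoothness of $f$, gives
\[
G(\y(t+1)) \le G(\y(t)) - \eta_t \bigl\langle \nabla G, D \nabla F \bigr\rangle + \frac{L\eta_t^2}{2}\, \| D \nabla F \|^2 .
\]
The inner product equals $K \sum_i y_i^{2K-2+M} (y_i^{K-1}+\epsilon)^{-1} (\partial_i F)^2$. Rewritten in $x$, this is
\[
K\, \bigl\| V(\x) \odot \nabla F(\x) \bigr\|^2,
\]
which matches the stated $V$ up to the precise exponents; I would adjust the normalization of $V$ here. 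The quadratic term is bounded by $C\eta_t^2$ by boundedness. Summing over $t$, using that $G$ is bounded below (since $f$ is bounded and $R \ge 0$), and choosing $\eta_t = \eta/\sqrt{t}$ yields
\[
\min_{t \le T} \| V \odot \nabla F \|^2 \le \frac{C(1+\log T)}{\sqrt{T}},
\]
which gives the claimed rate after absorbing constants. The final-iterate form of the statement is then read as a min/average bound.

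\textbf{Main obstacle.} The hardest part is the nonsmoothness of $R$ near zero, where $\partial_x R$ blows up because $(2-M)/K - 1 < 0$. The remedy is to carry out the entire argument in $\y$-space, where the weight-decay term is smooth, and to pass to $\x$ only through the vanishing weight $V$. This is also why the conclusion is stated with $V$, which vanishes exactly when $x_i = 0$. A related difficulty is verifying that the exponents match exactly, including the even-$M$ sign issue, and I expect some slack in the stated constants.
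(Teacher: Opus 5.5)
Your route is the paper's own proof. You build a potential in $\y$-space, write the update as a nonnegative diagonal preconditioner times its gradient, apply the descent lemma, and telescope with $\eta_t\propto 1/\sqrt{t}$ to bound $\min_{t\le T}$. Your bookkeeping is cleaner than the paper's in several places.
\begin{itemize}
\item The paper uses $\phi(\y)=f(\y^K)+\lambda R'(\y)$ and drops the chain-rule factor $K$ when computing $\nabla\phi$. Your $G(\y)=F(\y^K)$ with $\nabla_{\y}G=K\y^{K-1}\odot\nabla F$ is $K$ times the correctly weighted potential; note that $R(\y^K)=K R'(\y)$ for even $M$.
\item Your identity $P(y)\,y^{K-1}\partial_x R(y^K)=y$ pins $c=\lambda$. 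So what is actually proved concerns $f+\lambda R$, not $f+R$.
\item The exponents agree with the stated $V$ exactly, because $V_i^2=y_i^{2K+M-2}/(y_i^{K-1}+\epsilon)$. No renormalization is needed.
\item Like the paper, you only get a min-over-$t$ bound, not a last-iterate one.
\end{itemize}

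The one genuine gap is your claim that $G$ is smooth on bounded sets and bounded below because $R\ge 0$. The paper makes the same unproved assertions, that ``$\phi$ is also smooth'' and that ``$\phi(\y(t))$ is bounded''. These hold only when $\epsilon=0$ or $M=0$.

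For $\epsilon>0$ and even $M\ge 2$, $\lambda R(\y^K)$ contains the term $\frac{\lambda\epsilon K}{2-M}|y_i|^{2-M}$. At $M=2$ this becomes a multiple of $\log|y_i|$, and $R$ as written is undefined. This term is singular at $y_i=0$ and tends to $-\infty$ there. So $R\ge 0$ fails, the descent lemma cannot be used near the origin, and the telescoped bound is vacuous. Working in $\y$-space does not cure this: the update is smooth in $y$, but the potential it descends is not.

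The step also cannot be patched, because the conclusion is false in this regime. Take $f\equiv 0$ and $\lambda\eta_1<1$. The iterates shrink monotonically toward $0$, so they stay bounded and nonzero, while for every $t$
\[ |V_i\,\partial_i(\lambda R)|=\lambda\,|x_i|^{(2-M)/(2K)}\,(|x_i|^{1-1/K}+\epsilon)^{1/2}\ge \lambda\sqrt{\epsilon}\,|x_i(0)|^{(2-M)/(2K)}>0. \]
The same lower bound holds with $R$ in place of $\lambda R$.

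Your argument (and the paper's) therefore establishes the theorem exactly when $\epsilon=0$, or when $\epsilon>0$ with $M=0$. For even $M$ this covers both Configurations A and B of Proposition~\ref{prop: ReWA condition}, and the statement should carry that restriction.
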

Theorem~\ref{thm: ReWARegularizer} implies that for a sufficiently large number of iterations T, each coordinate satisfies either $\x_i(T) \approx 0$ or $\nabla [f(\x(T)) + R(\x(T))]_i \approx 0$.
Therefore, $R(\x)$ can be considered as the implicit regularizer of \ReWA, which contains an $\ell_p$ term with $p = 1+(1-\MM)/K \in (0,1)$, endowing \ReWA with $\ell_p$'s sparsity-inducing property while maintaining optimization stability.
Besides, based on Theorem~\ref{thm: ReWARegularizer}, we can derive in Proposition~\ref{prop: ReWA condition} the conditions under which $R(\x)$ incorporates an $\ell_p$ regularizer with $0<p<1$.

We defer clarifications on the methodology, the even-integer restriction on $\MM$, and the role of assumptions to Remark~\ref{remark: on thm regularizer} in the appendix.

\begin{proposition}
    \label{prop: ReWA condition}
To achieve $\ell_p~(0<p<1)$ regularization in Theorem~\ref{thm: ReWARegularizer}, it suffices to satisfy one of the following two configurations:
\begin{itemize}[nosep]
\item {Configuration A}: $\epsilon = 0$ and $\MM > 1$.
\item {Configuration B}: $\epsilon > 0$ and $\MM < 2$.
\end{itemize}
\end{proposition}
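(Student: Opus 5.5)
We argue directly from the explicit form of $R(\x)$ in Theorem~\ref{thm: ReWARegularizer}. Its two terms are power penalties with exponents
\[
p_1 = 1+\frac{1-\MM}{K}, \qquad p_2 = \frac{2-\MM}{K},
\]
with nonnegative coefficients $\frac{K}{1-\MM+K}$ and $\epsilon\frac{K}{2-\MM}$ whenever the exponents are positive. Throughout we use the standing constraints $K>0$ odd and $0\le \MM< K-1$ (equivalently $\MM\le K-2$ for integer $\MM$). "$R$ incorporates an $\ell_p$ regularizer with $0<p<1$" means that at least one term with a positive coefficient has exponent in $(0,1)$. The plan is to check, for each configuration, which term is active and whether its exponent lies in $(0,1)$.

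\emph{Configuration A} ($\epsilon=0$). The second term vanishes, so only $p_1$ matters.
\begin{itemize}
\item Upper bound: $p_1<1$ is equivalent to $(1-\MM)/K<0$, i.e. $\MM>1$, which is exactly the hypothesis.
\item Lower bound: $p_1>0$ is equivalent to $\MM<K+1$, which follows from $\MM<K-1$.
\item Coefficient: $K/(1-\MM+K)>0$ for the same reason, so the term is a genuine positively weighted $\ell_{p_1}$ penalty with $p_1\in(0,1)$.
\end{itemize}

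\emph{Configuration B} ($\epsilon>0$). The second term is now active, and we use $p_2$.
\begin{itemize}
\item Lower bound: $\MM<2$ gives $2-\MM>0$, so $p_2>0$ and the coefficient $\epsilon K/(2-\MM)$ is positive.
\item Upper bound: $p_2<1$ is equivalent to $\MM>2-K$. Since $\MM\ge 0$, it suffices that $K>2$, i.e. $K\ge3$ for odd $K$. This is forced by $0\le \MM<K-1$, which gives $K>1$.
\end{itemize}
The first term is also present, but it is a nonnegative penalty and does not remove the $\ell_{p_2}$ component.

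The main obstacle is not algebraic but definitional. We must make sure the standing assumptions ($\MM\ge0$, $\MM<K-1$, $K$ odd) are invoked so that the boundary cases cannot occur: $p_2$ could reach $1$ only when $K\le 2$, and $p_1$ could become nonpositive only when $\MM\ge K+1$, and both are excluded. We should also remark that Theorem~\ref{thm: ReWARegularizer} requires $\MM$ even. In Configuration B combined with $\MM<2$, this forces $\MM=0$, giving $p_2=2/K$, which matches the \ref{eqn: LpRepara} exponent. In Configuration A it forces $\MM\ge2$. Neither restriction affects sufficiency, so the proof reduces to the two short inequality checks above.
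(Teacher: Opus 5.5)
Your proposal is correct and matches the paper's implicit argument: the paper gives no separate proof and treats the proposition as immediate from the exponents $1+(1-M)/K$ and $(2-M)/K$ of $R(\x)$ in Theorem~\ref{thm: ReWARegularizer}, which is exactly what you check. Your explicit use of the standing constraint $0\le M<K-1$ adds the boundary checks the paper leaves unstated: it gives $p_1>0$ with a positive coefficient in Configuration A, and it forces $K\ge 3$, hence $p_2<1$, in Configuration B.
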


Intuitively, Configuration A makes the optimization challenging for large $\MM$, while Configuration B introduces an additional $\ell_q~(q>1)$ regularization.
Therefore, we recommend using Configuration A for simple datasets and models (\eg, synthetic data with linear models), and Configuration B for complex problems (\eg, ImageNet with neural networks).

\begin{remark}[The role of $\MM$]
    It is worthwhile to compare two configurations, both of which induce the same implicit bias but use different $\MM$. 
\begin{itemize}[nosep]
    \item Configuration C:  Hyperparameter $K$ and $\MM$ (Ours with adaptive learning rate), $\epsilon=0$;
    \item Configuration D: Hyperparameter $K^\prime = 2/(1 + (1-\MM)/K)$ and $\MM^\prime = K^\prime - 1$ (Without adaptive learning rate), $\epsilon=0$.
\end{itemize}
Both configurations lead to the same $\ell_p$ regularizer with $p = 1+(1-\MM)/K$.
The key difference lies in the coefficient of $\nabla f(\x)$ in \ReWA.
For Configuration C, the coefficient is $\x^{\MM/K}(t)$, and for Configuration D, it is $\x^{(\MM + K - 1)/(2K)}(t)$.
Therefore, Configuration C has a larger $K$ given $\MM < K-1$, which amplifies the update ratio $(1-\eta c)^K$ when crossing zero, suggesting \ReWA can more easily escape from stationary points near zero in a single step.
This provides a key insight into the benefits of our adaptive learning rate approach by introducing a separate $\MM$.
% Experimental results in Figure~\ref{} accord with the insights above. 
\end{remark}

\subsection{Empirical Tips}

\label{Sec: Empirical Tips}

Based on the discussions above, we provide the following tips for deploying \ReWA.

\textbf{Learning rate.}
Beyond the coordinate-wise adaptive learning rates previously discussed, various learning rate schedules can be applied to $\eta_t$, including constant and cosine decay.
Furthermore, while  Algorithm~\ref{alg: optimization sparsity} utilizes SGD as the base optimizer, it is not the only option. A common alternative is AdamW~\citep{DBLP:conf/iclr/LoshchilovH19} (See Algorithm~\ref{alg: optimization sparsity-adam}).

\textbf{Choice of $K$.}
For simplicity, we set $K$ odd. However, the reparameterization remains valid for different values of $K$.
For example, for $K = 2$ (even)~\citep{DBLP:conf/colt/WoodworthGLMSGS20}, one can parameterize $\mathbf{x}$ as $\mathbf{y}_1 \odot \mathbf{y}_1 - \mathbf{y}_2 \odot \mathbf{y}_2$.General $K$.
For any arbitrary $K$, the transformation can also be expressed as $\mathbf{x} = \text{sign}(\mathbf{y}) \cdot |\mathbf{y}|^K$.

\textbf{Adaptive learning rate.}
Certain optimizers, such as AdamW, inherently incorporate adaptive learning rates. These methods implicitly perform coordinate-wise adjustments similar to the mechanism in \ReWA (where $\mathbf{M} = 0, \epsilon \neq 0$). In these cases, explicitly adding a separate adaptive learning rate layer may be redundant.
While AdamW adapts to gradient magnitude history and implicitly induces sparsity, \ReWA explicitly controls sparsity via $M$ and $K$ with a provable $\ell_p$ connection, and can be applied \emph{on top of} a base optimizer such as AdamW.

\subsection{Additional Theoretical Evidence}
\label{sec: additional evidence}

This section provides additional theoretical evidence for each component of \ReWA.
Section~\ref{sec: Reparameterization} validates the role of the reparameterization by proving that directly deploying $\ell_p$ regularization leads to optimization instability even with valid gradient clipping.
Section~\ref{sec: weight decay} validates the role of the weight decay by showing that one cannot achieve sparse solutions without weight decay under some regimes. 
Section~\ref{sec: Adaptive Learning Rate} validates the role of the adaptive learning rate by finding that the solution hardly escapes from trivial saddle points without the adaptive learning rate. 

\subsubsection{Reparameterization}
\label{sec: Reparameterization}
This section provides theoretical evidence for the critical role of reparameterization in sparse training. 
Specifically, we demonstrate its superiority over a seemingly intuitive alternative: direct $\ell_p$ regularization with gradient clipping.
While gradient clipping is a common strategy to mitigate the optimization instability often associated with $\ell_p$ regularizer, our analysis reveals a fundamental and unavoidable trade-off: it either results in a poor approximation of the desired regularization or suffers from excessively large gradient norms.
Our analysis begins by formally defining the two prevalent clipping techniques in Definition~\ref{def: clipping}: constant clipping and $\ell_1$ clipping.

\begin{definition}[Clipping]
\label{def: clipping}
    For the $\ell_p$ regularizer $\psi_p(\x) = \| \x\|_p^p= \sum_{j \in [d]}|\x[j]|^p$, its constant clipping at threshold $u \in \bR^+$ is formulated as $\sum_{j \in [d]} \max\{ |\x[j]|^p, u^p  \}$, and its $\ell_1$ clipping at threshold $u \in \bR^+$ is formulated as $\sum_{j \in [d]} \max\{|\x[j]|^p,  p u^{p-1} |\x[j]| + (1-p) u^p \}$. Figure~\ref{fig: clipping} provides a visualization of these two clipping methods.
\end{definition}

We formalize the trade-off in Theorem~\ref{prop: reparamaterization}, which measures the effectiveness of a clipped regularizer by two key metrics:
\begin{itemize}[nosep]
\item Gradient Bound $\cE_1 = \max_{\x} \|\nabla_\x \tilde{\psi}_p(\x)\|$: the maximum norm of the clipped function's gradient\footnote{When $\tilde{\psi}_p(\x)$ is non-differentiable, the gradient also means subgradient. };
\item Approximation Error $\cE_2 = \max_{\x} |\psi_p(\x) - \tilde{\psi}_p(\x)| $: the maximum difference between the original and clipped functions.
\end{itemize}
A low value indicates optimization stability, and a low value indicates fidelity to the original regularizer.
We next prove in Theorem~\ref{prop: reparamaterization} the tradeoff between the gradient bound and the approximation error for both constant clipping and $\ell_1$ clipping.

\begin{theorem}
    \label{prop: reparamaterization}
For the function $\psi_p(\x) = \| \x\|_p^p $ with its clipped version $\tilde{\psi}_p(\x)$, 
Then,
\begin{itemize}[itemsep=2pt,topsep=0pt,parsep=0pt]
\item For constant clipping, the events $\cE_1 \leq \sqrt{d}$ and $\cE_2 \leq  d/(2e)$ cannot hold simultaneously.
\item For $\ell_1$ clipping, if $p < 1-\frac{1}{cd}$ where $c\geq 1$ denotes a constant, the events $\cE_1 \leq \sqrt{d}$ and $\cE_2 \leq \frac{1}{c e}$ cannot hold simultaneously.
\end{itemize}
\end{theorem}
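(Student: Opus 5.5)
The plan is to compute both metrics explicitly in terms of the threshold $u$. The bound $\cE_1 \le \sqrt d$ then forces a lower bound on $u$, and plugging that lower bound into the approximation error evaluated at $\x = 0$ gives a lower bound on $\cE_2$ that contradicts the stated upper bound. Both cases reduce to one elementary inequality: for $p \in (0,1)$,
\[
p^{p/(1-p)} \ge e^{-1}.
\]
To prove it, take logarithms: the claim is $\frac{p\log p}{1-p} \ge -1$, i.e.\ $p \log p \ge p - 1$. This follows from $\log p \ge 1 - 1/p$, which is the standard bound $\log z \le z-1$ applied at $z = 1/p$.

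\textbf{Constant clipping.} The clipped function is separable, so I work coordinatewise. On $|\x[j]| < u$ the summand is the constant $u^p$, with zero gradient. On $|\x[j]| > u$ the derivative has magnitude $p|\x[j]|^{p-1}$, which is decreasing in $|\x[j]|$, so its supremum is $p u^{p-1}$, attained as $|\x[j]| \downarrow u$. At the kink $|\x[j]| = u$ the subgradient magnitudes lie in $[0, p u^{p-1}]$. Taking every coordinate at (or just above) magnitude $u$ gives
\[
\cE_1 = \sqrt d \, p u^{p-1}.
\]
Hence $\cE_1 \le \sqrt d$ forces $u^{1-p} \ge p$, i.e.\ $u \ge p^{1/(1-p)}$. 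The approximation error is largest at $\x = 0$, where it equals $d u^p$; in fact $\cE_2 = d u^p$ exactly, since $0 \le \max\{a, u^p\} - a \le u^p$. Therefore
\[
\cE_2 = d u^p \ge d\, p^{p/(1-p)} \ge d/e > d/(2e),
\]
so $\cE_1 \le \sqrt d$ and $\cE_2 \le d/(2e)$ cannot both hold.

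\textbf{$\ell_1$ clipping.} By concavity of $t \mapsto t^p$ on $t>0$, the line $p u^{p-1} t + (1-p) u^p$ is its tangent at $t = u$ and lies above $t^p$ everywhere. A literal reading of the $\max$ would therefore give the affine function for all $\x$, which makes $\cE_2 = \infty$ and the claim trivial. I will instead use the intended reading shown in Figure~\ref{fig: clipping}: the summand is the tangent line on $|\x[j]| \le u$ and $|\x[j]|^p$ beyond. The argument works under either reading.

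On the inner region the slope has magnitude $p u^{p-1}$. On the outer region it is $p|\x[j]|^{p-1} \le p u^{p-1}$. So $\cE_1 = \sqrt d\, p u^{p-1}$, and again $\cE_1 \le \sqrt d$ gives $u \ge p^{1/(1-p)}$. Evaluating at $\x = 0$ gives
\[
\cE_2 \ge d(1-p) u^p \ge d(1-p)\, p^{p/(1-p)} \ge d(1-p)/e.
\]
The hypothesis $p < 1 - \frac{1}{cd}$ means $1-p > \frac{1}{cd}$, so $\cE_2 > \frac{1}{ce}$, which contradicts $\cE_2 \le \frac{1}{ce}$.

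The main obstacle is being precise about $\cE_1$ at non-differentiable points. I need the maximum over all $\x$, including subgradients, to be exactly $\sqrt d\, p u^{p-1}$; only the lower bound $\cE_1 \ge \sqrt d\, p u^{p-1}$ is actually needed, and it comes from choosing every coordinate at magnitude $u$. The other delicate point is the interpretation of $\ell_1$ clipping discussed above. Beyond these, the proof is the elementary inequality $p^{p/(1-p)} \ge 1/e$.
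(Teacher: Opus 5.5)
Your proposal is correct and follows the paper's route: both compute $\cE_1 = \sqrt{d}\,pu^{p-1}$, deduce $u \ge p^{1/(1-p)}$, and lower-bound $\cE_2$ by the error at $\x = \boldsymbol{0}$ (namely $du^p$, resp.\ $d(1-p)u^p$). The only difference is minor. Where the paper shows that $d\,p^{p/(1-p)}$ and $d(1-p)p^{p/(1-p)}$ are decreasing in $p$ and evaluates them at $p \to 1$ and $p = 1 - \frac{1}{cd}$, you instead use the pointwise bound $p^{p/(1-p)} \ge e^{-1}$ together with $d(1-p) > 1/c$. Your remark that the literal $\max$ in the $\ell_1$-clipping definition reduces to the tangent line everywhere is accurate, so the paper is implicitly using the piecewise reading of Figure~\ref{fig: clipping}; since you only use the error at the origin, this does not affect your argument.
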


Theorem~\ref{prop: reparamaterization} demonstrates the shortcomings associated with directly clipping $\ell_p$ regularization, that is, a small gradient bound and a small approximation error are mutually exclusive. 
The core insights stem from the nature of $\psi_p(\x) = \| \x\|_p^p$, where the gradient exhibits exponential growth around zero. 
Consequently, clipping methods are forced into a difficult choice: either they allow for a substantial gradient norm to preserve the original function's shape, or they heavily truncate the function to bound the gradient, resulting in significant approximation errors.
This theoretical limitation provides a clear motivation for why reparameterization is a more principled approach. 
% We further validate these findings with experiments, as shown in Figure \ref{}, where we observe that direct $\ell_p$ regularization with clipping fails to reliably achieve sparse training, either due to optimization instability or poor sparse performances. 
Full proofs and additional details are provided in Appendix \ref{appendix: R proof}.

\subsubsection{Weight Decay}
\label{sec: weight decay}
This section provides a theoretical justification for using explicit weight decay with reparameterization, contrasting it with approaches like PowerPropagation~\citep{DBLP:conf/nips/SchwarzJPLT21} that rely solely on the implicit bias of the optimization algorithm (without explicit weight decay).

A line of theoretical research has proven that for certain problems, gradient-based optimization on reparameterized models has an implicit bias towards sparse solutions, even without explicit regularization~\citep{DBLP:conf/nips/GunasekarLSS18,DBLP:conf/nips/AroraCHL19}. However, these analyses are often restricted to specific settings, such as matrix factorization or linear models, and crucially depend on a small initialization. Subsequent work by \citet{DBLP:conf/colt/WoodworthGLMSGS20} revealed that this sparsity-inducing bias vanishes in the large initialization regime; instead, the dynamics are governed by an implicit $\ell_2$ regularization, which does not promote sparsity.

Our analysis addresses this gap, demonstrating that explicit weight decay is essential for achieving sparsity in the general and practical setting of arbitrary initializations. We begin with a simple yet insightful example in Example~\ref{example: weight decay} with a quadratic objective, a setting closely related to overparameterized linear regression.

\begin{example}
\label{example: weight decay}
Consider the objective function $f(\x) = \x^\top \Lambda \x$, where $\x \in \bR^d$ and $\Lambda \in \bR^{d \times d}$ denotes a positive semi-definite diagonal matrix with rank $R < d$. 
The unique sparsest global minimizer is the origin $\x = \boldsymbol{0}$.
We reparameterize the input as $\x = \odot_{k \in [K]} \y_k$ and compare two loss functions with/without weight decay:
\begin{itemize}[nosep]
\item With: $L(\y) = f( \odot_{k \in [K]} \y_k ) + \lambda \sum_{k=1}^K \| \y_k \|_2^2$;
\item Without: $\tilde{L}(\y) = f( \odot_{k \in [K]} \y_k )$.
\end{itemize}
Assume gradient descent with a sufficiently small learning rate converges to a stationary point. 
We consider a non-zero initialization $\alpha\boldsymbol{1} = (\alpha, \cdots, \alpha)$ where $\alpha = \Theta(1)$.
Let  $\y_k(T), \tilde{\y}_k(T)$ denote the parameter after $T$ epochs, and $\y_k(\infty)$, $\tilde{\y}_k(\infty)$ denote the parameter at convergence, respectively. Then:
\begin{itemize}[nosep, leftmargin=*]
\item With weight decay, for a sufficiently small but non-zero penalty $\lambda$, it holds that $\|\lim_{ \lambda \to 0} \odot_{k \in [K]} \y_k(\infty) \|_0 = 0$.
\item Without weight decay, for any number $T > 0$, it holds that: $\|\odot_{k \in [K]} \tilde{\y}_k(T) \|_0 \geq d - R$. Furthermore, these non-sparse coordinates remain non-zero with value $\Theta(1)$.
\end{itemize}
\end{example}

Example~\ref{example: weight decay} highlights that weight decay is crucial for inducing sparsity. 
The intuition is straightforward: the rank-deficient matrix $\Lambda$ creates a degenerated optimization landscape, leading to zero gradient along $d-R$ dimensions (null space of $\lambda$).
Consequently, gradient descent on the objective $\tilde{L}(\y)$ cannot update the parameters along these untrainable directions, leaving them at their large initial values. 
Weight decay, by contrast, applies a uniform penalty that pulls all parameters toward the origin, effectively pruning the untrainable dimensions and ensuring a sparse solution.
We next generalize this insight beyond the quadratic case. The following Theorem~\ref{thm: weightdecay} abstracts the core conditions under which weight decay is necessary for sparsity.

\begin{theorem}
    \label{thm: weightdecay}
Consider an objective function $f: \bR^d \to \bR$ with a reparameterized input $\x = \odot_{k \in [K]} \y_k$ where each $\y_k \in \bR^d$. 
Let  $\y_k(T), \tilde{\y}_k(T)$ denote the parameter after $T$ epochs, and $\y_k(\infty)$, $\tilde{\y}_k(\infty)$ denote the parameter at convergence, respectively. 
Assume the objective and the optimization process satisfy the following conditions:
\begin{itemize}[nosep]
\item Sparse solution: The global minimizer of $f$ is at the origin\footnote{We assume the zero global minimizer for simplified discussions; otherwise, there is a tradeoff between sparsity and model performance.} $\x = \boldsymbol{0}$. 
\item Low-Dimensional update subspace: For any iteration $t$, the paramter update $\mathbf{y}_k(t+1) - \mathbf{y}_k(t)$, is confined to a fixed subspace of $\bR^d$ with dimension $R < d$.
\item Large initialization: The optimization starts with a large, non-zero initialization for all coordinates $\mathbf{y}_k(0) = \Theta(1)$.
\end{itemize}
Then, for gradient descent with a sufficiently small learning rate that converges to a stationary point, it holds that:
\begin{itemize}[nosep]
    \item With weight decay: $f(\odot \y_k) + \lambda \sum \|\y_k\|_2^2$, it holds that: $\|\lim_{ \lambda \to 0}\odot_{k \in [K]} \y_k(\infty) \|_0 =0$ under the condition that $\lambda \geq 1/(2K) (- \sigma_{\text{min}}\nabla^2 f(\x))$ where $\sigma_{\text{min}}$ denotes the minimal eigenvalue.
    \item Without weight decay: $f(\odot \y_k)$, for any $T>0$, it holds that: $\|\odot_{k \in [K]} \tilde{\y}_k(T) \|_0 \geq d - R$.
\end{itemize}
\end{theorem}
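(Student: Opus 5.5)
The two claims of Theorem~\ref{thm: weightdecay} need different arguments. The "without weight decay" claim is an invariance argument along the frozen directions. The "with weight decay" claim is a stationary-point analysis in the spirit of Lemma~\ref{lem: SparseReWAwithLP}, followed by the limit $\lambda \to 0$.

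\textbf{Without weight decay.} By the low-dimensional update assumption, every increment $\tilde{\y}_k(t+1)-\tilde{\y}_k(t)$ lies in a fixed subspace $S_k$ with $\dim S_k = R<d$. Summing the increments gives $\tilde{\y}_k(T) - \tilde{\y}_k(0) \in S_k$ for every $T$.

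The claim counts coordinates, so I would work with coordinates. Following the structure of Example~\ref{example: weight decay}, I read the assumption as saying that $S_k$ is spanned by coordinate directions, i.e.\ $S_k = \operatorname{span}\{e_j : j \in J\}$ with $|J| = R$. For a general subspace I would instead pick $d-R$ coordinates on which the projection of $S_k$ vanishes. This interpretation is the main point to state explicitly; it is implicit in the paper and I would flag it as part of the hypothesis.

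With this reading, each coordinate $j \notin J$ satisfies $\tilde{\y}_k(T)[j] = \tilde{\y}_k(0)[j] = \Theta(1)$ for all $k$ and all $T$. Hence $\big(\odot_k \tilde{\y}_k(T)\big)[j] = \prod_k \tilde{\y}_k(0)[j] = \Theta(1) \neq 0$. This gives $\|\odot_k \tilde{\y}_k(T)\|_0 \geq d-R$ for every $T$.

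\textbf{With weight decay.} Let $\y_k(\infty)$ be a stationary point of $L(\y)=f(\odot_k \y_k)+\lambda\sum_k\|\y_k\|_2^2$, and write $\x=\odot_k\y_k$. Stationarity reads
\[
\Big(\textstyle\prod_{l\neq k}\y_l\Big)\odot\nabla f(\x) + 2\lambda \y_k = 0 \quad \text{for all } k.
\]
Multiplying the $k$-th equation coordinatewise by $\y_k$ gives $\x\odot\nabla f(\x) = -2\lambda\,\y_k^2$ for every $k$. So all $\y_k^2$ are equal, and $\x$ is a substationary point of $f+K\lambda\|\x\|_{2/K}^{2/K}$, exactly as in Lemma~\ref{lem: SparseReWAwithLP}.

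Next I would show that for such points $\x\to 0$ as $\lambda\to0$, using the eigenvalue condition. Expand $\nabla f(\x)=\nabla f(0)+\nabla^2 f(\xi)\x$ around the minimizer $0$, where $\nabla f(0)=0$. Taking the inner product with $\x$ gives
\[
\x^\top\nabla^2 f(\xi)\,\x \;=\; -2\lambda\|\y_k\|_2^2 \;\leq\; 0 .
\]
Lower-bounding the left side by $\sigma_{\min}\|\x\|^2$, and using the assumed bound $\lambda \geq -\sigma_{\min}/(2K)$ to control the negative-curvature contribution, should force each coordinate to satisfy either $\x[j]=0$ or a relation whose only solution compatible with the global minimizer at $0$ is $\x[j]\to0$. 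The $2K$ in the condition suggests comparing the Hessian term against the curvature of the weight-decay term along the direction $\y_k[j]$ for all $k$ simultaneously.

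Finally, because gradient descent with small step decreases $L$, we have $L(\y(\infty))\le L(\y(0))=O(1)$, so the iterates stay bounded uniformly in $\lambda$. A compactness argument then gives that every limit point of $\odot_k\y_k(\infty)$ as $\lambda\to0$ is a point where the per-coordinate identity forces zero, hence $\|\lim_{\lambda\to0}\odot_k\y_k(\infty)\|_0=0$.

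\textbf{Main obstacle.} The hardest step is the sparsity conclusion in the weight-decay case. Substationarity alone allows nonzero $\x[j]$ with $\nabla f(\x)[j]\neq0$, and ruling these out as $\lambda\to0$ is where the Hessian lower bound must enter. My first attempt would be to show that any nonzero stationary configuration is a strict saddle of $L$ once $\lambda\geq -\sigma_{\min}/(2K)$. Since small-step gradient descent avoids strict saddles from generic initialization, the convergent stationary point would then be the origin. If that fails, I would fall back to the energy bound and argue by contradiction on a subsequence $\lambda_n\to0$.
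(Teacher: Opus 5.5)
Part (a) matches the paper's invariance argument. Your explicit coordinate-aligned reading, with one index set $J$ shared by all $k$, is what actually makes it valid. (With $k$-dependent sets you would only get $d-|\bigcup_k J_k|$.) Constancy of the projection onto $\mathcal{S}^\perp$, which is all the paper uses, does not by itself control how many coordinates vanish. For the same reason your fallback for general subspaces does not exist: a general $S_k$ can have a nonzero component along every coordinate. For example, $S=\operatorname{span}\{\boldsymbol{1}\}$ gives an affine set $\alpha\boldsymbol{1}+S$ that even contains $\boldsymbol{0}$.

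The genuine gap is in the weight-decay part. The decisive step is left as ``should force'', and none of your three routes closes it.

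(i) Inserting $\sigma_{\min}\ge-2K\lambda$ into your identity $\x^\top\nabla f(\x)=-2\lambda\|\y_k\|_2^2$ gives only $\|\y_k\|_2^2\le K\|\x\|_2^2=K\sum_j|\y_k[j]|^{2K}$. Every $\y_k$ whose entries have magnitude at least $K^{-1/(2K-2)}$ satisfies this, so nothing is pushed to zero.

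(ii) The strict-saddle claim is false at a fixed $\lambda$. In $d=1$ one can build $f$ with a unique global minimum at $0$, a shallow non-global local minimum at $x=1$ with large $f''$ there, and $\inf f''=-c$. Take $\lambda=c/(2K)$. Then the balanced point $\y_k\equiv y^\ast\approx1$ is stationary with Hessian $(f''\,(y^\ast)^{2K-2}-2\lambda)\boldsymbol{1}\boldsymbol{1}^\top+4\lambda I\succ0$. It is a strict local minimum, and gradient descent from a nearby $\Theta(1)$ initialization converges to it. Saddle-avoidance results would in any case need a generic random initialization. The $\Theta(1)$ assumption does not provide one; in Example~\ref{example: weight decay} it is the specific point $\alpha\boldsymbol{1}$.

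(iii) The compactness step is wrong. $L(\y(\infty))\le L(\y(0))$ only yields $\sum_k\|\y_k(\infty)\|_2^2\le\sum_k\|\y_k(0)\|_2^2+(f(\x(0))-f(\boldsymbol{0}))/\lambda$, which blows up as $\lambda\to0$. Making this uniform would need bounded sublevel sets of $f$. That is not assumed, and it fails in Example~\ref{example: weight decay}.

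The missing idea is that, for a fixed $f$, the hypothesis must hold along $\lambda\to0$. With your own global reading of $\sigma_{\min}$, having $\lambda\ge-\sigma_{\min}/(2K)$ for arbitrarily small $\lambda$ means $\nabla^2 f\succeq0$. Then $\x^\top\nabla f(\x)=\int_0^1\x^\top\nabla^2 f(s\x)\,\x\,ds\ge0$, which also makes your mean-value step rigorous. Comparing with $\x^\top\nabla f(\x)=-2\lambda\|\y_k\|_2^2$ forces $\y_k=\boldsymbol{0}$ for all $k$ at every stationary point of $L$, for every $\lambda>0$. The limit statement is then immediate, with no saddle avoidance or boundedness needed.

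The paper's own proof of this part only asserts that the condition makes ``the origin the only stable stationary point''. Example (ii) shows that is not true at a fixed $\lambda$, so following the paper would not have supplied this step either.
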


Theorem~\ref{thm: weightdecay} extends the insights in Example~\ref{example: weight decay}. 
The critical element in overparameterized models is often the existence of directions in the parameter space along which the primary objective's gradient provides no learning signal (the \emph{Low-Dimensional Update Subspace} assumption). 
Without an explicit regularizer, parameters initialized in these directions remain unchanged. 
Explicit weight decay solves this problem by ensuring that all parameters are driven toward zero unless counteracted by the objective function's gradient. 
This demonstrates its essential role in promoting sparsity in the large initialization regime. The complete proof is provided in Appendix~\ref{appendix: W} and ~\ref{appendix: W-pro}, along with a discussion of the mildness of the Low-Dimensional Update Subspace condition (Remark~\ref{rem: mild-condition}).

% We further conducted experiments comparing our method with PowerPropagation, presented in Figure~\ref{}. The results confirm that initialization scale has a significant impact on final model performance for Powerpropagation.

\subsubsection{Adaptive Learning Rate}
\label{sec: Adaptive Learning Rate}

This section illustrates the crucial role of adaptive learning rates when employing high-order reparameterizations like $\x = \y^K$. 
The chain rule for the gradient calculation, $\nabla_\y f(\y^K) = \nabla_\x f(\x) \odot K\y^{K-1}$, introduces a multiplicative term $\y^{K-1}$. 
This multiplicative term $K\y^{K-1}$ shrinks dramatically when the parameter $\y_i$ is near zero, causing the corresponding gradient $\nabla_\y f(\y^K)$ to shrink dramatically, even if the underlying gradient $\nabla_\x f(\x)$ is large. 
This effect creates a sharp and problematic saddle point at the origin, which potentially traps standard gradient-based optimizers, especially if the optimal parameter value and its initialization have opposite signs.

Adaptive methods like Adam or RMSprop mitigate this problem by normalizing the gradient update. 
They typically divide the update by a running average of the magnitude of past gradients. 
This normalization counteracts the vanishing effect of the $\y^{K-1}$ term, allowing the optimizer to maintain momentum and effectively escape the saddle point at the origin. 
We demonstrate this mechanism with a simple one-dimensional example in Example~\ref{example: adaptive lr}, omitting weight decay to isolate the effect of the learning rate.
We further formalize this insight in Theorem~\ref{thm: adaptive learning rate} where $\y(t)\y(t+1) \geq 0$ by characterizing a \emph{stagnation region}  around the origin where standard gradient updates become counterproductive.

\begin{theorem}
\label{thm: adaptive learning rate}
Consider a continuously differentiable objective function $f: \bR^d \to \bR$. 
Assume that the coordinate of the function is bounded, namely, $\nabla [f(\x)]_i \leq B$ for any coordinate $i$.
% The \emph{Stagnation Region} is defined as the region where an update fails to change the sign of the parameter, namely, $\y(t) \y(t+1) > 0$. 
Then $\y(t) \y(t+1) \geq 0$ if the following conditions hold:
\begin{itemize}[nosep]
    \item For adaptive learning rate with $M \neq 0, \epsilon = 0$, $$|\y(t)| \leq U_1 \triangleq \left[ \frac{1 - \lambda \eta_t}{\eta_t B} \right]^{1/(M-1)}.$$
    \item For adaptive learning rate with $M = 0, \epsilon \neq 0$, if $\epsilon \leq \left[ \frac{1-\eta_t \lambda}{\eta_t B} \frac{K-1}{K-2} \right]^{K-1}/(K-2)$, $$|\y(t)| \leq U_2 \triangleq F^{-1} \left( \frac{\eta_t B}{1 - \lambda \eta_t} \right),$$ where $F(U) = U + \epsilon U^{-(K-2)}.$
    \item Without adaptive learning rate, $$|\y(t)| \leq U_3 \triangleq \left[ \frac{1 - \lambda \eta_t}{\eta_t B} \right]^{1/(K-2)}.$$ 
\end{itemize}
\end{theorem}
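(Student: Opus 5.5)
We argue coordinate-wise throughout, since every update rule in Theorem~\ref{thm: adaptive learning rate} acts on each coordinate separately. Fix a coordinate $i$ and write $y=\y_i(t)$, $g=\nabla[f(\y^K(t))]_i$ with $|g|\le B$. Each scheme has the form
\begin{equation*}
y^+ = (1-\lambda\eta_t)\,y - \eta_t\, h(y)\, g ,
\end{equation*}
where $h$ is the effective multiplier of the gradient:
\begin{itemize}[nosep]
\item with $M\neq 0,\epsilon=0$, $h(y)=y^{M}$;
\item with $M=0,\epsilon\neq0$, $h(y)=y^{K-1}/(y^{K-1}+\epsilon)$;
\item without an adaptive learning rate, $h(y)=y^{K-1}$.
\end{itemize}
If $y=0$ the claim $y\,y^+\ge 0$ is trivial, so assume $y\neq0$. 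Multiplying by $y$ gives
\begin{equation*}
y\,y^+ \ge (1-\lambda\eta_t)\,y^2 - \eta_t B\,|y|\,|h(y)| .
\end{equation*}
It therefore suffices to show $\eta_t B\,|h(y)| \le (1-\lambda\eta_t)|y|$, i.e. $|h(y)|/|y| \le (1-\lambda\eta_t)/(\eta_t B)$, in each regime. We assume $\lambda\eta_t<1$ so that this right-hand side is positive.

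\textbf{Cases 1 and 3.} Here $|h(y)|/|y|=|y|^{M-1}$, respectively $|y|^{K-2}$. For $M>1$ (as in the intended regime) and $K>2$ these are increasing functions of $|y|$. The sufficient condition is then exactly $|y|\le[(1-\lambda\eta_t)/(\eta_t B)]^{1/(M-1)}=U_1$, respectively $|y|\le U_3$, which is a direct rearrangement. The case $M=1$ or $M<1$ would need a separate remark on the exponent sign, and we would restrict to $M\ge 2$ (even) as elsewhere in the paper.

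\textbf{Case 2.} This is the main obstacle. Here $|h(y)|/|y| = |y|^{K-2}/(|y|^{K-1}+\epsilon)$, which is not monotone: it increases and then decreases, with its maximum where $(K-2)(|y|^{K-1}+\epsilon)=(K-1)|y|^{K-1}$, i.e. $|y|^{K-1}=(K-2)\epsilon$. Taking reciprocals, the condition becomes
\begin{equation*}
F(|y|)=|y|+\epsilon|y|^{-(K-2)} \ge \frac{\eta_t B}{1-\lambda\eta_t}.
\end{equation*}
$F$ is decreasing on $(0,u^\ast]$ with $u^\ast=((K-2)\epsilon)^{1/(K-1)}$ and increasing afterwards. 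So on the branch $(0,u^\ast]$, the set where the condition holds is $|y|\le F^{-1}(\eta_tB/(1-\lambda\eta_t))=U_2$, with $F^{-1}$ the inverse of this decreasing branch.

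This requires the target value $\eta_tB/(1-\lambda\eta_t)$ to exceed the minimum $F(u^\ast)=u^\ast\frac{K-1}{K-2}$, so that $U_2$ is well defined and lies in $(0,u^\ast]$. The hypothesis on $\epsilon$ is meant to guarantee this. Raising the comparison to the power $K-1$ and using $(u^\ast)^{K-1}=(K-2)\epsilon$ turns it into an upper bound on $\epsilon$ of the form stated. We would verify this computation and reconcile it with the stated bound (which as written has the ratio $(1-\eta_t\lambda)/(\eta_tB)$ rather than its reciprocal), adjusting the direction if necessary.
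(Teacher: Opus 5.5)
Your proposal is correct and follows the paper's proof. Like the paper, you work coordinate-wise, bound the gradient term by $B$, reduce $y\,y^+\ge 0$ to $\eta_t B\,|h(y)|\le(1-\lambda\eta_t)|y|$, and rearrange this into the thresholds $U_1$, $U_3$ and the condition $F(|y|)\ge \eta_t B/(1-\lambda\eta_t)$. Your Case 2 is more careful than the paper's, which reads off $|y|\le U_2$ without addressing the non-monotonicity of $F$ or the existence of $U_2$, and never uses the hypothesis on $\epsilon$. Your computation resolves your open item: $U_2$ exists on the decreasing branch exactly when $\epsilon\le\bigl[\frac{\eta_t B}{1-\lambda\eta_t}\cdot\frac{K-2}{K-1}\bigr]^{K-1}/(K-2)$, so the stated bound does appear to have the ratio inverted.
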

Theorem~\ref{thm: adaptive learning rate} confirms that the reparameterization creates a region around the origin where the standard gradient update is ineffective, pulling the parameter towards the saddle point rather than pushing it across. The following Proposition~\ref{prop: adaptive lr compare} further compares the size of these regions.

\begin{proposition}
\label{prop: adaptive lr compare}
    Under the results in Theorem~\ref{thm: adaptive learning rate}, assume that , it holds that 
    \begin{enumerate}[nosep]
        \item if $\frac{1-\lambda \eta_t}{\eta_t B} \leq 1$, it holds that $U_1 \leq U_3$;
        \item if $\frac{1-\lambda \eta_t}{\eta_t B} \leq (1-\epsilon)^{(K-2)/(K-1)}$, it holds that $U_2 \leq U_3$;
        \item if $\frac{1-\lambda \eta_t}{\eta_t B} \geq 1$, it holds that $U_1, U_2, U_3 > 1$. 
    \end{enumerate}
\end{proposition}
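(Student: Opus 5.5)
This is a direct comparison of the closed-form thresholds from Theorem~\ref{thm: adaptive learning rate}. Throughout, set $a \triangleq \frac{1-\lambda\eta_t}{\eta_t B} > 0$, so that $U_1 = a^{1/(M-1)}$, $U_3 = a^{1/(K-2)}$ and $U_2 = F^{-1}(1/a)$ with $F(U) = U + \epsilon U^{-(K-2)}$. Each item reduces to monotonicity of $s \mapsto a^{s}$ in the exponent $s$. For $a \le 1$ this map is nonincreasing, and for $a \ge 1$ it is nondecreasing. I will use the standing constraint $0 \le M < K-1$ (with $M \neq 0$ in the first regime, and $M$ even as in Theorem~\ref{thm: ReWARegularizer}, so $M \ge 2$). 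Under this constraint $0 < M-1 < K-2$, hence $\frac{1}{M-1} > \frac{1}{K-2} > 0$. The case $M = 1$ should be excluded or treated separately, since then $U_1$ is degenerate.

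\textbf{Item 1.} If $a \le 1$ and $\frac{1}{M-1} \ge \frac{1}{K-2}$, then $a^{1/(M-1)} \le a^{1/(K-2)}$, i.e.\ $U_1 \le U_3$.

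\textbf{Item 3.} If $a \ge 1$, then $U_1 = a^{1/(M-1)} \ge 1$ and $U_3 = a^{1/(K-2)} \ge 1$, with strict inequality when $a > 1$. The case $a = 1$ gives equality; I would note this boundary or read the claim as $\ge 1$.

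For $U_2$, first determine on which branch $F^{-1}$ is taken. $F$ is decreasing on $(0, U^\ast)$ and increasing on $(U^\ast, \infty)$, where $U^\ast = ((K-2)\epsilon)^{1/(K-1)}$. The $\epsilon$-condition in Theorem~\ref{thm: adaptive learning rate} is designed so that $1/a$ is at least the minimum value $F(U^\ast)$, so $F^{-1}$ is well defined. The natural reading is that $U_2$ is the larger root on the increasing branch. Since $F(U) \ge U$, and $F(U_2) = 1/a$, we get $U_2 \le 1/a$ for any root. Then $U_2 > 1$ in item 3 would follow if $F(1) = 1 + \epsilon \le 1/a$, which is not implied by $a \ge 1$. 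So either I would reinterpret the quantity in Theorem~\ref{thm: adaptive learning rate} (for example as $F^{-1}(a)$ rather than $F^{-1}(1/a)$, which makes the three thresholds comparable), or restrict item 3 to $U_1, U_3$.

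\textbf{Item 2.} The task is to show $F(U_3) \ge F(U_2)$ with both points on the increasing branch. This reduces to
\[
a^{1/(K-2)} + \epsilon\, a^{-1} \;\ge\; \text{(target value)}.
\]
The hypothesis $a \le (1-\epsilon)^{(K-2)/(K-1)}$ should be exactly what makes this algebraic inequality go through, after raising both sides to the power $(K-1)/(K-2)$.

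\textbf{Main obstacle.} I expect item 2 to be the hardest step. The difficulty is pinning down which root and which argument of $F^{-1}$ is intended, and then verifying that the stated condition $a \le (1-\epsilon)^{(K-2)/(K-1)}$ implies the inequality between $F(U_3)$ and the target value. My first attempt would be to substitute $U = U_3$ into $F$, use $U_3^{-(K-2)} = 1/a$, and reduce everything to an inequality in the single variable $a$.
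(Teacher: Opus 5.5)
Your item 1 is the paper's argument exactly, and your restriction to $M>1$ is needed for it. The genuine gap is in item 2: you put $U_2$ on the wrong branch of $F$. The derivation in Theorem~\ref{thm: adaptive learning rate} gives the no-sign-flip condition $F(|\y(t)|) > 1/a$. Since $F(U) \to \infty$ as $U \to 0^+$, $F$ decreases to its minimum at $U^\ast$, and then increases, the set $\{F > 1/a\}$ is $(0,U_-)\cup(U_+,\infty)$. The stagnation region around the origin, the one comparable to $(0,U_1)$ and $(0,U_3)$, is $(0,U_-)$. So $U_2$ must be the smaller root $U_-$ on the \emph{decreasing} branch, which is what the paper uses ("the branch of $F^{-1}$ that approaches $0$ as $1/C\to\infty$").

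With your choice $U_2=U_+$ the plan fails outright. Substituting $U_3^{-(K-2)}=1/a$ gives $F(U_3)=a^{1/(K-2)}+\epsilon/a$. The inequality $F(U_3)\le 1/a$ is equivalent to $a^{(K-1)/(K-2)}\le 1-\epsilon$, i.e.\ precisely the hypothesis. So the hypothesis yields $F(U_3)\le F(U_2)$, the reverse of the inequality you set out to prove. On the increasing branch that gives $U_3\le U_+$, the opposite of the claim. The correct finish is this: under the hypothesis, $F(U_3)\le 1/a$ places $U_3$ in the sublevel set $\{F\le 1/a\}=[U_-,U_+]$, hence $U_2=U_-\le U_3$. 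This also shows the roots exist, and it covers the case $U_3>U^\ast$ that the paper's ``$F$ is decreasing in the region of interest'' glosses over. Your alternative reading $F^{-1}(a)$ is not supported by the derivation either; the threshold really is $\eta_t B/(1-\lambda\eta_t)=1/a$.

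On item 3 your doubt is justified, and you can sharpen it. Your own bound $U_2 < F(U_2)=1/a$ holds for either root when $\epsilon>0$, so $a\ge 1$ forces $U_2<1$. The $U_2$ part of item 3 is therefore false, not merely unproven; the paper's proof simply asserts it. Likewise, at $a=1$ one gets $U_1=U_3=1$, so only ``$\ge 1$'' holds there, and the paper's proof quietly switches to $C>1$.
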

Proposition~\ref{prop: adaptive lr compare} demonstrates that an adaptive learning rate effectively rescales the optimization landscape.
By narrowing the stagnation region, it ensures that the gradient signal points towards the true minimizer even when crossing the origin is required.

\subsection{The advantages of $\ell_p$ regularizer}
This section analyzes the advantages of $\ell_p$ regularizer over $\ell_1$ regularizer.
We first show a concrete example in Example~\ref{example: ell1 and ellp}, verifying that under general cases, $\ell_1$ might not promote sparsity while $\ell_p$ still works.

\begin{example}
\label{example: ell1 and ellp}
        There exists a constraints $f$, constructing like 
        $\| \theta - \theta_u \|_2^2  \leq (d-1) u^2 $, where $\theta_u = (u, \cdots, u)$ and $u > 0$,
        such that 
    \begin{enumerate}[nosep]
        \item Its $\ell_0$ optimization leads to solution with $\| \theta_0^*\|_0 = 1$. 
        \item Its $\ell_1$ optimization leads to solution with $\| \theta_1^*\|_0 = d$, where $d$ denotes the dimension of $\theta$.
        \item Its $\ell_p$ optimization leads to solution with $\| \theta_p^*\|_0 = 1$, for $p \leq 0.5$.
    \end{enumerate}
\end{example}

One could extend the insights in Example~\ref{example: ell1 and ellp} into a general theorem, showing that \ref{eqn: LpRepara}\ approximately returns sparse solutions. 
Before that, we first introduce the \emph{Expansion Property} in Assumption~\ref{assump: expansion}.
\begin{assumption}[Expansion]
\label{assump: expansion}
Given any fixed $p \in (0,1)$, assume that the objective function $f(\x)$ is $(\mu, \beta)$-expansion. 
That is to say, for any $\x$, there exists an expansion point $\hat{\x}$ and two constants $\mu, \beta$, such that $\beta \geq p$, $\hat{\x} = \argmin_\bu f(\bu)$, and $\mu \|\hat{\x} - \x\|_p^\beta \leq f(\x) - f(\hat{\x})$.
\end{assumption}
The expansion assumption guarantees a favorable landscape around the minimizer manifold.
% , as depicted in Figure~\ref{}.
Generally, it helps establish a connection between the solution of \ref{eqn: LpOpt} and the minimizer of $f(\x)$ with minimal $\ell_p$ norm ([Ap] in Equation~\ref{eqn: ell minimization}).
This assumption closely relates to the Quadratic Growth Condition~\citep{DBLP:conf/pkdd/KarimiNS16}, albeit being specifically defined within the realm of local minimizers.
Subsequently, we present our theorem in Theorem~\ref{thm: ReWA}.

\begin{theorem}
    \label{thm: ReWA}
Let $\x_0$ denote the solution of [A0] in Equation~\ref{eqn: ell minimization} with $\| \x_0 \|_\infty \leq B$.
Let $\x^*$ denote the global minimizer of \ref{eqn: LpRepara}\ with a bounded $p$-norm, given a fixed odd number $K$ and penalty $\lambda$.
For a given constant $\gamma > 0$, denote $\bar{\x}^* \triangleq \max \{ \x^*, \exp(-\gamma \sqrt{K/2}) \}$ which eliminates the small values of $\x^*$.
If Assumption~\ref{assump: expansion} holds with an expansion point having bounded $p$-norm, then if $K \geq \max\{ 2\gamma^2, B^\prime, 8(\frac{2B^\prime + \gamma^2 + 2B^\prime \gamma}{4B^\prime + \gamma})^2 \}$ where $B^\prime = \max\{0, \log B \}$, as $\lambda \to 0$, 
it holds that
\begin{equation*}
    \| \bar{\x}^*\|_0 \leq (1 + (4B^\prime + \gamma)\sqrt{\frac{2}{K}}) \| \x_0\|_0. 
\end{equation*}
\end{theorem}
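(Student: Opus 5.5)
We reduce the claim to a statement about the $\ell_p$ problem with $p = 2/K$ and then compare an $\ell_p$ quasi-norm with the $\ell_0$ count. By Lemma~\ref{lem: SparseReWAwithLP}, the global minimizer $\x^*$ of \ref{eqn: LpRepara} is a global minimizer of \ref{eqn: LpOpt} with parameter $K\lambda$. The first step is to let $\lambda \to 0$ and show that $\x^*$ converges (along a subsequence) to a solution of the $\ell_p$-minimization problem [Ap], namely $\min \|\x\|_p^p$ subject to $\x \in \argmin f$. We use Assumption~\ref{assump: expansion} for this. Comparing $L_p(\x^*)$ with $L_p(\hat{\x})$ for a minimizer $\hat\x$ of $f$ of bounded $p$-norm gives
\[
\mu\|\hat\x-\x^*\|_p^\beta \le f(\x^*)-f(\hat\x) \le K\lambda\big(\|\hat\x\|_p^p-\|\x^*\|_p^p\big) = O(\lambda).
\]
Hence $\x^*$ approaches the minimizer manifold. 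Moreover, $\|\x^*\|_p^p \le \|\x\|_p^p + o(1)$ for every $\x \in \argmin f$, and in particular for $\x = \x_0$ (which we take to be feasible, i.e.\ a minimizer of $f$ with fewest nonzeros). In the limit we obtain the key inequality $\|\x^*\|_p^p \le \|\x_0\|_p^p$.

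The second step bounds both sides with $p = 2/K$. For the upper side, $\|\x_0\|_\infty \le B$ gives $\|\x_0\|_p^p \le \|\x_0\|_0 B^{2/K} = \|\x_0\|_0 \exp(2\log B/K) \le \|\x_0\|_0\exp(2B'/K)$. For the lower side, every coordinate counted in $\|\bar\x^*\|_0$ (interpreting $\max$ as thresholding: small entries are set to $0$, the rest kept) satisfies $|x_i^*| \ge e^{-\gamma\sqrt{K/2}}$. Each such coordinate therefore contributes $|x_i^*|^{2/K} \ge \exp(-\gamma\sqrt{2/K})$. Combining,
\[
\|\bar\x^*\|_0 \le \exp\!\Big(\tfrac{2B'}{K} + \gamma\sqrt{2/K}\Big)\,\|\x_0\|_0 .
\]

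The third step is elementary. Set $s = \sqrt{2/K}$, so the exponent is $B's^2 + \gamma s$. The conditions $K \ge 2\gamma^2$ and $K \ge B'$ keep $s\gamma \le 1$ and $B's^2$ bounded. We then use $e^z \le 1 + z + z^2$ for $z \le 1$, or more carefully $e^z \le 1 + 2z$ on a suitable range. The third condition on $K$, namely $K \ge 8\big(\frac{2B'+\gamma^2+2B'\gamma}{4B'+\gamma}\big)^2$, should be exactly what is needed to absorb the quadratic remainder $(B's^2+\gamma s)^2/2 + B's^2$ into $(4B' + \gamma)s - \gamma s$. This yields $\exp(\cdot) \le 1 + (4B'+\gamma)s$, which is the claimed bound.

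We expect the main obstacle to be the first step: justifying that the limit $\lambda\to 0$ of $\x^*$ inherits $\|\x^*\|_p^p \le \|\x_0\|_p^p$, and controlling how small entries of $\x^*$ behave in the limit. The inequality must be stated for the limit point, or for $\lambda$ small with an $o(1)$ slack that is absorbed because the threshold is strictly positive. Boundedness of the $p$-norms of $\x^*$ and of the expansion point supplies the compactness needed for this.
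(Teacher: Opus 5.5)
\textbf{Your third step fails.} The inequality $\exp(B's^2+\gamma s)\le 1+(4B'+\gamma)s$ that you hope the conditions on $K$ deliver is false. If $B\le 1$, then $B'=0$ and it reads $e^{\gamma s}\le 1+\gamma s$, which fails for every $\gamma s>0$. The hypotheses allow this with $\gamma s$ as large as $1/2$, because for $B'=0$ the third condition is just $K\ge 8\gamma^2$. Your quadratic remainder contains $\gamma^2s^2/2$, and there is nothing to absorb it into when $B'$ is small.

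No sharper elementary estimate can rescue this, because the bound with coefficient $4B'+\gamma$ is itself false. Take $K=9$ and $\gamma=3/\sqrt{8}$. Then $\gamma s=1/2$, the threshold is $e^{-9/4}$, and all three conditions hold with $B'=0$. Let $\x_0$ have seven entries equal to $1$. Let $\x^\sharp$ have eleven entries equal to some $t\in(e^{-9/4},(7/11)^{9/2})$ on a disjoint support. Put $f(\x)=\min\{\|\x-\x_0\|_2^2,\|\x-\x^\sharp\|_2^2\}$, which satisfies the expansion assumption with $\beta=2$ and small $\mu$. Then $\x_0$ solves [A0] and $11t^{2/9}<7$. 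So for small $\lambda$ the global minimizer lies near $\x^\sharp$, and $\|\bar\x^*\|_0=11>10.5=(1+\gamma s)\|\x_0\|_0$.

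\textbf{The coefficient in the statement appears to be a typo for $4B'+2\gamma$.} The paper's proof of Lemma~\ref{lem: Ap to A0} ends with $(1+(4B'+2\gamma)\sqrt p)$. The third condition on $K$ is exactly the hypothesis of Fact~\ref{fact: fraction} with $a=4B'$ and $b=\gamma$, whose conclusion has coefficient $a+2b$. Your Step~2 bound reaches this corrected target:
\begin{itemize}
\item $e^{B's^2}\le 1+4B's^2$, which is valid since $B's^2=2B'/K\le 2$;
\item $e^{\gamma s}\le 1/(1-\gamma s)$, which is valid since $\gamma s<1$ under the third condition.
\end{itemize}
Together these give the paper's intermediate bound $\frac{1+4B's^2}{1-\gamma s}\|\x_0\|_0$, and the Fact finishes the argument.

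\textbf{The obstacle you anticipate in Step~1 is not real.} Since $\x_0\in\argmin f$, optimality of $\x^*$ for [Bp] gives $K\lambda\|\x^*\|_p^p\le K\lambda\|\x_0\|_p^p+f(\x_0)-f(\x^*)\le K\lambda\|\x_0\|_p^p$ for every $\lambda>0$. So you need no limit, no compactness, and no expansion assumption. The paper instead passes through Lemma~\ref{lem: Bp to Ap} to identify $\x^*$ as a solution of [Ap], but it only ever uses this consequence.
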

We next show in Corollary~\ref{coro: exact} how Theorem~\ref{thm: ReWA} leads to an exact recovery. 
\begin{corollary}[Exact recovery]
\label{coro: exact}
    Under the assumptions in Theorem~\ref{thm: ReWA}, and assume that (a) $\gamma \geq \sqrt{2/K} \log M_K$, where $M_K$ denotes the minimal absolute value of $\x^*$, and (b) $K > 2[(4B^\prime + \gamma)\| \x_0\|]^2$. Then it holds that $\bar{\x}^*_0 = \x^*$.
\end{corollary}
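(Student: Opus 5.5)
The plan is to show two things. First, under condition (a) the thresholding in $\bar{\x}^*$ removes nothing, so $\bar{\x}^*$ and $\x^*$ have the same support. Second, under condition (b) the multiplicative slack in Theorem~\ref{thm: ReWA} is strictly below one extra coordinate, which pins the support size of $\x^*$ down to at most $\|\x_0\|_0$. I read the conclusion ``$\bar{\x}^*_0 = \x^*$'' in that sense: thresholding is vacuous and the returned solution is as sparse as the $\ell_0$ solution.

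\textbf{Step 1 (thresholding is vacuous).}
Write $\tau_K \triangleq \exp(-\gamma\sqrt{K/2})$ for the threshold in the definition of $\bar{\x}^*$. Assumption (a) is meant to say exactly that $\tau_K \le M_K$. Taking logarithms, this is $-\gamma\sqrt{K/2} \le \log M_K$, equivalently $\gamma \ge -\sqrt{2/K}\,\log M_K$. I expect the statement's $\sqrt{2/K}\log M_K$ to carry an implicit sign convention, since $M_K<1$ in the relevant regime and so $\log M_K<0$. I would flag this and use the inequality $\tau_K\le M_K$ directly.

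Since $M_K$ is the minimal absolute value over the nonzero entries of $\x^*$, every such entry satisfies $|\x^*_i| \ge M_K \ge \tau_K$, so the coordinatewise max leaves it unchanged. Zero entries of $\x^*$ stay zero under the intended reading of the truncation, which discards values below $\tau_K$. Hence $\bar{\x}^* = \x^*$, and in particular $\|\bar{\x}^*\|_0 = \|\x^*\|_0$.

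\textbf{Step 2 (integer rounding of the bound).}
All assumptions of Theorem~\ref{thm: ReWA} are in force, so as $\lambda\to 0$,
\begin{equation*}
\|\x^*\|_0 = \|\bar{\x}^*\|_0 \le \|\x_0\|_0 + (4B^\prime+\gamma)\sqrt{2/K}\,\|\x_0\|_0 .
\end{equation*}
Assumption (b), $K > 2[(4B^\prime+\gamma)\|\x_0\|_0]^2$, rearranges to $(4B^\prime+\gamma)\sqrt{2/K}\,\|\x_0\|_0 < 1$. Both $\|\x^*\|_0$ and $\|\x_0\|_0$ are integers, so $\|\x^*\|_0 < \|\x_0\|_0+1$ forces $\|\x^*\|_0 \le \|\x_0\|_0$. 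Because $\x_0$ solves [A0], the minimal-$\ell_0$ problem, no admissible point is sparser. This yields exact recovery of the optimal sparsity level.

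\textbf{Main obstacle.}
The delicate point is Step 1: reconciling the sign in condition (a) and the exact meaning of the coordinatewise max, which is presumably a magnitude truncation rather than a literal max that would lift zeros to $\tau_K$. I would resolve both by stating the interpretation explicitly and checking it against how $\bar{\x}^*$ is used in the proof of Theorem~\ref{thm: ReWA}. Step 2 is purely arithmetic.
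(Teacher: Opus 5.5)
Your argument is correct and is essentially the paper's own two-line justification. Condition (a) makes the truncation vacuous, so $\bar{\x}^* = \x^*$. Condition (b) makes the slack $(4B^\prime+\gamma)\sqrt{2/K}\,\|\x_0\|_0$ strictly less than one, so $\|\bar{\x}^*\|_0 < \|\x_0\|_0 + 1$.

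Your repairs are the right ones: flipping the sign in (a) to $\gamma \ge -\sqrt{2/K}\log M_K$, reading the max as hard thresholding, and taking $M_K$ over the nonzero entries. The paper only asserts the upper bound, so your extra claim $\|\x^*\|_0 = \|\x_0\|_0$ needs one more justification: $\x^*$ must be feasible for \Lzero. This holds in the $\lambda \to 0$ limit by Lemma~\ref{lem: Bp to Ap}.
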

In Corollary~\ref{coro: exact}, condition~(a) guarantees that $\bar{\x}^* = \x^*$, and condition~(b) guarantees that $\| \bar{\x}^*\|_0 <\| \x_0\|_0 +1$. 
Notably, $M_p$ depends on both the parameter $K$ and the loss landscape, and therefore the above condition is problem-dependent, just as the R.I.P condition in compress sensing. 

Besides, even if the exact recovery does not happen, one could derive that the elimination does not change the loss much. 
We next analyze the non-exact recovery case in Corollary~\ref{coro: non-exact} where $\bar{\x}^* \neq \x^*$.
\begin{corollary}[Non-exact recovery]
\label{coro: non-exact}
By eliminating elements smaller than $\exp(-\gamma \sqrt{K/2})$, the loss function $f(\x)$ may increase while the sparsity improves. 
The Lipschitz condition helps control the loss. 
If $f$ is $L$-Lipschitz with respect to its $2$-norm, namely, $|f(\x_1) - f(\x_2)| \leq L \|\x_1 - \x_2 \|_2$, then the loss would increase at most $L\sqrt{d} \exp(-\gamma \sqrt{K/2})$ after elimination.
\end{corollary}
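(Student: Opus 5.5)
The argument is a direct perturbation bound: zeroing out the small coordinates of $\x^*$ moves the point by a controlled amount, and Lipschitz continuity of $f$ converts that displacement into a bound on the increase of $f$.

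Define the eliminated vector coordinatewise. Set $\tau \triangleq \exp(-\gamma\sqrt{K/2})$, and let $\tilde{\x}$ be obtained from $\x^*$ by setting to zero every coordinate with $|\x^*_i| < \tau$ and keeping the rest unchanged. This is the sparsified counterpart of the truncated vector $\bar{\x}^*$ in Theorem~\ref{thm: ReWA}: the coordinates $\bar{\x}^*$ keeps above the threshold are exactly the support of $\tilde{\x}$. The sparsity claim is then immediate. The support of $\tilde{\x}$ is contained in that of $\x^*$, so $\|\tilde{\x}\|_0 \le \|\x^*\|_0$, and by Theorem~\ref{thm: ReWA} the number of surviving coordinates obeys the $\ell_0$ bound stated there.

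\textbf{Step 1 (size of the perturbation).} The difference $\tilde{\x} - \x^*$ is supported on the set $S = \{i : |\x^*_i| < \tau\}$, and on $S$ it equals $-\x^*_i$. Since $|S| \le d$, this gives
\[
\|\tilde{\x} - \x^*\|_2 = \Big(\sum_{i\in S} (\x^*_i)^2\Big)^{1/2} \le \sqrt{|S|}\,\tau \le \sqrt{d}\,\exp(-\gamma\sqrt{K/2}).
\]

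\textbf{Step 2 (loss increase).} By the $L$-Lipschitz assumption,
\[
f(\tilde{\x}) - f(\x^*) \le |f(\tilde{\x}) - f(\x^*)| \le L\|\tilde{\x}-\x^*\|_2 \le L\sqrt{d}\,\exp(-\gamma\sqrt{K/2}).
\]
This is the claimed bound.

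\textbf{Main obstacle: interpretation, not estimation.} The truncated vector $\bar{\x}^*$ is written in Theorem~\ref{thm: ReWA} as a maximum, whereas the corollary speaks of eliminating the small entries. If one insists on comparing with the literal clamp $\max\{\x^*,\tau\}$, the same computation still applies: each affected coordinate moves by at most $\tau$ in absolute value, because $|\tau - \x^*_i| \le \tau$ when $0 \le \x^*_i < \tau$. For the sign-symmetric version, I would apply the magnitude threshold to $|\x^*_i|$.

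A sharper bound is available if the number of eliminated coordinates is itself controlled. Using Theorem~\ref{thm: ReWA}, one can replace $\sqrt{d}$ by $\sqrt{|S|}$, but the stated bound only needs the crude estimate $|S| \le d$.
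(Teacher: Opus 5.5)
Your proposal is correct and matches the paper's argument. The paper gives no separate proof, and the corollary's statement already carries the reasoning: zeroing the coordinates with $|\x^*_i| < \exp(-\gamma\sqrt{K/2})$ moves the point by at most $\sqrt{d}\,\exp(-\gamma\sqrt{K/2})$ in $\ell_2$, and $L$-Lipschitzness turns this into the stated loss bound. Your reading of the paper's $\max\{\x^*,\cdot\}$ notation as hard thresholding of magnitudes is also the intended one, since the proof of Lemma~\ref{lem: Ap to A0} uses the truncated vector that way.
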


\begin{remark}[Why elimination]
Elimination is necessary because small coordinates may make $p$-norm and $0$-norm perform differently when the element depends on $p$. 
For example, consider a vector $\boldsymbol{u} = (u, \cdots, u) \in \bR^d$ for a pretty small constant $u$, \eg, $u = \exp(-1/p^2)$.
Then as $p\to 0$, it holds that $\|\boldsymbol{u} \|_0 = d$ and $\|\boldsymbol{u} \|_p^p = d \exp(-1/p) \to 0$.
This phenomenon complicates the discussion, and therefore we apply elimination in Theorem~\ref{thm: ReWA}.
\end{remark}

\begin{remark}[Comparison with compressed sensing]
Although both compressed sensing and Theorem~\ref{thm: ReWA} focus on sparsity recovery tasks, they are fundamentally different. 
The compressed sensing approach encourages sparsity by leveraging the linear property and $\ell_1$ regularization.
Differently, Theorem~\ref{thm: ReWA} does not require much problem information, and only relies on the property that $p$-norm is naturally close to $0$-norm when $p$ is small. 
Unfortunately, Theorem~\ref{thm: ReWA} cannot directly recover the compressed sensing results by setting $p=1$.
% We defer the proof to Appendix~\ref{appendix: proof of thm ReWA}.

\end{remark}

% \clearpage

% We defer the detailed discussions to Appendix~\ref{appendix: compressed sensing}.

\begin{figure}[t]  
    \centering
    % 调整全局间距（避免子图挤在一起）
    \setlength{\tabcolsep}{2pt}  % 减小子图间默认间距
    \renewcommand{\arraystretch}{0}  % 消除垂直额外间距
    \captionsetup[subfigure]{font=scriptsize}

    % 第一个子图（宽度调整为0.23\textwidth，适配4列）
    \subcaptionbox{\scriptsize  Algorithms \label{fig:linearAlg}}[0.23\textwidth]{
        \includegraphics[width=\linewidth]{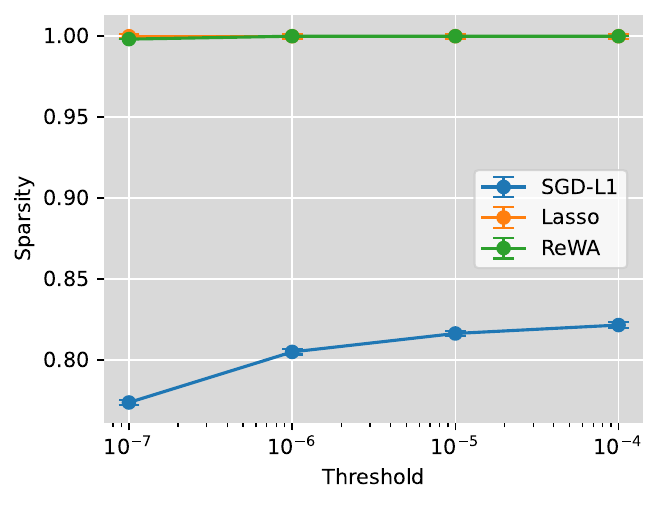}
    }
    \hfill
    % 第二个子图
    \subcaptionbox{\scriptsize Reparameterization $K$ \label{fig:linearReparameterization}}[0.23\textwidth]{
        \includegraphics[width=\linewidth]{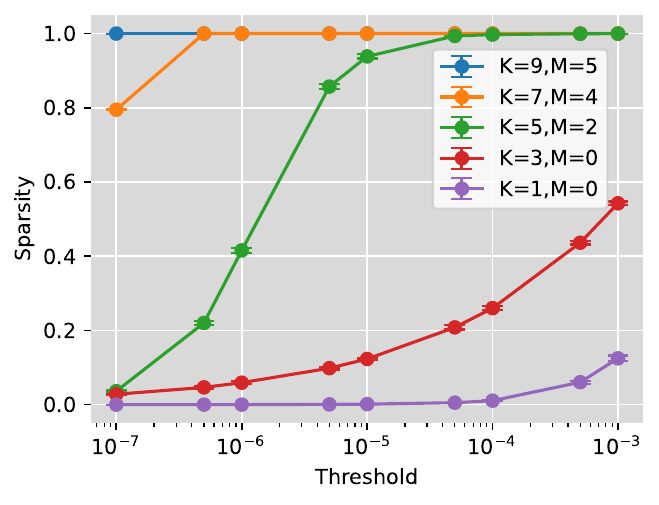}
    }
    \hfill
    % 第三个子图
    \subcaptionbox{\scriptsize Adaptive Learning Rate $M$ \label{fig:linearAda}}[0.23\textwidth]{
        \includegraphics[width=\linewidth]{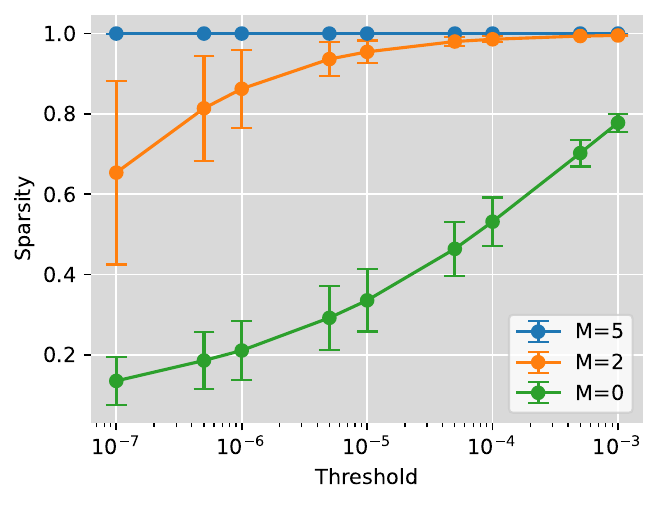}
    }
    \hfill
    % 第四个子图
    \subcaptionbox{\scriptsize Weight Decay $\lambda$ \label{fig:linearWD}}[0.23\textwidth]{
        \includegraphics[width=\linewidth]{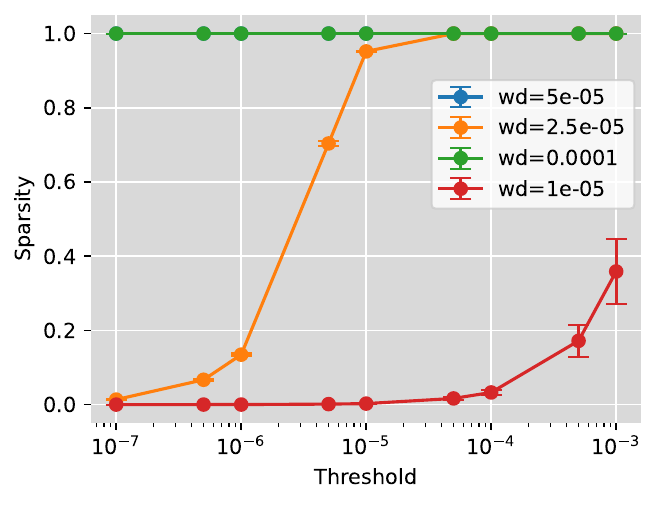}
    }

    \caption{Performance of \ReWA in linear regression regimes for different hyperparameters.}
    % \vspace{-0.8em}
    \label{fig:compare in linear}
\end{figure}

\section{Experiments}
In this section, we conduct experiments on both synthetic and real-world datasets to evaluate the effectiveness of \ReWA. 
We first show the empirical results on linear synthetic datasets in Section~\ref{sec: synthetic dataset} and CIFAR-10 / ImageNet in Section~\ref{sec: real-world dataset}, showing the effectiveness of \ReWA.
% We then show in Section~\ref{} that \ReWA can be used as a sparse optimization component and can be combined with pruning methods like RigL. 
% We finally provide additional experiments in Section~\ref{} of \ReWA including the computational cost and hyperparameter finetuning. 

% \begin{figure*}[t]  
%     \centering
%     % 第一行第一个子图
%     \subfloat[Sparsity vs. Algorithms \label{fig:linearAlg}]{
%         \includegraphics[width=.48\textwidth]{fig/new_fig/diff_alg.pdf}
%     }
%     \hfill
%     % 第一行第二个子图
%     \subfloat[Sparsity vs. Reparameterization $K$ \label{fig:linearReparameterization}]{
%         \includegraphics[width=.48\textwidth]{fig/new_fig/differentK-wd=5e-5-eps=0.pdf}
%     }

%     \vspace{1em}  % 换行并设置垂直间距

%     % 第二行第一个子图
%     \subfloat[Sparsity vs. Adaptive Learning Rate $M$ \label{fig:linearAda}]{
%         \includegraphics[width=.48\textwidth]{fig/new_fig/K=9-wd=5e-4-eps=0-differentM.pdf}
%     }
%     \hfill
%     % 第二行第二个子图
%     \subfloat[Sparsity vs. Weight Decay $\lambda$ \label{fig:linearWD}]{
%         \includegraphics[width=.48\textwidth]{fig/new_fig/K=9-M=5-eps=0-differentwd.pdf}
%     }

%     \caption{Performance of sparsity in linear regression for different hyperparameters.}
%     \label{fig:compare in linear}
% \end{figure*}

% 正文区修改后的图环境

\subsection{Experiments on Synthetic Dataset}
\label{sec: synthetic dataset}
\textbf{Datasets and Models.}
The synthetic dataset exhibits a linear regime with added noise, and is fitted with linear models. 
This dataset is used to simulate the signal recovery. 
Precisely, the data are generated based on a linear ground truth $\y = \x^\top \beta^* + \epsilon$, where $\x, \beta^* \in \bR^{10000}, \y, \epsilon \in \bR$.
The ground truth $\beta^*$ is sparse with a single nonzero entry, directly matching the exact recovery setting of Corollary~\ref{coro: exact}.
The noise $\epsilon$ follows a normal distribution $\cN(0, 1)$, resulting in a Bayesian optimal of $1$.
A linear model is applied to fit a training dataset that contains $n = 2000$ samples over $800$ epochs.
The base algorithm utilized is SGD with a batchsize $b = 25$ and a learning rate $\lambda = 2 \times 10^{-4}$.

\textbf{Evaluation Metrics.}
We consider both the sparsity metric, where the sparsity is evaluated as the number of weights below several given thresholds.
Across all experiments, the test loss consistently approaches the Bayesian optimal.

\textbf{Baseline Algorithms and Ablation Study.}
We compare our methods with the following baselines: \emph{SGD-L1} ({SGD with $\ell_1$ regularization} which is a commonly-used technique), \emph{LASSO} ({LASSO} with coordinate-wise gradient descent). 
We also consider the following ablation studies: ReWA with different Reparameterization $K$,
ReWA with different Adaptive Learning Rate $M$, and ReWA with different weight decay $\lambda$.
Notably, the comparison to PowerPropogation~\citep{DBLP:conf/nips/SchwarzJPLT21} falls in the third branch. 
    % \item \textbf{ReWA with different Adaptive Learning Rate}, namely, different parameter $M$.
    % \item \textbf{ReWA with different weight decay}, namely, different parameter $\lambda$. The comparison to PowerPropogation falls in this branch. 

% We first compare our method with the following baselines:
% \begin{itemize}[leftmargin=*, topsep=0pt, itemsep=0pt]
%     \item SGD: \textbf{SGD} which directly minimizes the loss using SGD,
%     \item L1-SGD: \textbf{SGD with $\ell_1$ regularization} which is a commonly-used technique to encourage sparsity,
%     \item LASSO: \textbf{LASSO} with coordinate-wise gradient descent
% \end{itemize}

% We also consider the following ablation studies:
% \begin{itemize}[leftmargin=*, topsep=0pt, itemsep=0pt]
%     \item \textbf{ReWA with different Reparameterization}, namely, different parameter $K$. 
%     \item \textbf{ReWA with different Adaptive Learning Rate}, namely, different parameter $M$.
%     \item \textbf{ReWA with different weight decay}, namely, different parameter $\lambda$. The comparison to PowerPropogation falls in this branch. 
% \end{itemize}

\textbf{Results and Discussions.}
The results on the sparsity with different methods are illustrated in Figure~\ref{fig:compare in linear}.
A well-performed sparse training method is expected to return solutions with sparsity given different thresholds. 
Figure~\ref{fig:compare in linear} demonstrates that ReWA works comparably with LASSO under linear regimes, both outperforming SGD with $\ell_1$ penalty. 
This demonstrates the effectiveness of ReWA even under simple linear regimes, and we remark that ReWA performs better in non-linear regimes (see Section~\ref{sec: real-world dataset}).
For the ablation studies, we find that (a) larger $K$ with corresponding larger $M$ may lead to sparser solutions; (b) larger $M$ may lead to sparser solutions given a hyperparameter $K$; and (c) larger weight decay $\lambda$ may lead to sparser solutions. 
These ablation studies demonstrate that reparameterization, weight decay, and adaptive learning rate are indeed crucial factors in ReWA.

\subsection{Experiments on Real-world Dataset}
\label{sec: real-world dataset}
\textbf{Datasets and Models.}
The datasets includes (a) CIFAR-10 with ResNet-18; and (b) ImageNet with ResNet-50.

\textbf{Evaluation Metrics.}
We consider how the model maintains accuracy given the compression ratio~\citep{DBLP:journals/corr/abs-2006-05467}, following~\citet{DBLP:conf/icml/Liu023}.
Here the compression ratio is defined as the number of parameters divided by the number of non-zero parameters after the pruning.

\textbf{Baseline Algorithms and Ablation Study.}
For CIFAR-10, the red line represents the optimal performance of \ReWA (parameterized by $M$). 
The baselines include: (a) $\ell_1$, which directly minimizes the loss with $\ell_1$ penalty using SGD; (2) Spred~\citet{DBLP:conf/icml/Liu023}, which is related to ReWA with $K=2$.
For ImageNet, the brown line represents the optimal performance returned by \ReWA (parameterized by $\epsilon$).
The baselines include: (a) $\ell_1$; (b) directly deploying SGD on the raw loss.
% We do not present results for Spred on ImageNet due to the unavailability of optimized hyperparameters for this scale.
Besides, we conduct ablation studies on (a) \ReWA without weight decay and adaptive learning rate; (b) ReWA with only adaptive learning rate $\epsilon$; (c) \ReWA with only weight decay.
We remark that $\ell_{1/2}$~\citep{xu2012l12regularization} penalties are special cases of the $\ell_p$ family covered by our framework, so we do not treat them as separate baselines.

\textbf{Results and Discussions.}
The results are illustrated in Figure~\ref{fig:rewa_experiments}.
For CIFAR-10, \ReWA maintains competitive accuracy until the compression ratio exceeds $10^2$, whereas prior methods on the same ResNet-18 architecture typically exhibit significant accuracy degradation once the compression ratio reaches $10^2$ (see Figure~6 of \citet{DBLP:journals/corr/abs-2006-05467}).
For ImageNet, \ReWA similarly maintains accuracy at relatively high compression ratios.

\begin{figure}[t]
    \centering
    % 调整全局间距（避免子图挤在一起）
    \setlength{\tabcolsep}{2pt}  % 减小子图间默认间距
    \renewcommand{\arraystretch}{0}  % 消除垂直额外间距
    \captionsetup[subfigure]{font=scriptsize}

    % 第一个子图（宽度调整为0.23\textwidth，适配4列）
    \subcaptionbox{\scriptsize  CIFAR-10 \label{fig:cifar_seed}}[0.23\textwidth]{
        \includegraphics[width=\linewidth]{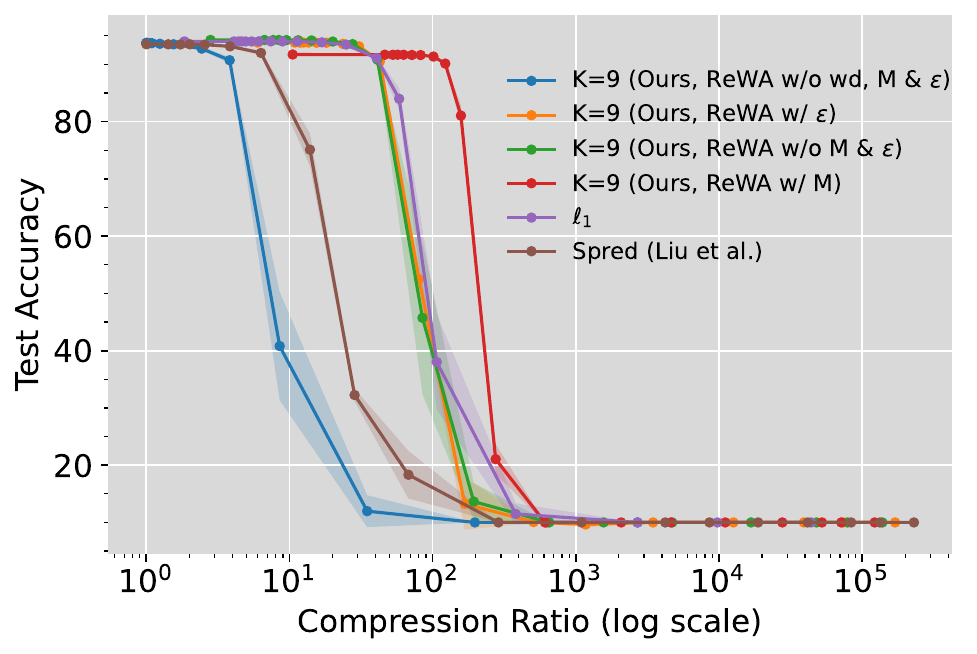}
    }
    \hfill
    % 第二个子图
    \subcaptionbox{\scriptsize ImageNet \label{fig:imagenet_seed}}[0.23\textwidth]{
        \includegraphics[width=\linewidth]{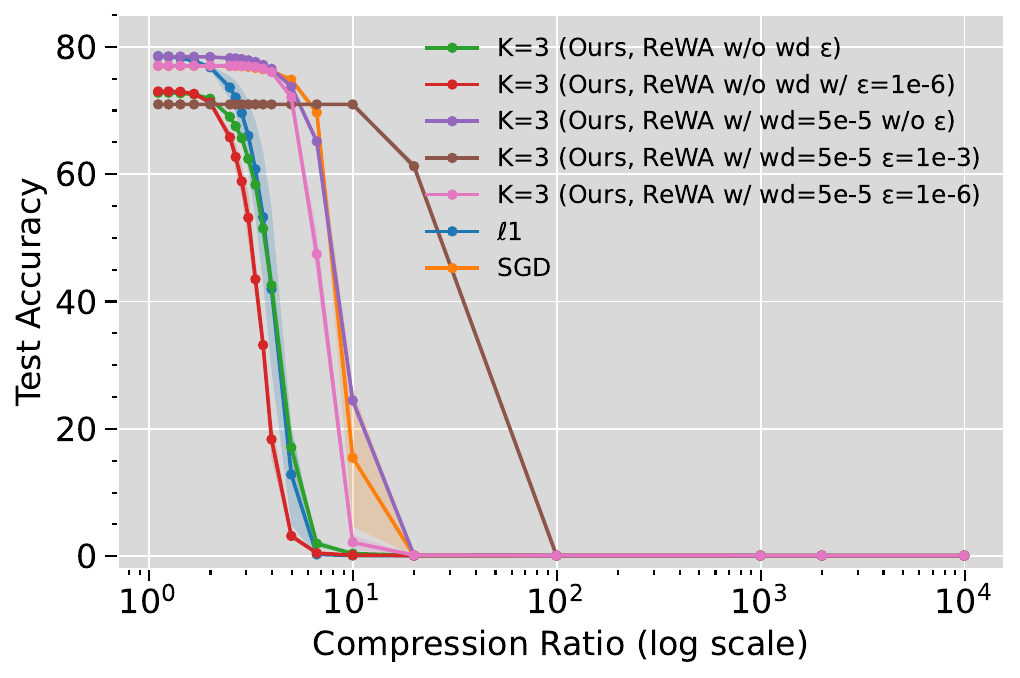}
    }

    \caption{Performance of \ReWA in real-world datasets.}
    \label{fig:rewa_experiments}
    % \vspace{-0.8em}
\end{figure}

We provide ablation studies, computational overhead, and compatibility with AdamW in Appendix~\ref{appendix: experiment}.

% \subsection{Other Discussions}

% 1. Linear experiments
% 2. CIFAR-10 and ImageNet
% 3. Combining with GMP and RigL
% 4. Computational Cost Comparison
% ====
% 5. Hyperparameter Finetuning
% 6. ReWA in the middle
\section{Conclusion}
The paper presents ReWA, a novel approach to sparse optimization that addresses the limitations of traditional $\ell_1$ and $\ell_p$ regularization methods. 
By integrating reparameterization, weight decay, and an adaptive learning rate, ReWA achieves improved sparsity without significantly sacrificing accuracy, as demonstrated through experiments on CIFAR-10 and ImageNet. 
Theoretical analysis supports the practicality and stability of ReWA, showing its close relation to $\ell_p$ regularization with $0<p<1$.

\paragraph{Limitations and future work.}
The theoretical analysis relies on standard assumptions (smoothness, subspace, and expansion conditions) that are difficult to verify directly for deep networks in practice.
The method introduces hyperparameters $K$, $M$, and $\varepsilon$; while we provide practical guidelines, automated selection remains future work.
The current experiments focus on image classification with ResNet architectures, which is the standard benchmark in the sparse optimization literature. Extending ReWA to text and language modeling tasks with Transformer-based architectures is a natural next step; Algorithm~\ref{alg: optimization sparsity-adam} provides a ReWA+AdamW variant designed for this setting. We leave a thorough empirical evaluation on NLP benchmarks to future work.

\section*{Acknowledgements}

This work is supported by Shanghai Science and Technology Development Funds 24YF2711700 and Fundamental Research Funds for the Central Universities 2024110586.
\section*{Impact Statement}

This paper presents a theoretical analysis of sparse optimization. It
involves no human subjects, data, or model release, and we are not aware
of societal consequences that require specific highlighting here.
\bibliography{example_paper}
\bibliographystyle{icml2026}

%%%%%%%%%%%%%%%%%%%%%%%%%%%%%%%%%%%%%%%%%%%%%%%%%%%%%%%%%%%%%%%%%%%%%%%%%%%%%%%
%%%%%%%%%%%%%%%%%%%%%%%%%%%%%%%%%%%%%%%%%%%%%%%%%%%%%%%%%%%%%%%%%%%%%%%%%%%%%%%
% APPENDIX
%%%%%%%%%%%%%%%%%%%%%%%%%%%%%%%%%%%%%%%%%%%%%%%%%%%%%%%%%%%%%%%%%%%%%%%%%%%%%%%
%%%%%%%%%%%%%%%%%%%%%%%%%%%%%%%%%%%%%%%%%%%%%%%%%%%%%%%%%%%%%%%%%%%%%%%%%%%%%%%
\newpage
\appendix
\onecolumn
\appendix

{\begin{center}
    \Huge{\textbf{Appendix}}
\end{center}}

We include the omitted algorithms and illustrations here.
Section~\ref{related works} provides additional related works.
Section~\ref{appendix: scad-comparison} provides a detailed comparison with SCAD, MCP, and adaptive Lasso.
Section~\ref{appendix: omitted proof} presents the omitted proofs.
Section~\ref{appendix: experiment} provides additional experimental details, including ablation studies (Section~\ref{appendix: ablation}) and computational overhead (Section~\ref{appendix: overhead}).

\section{Related Work (Extended)}
\label{related works}

\textbf{Reparameterization with weight decay leads to sparsity.}
The evidence that reparameterization with weight decay may lead to sparsity can be dated back to \citet{grandvalet1998least}, where they prove that such technique induces an $\ell_1$ regularization in linear regimes~\citep{li2021sparse,li2023tail}. 
A line of work extends from linear regimes to general function classes, \eg, \citet{DBLP:conf/nips/PoonP21,DBLP:journals/mp/PoonP23} analyzes realistic algorithms under convex, proper, semicontinuous functions, and \citet{DBLP:conf/icml/Liu023} further deploy such techniques into deep learning regimes.
However, when extending the aforementioned $\ell_1$ related reparameterization into $\ell_p$ related reparameterization, the computational instability emerges.
This branch of extension is pioneered by  \citet{DBLP:journals/csda/Hoff17}, where they derive the reparameterization for $\ell_p$ regularization under general function classes, but do not propose how it works in iterative gradient methods.
Later, \citet{DBLP:journals/corr/abs-2307-03571} generalizes the method to SGD regimes, and conducts experiments on synthetic datasets. 
However, \emph{deploying the aforementioned techniques in complex real-world datasets remains challenging, as high reparameterization levels cause computational instability.}

Another line of work tries to understand specific model structures by regarding them as a type of reparameterization. 
For example, matrix factorization tasks naturally have a reparameterization-like term~\citep{DBLP:conf/nips/SrebroRJ04,DBLP:conf/icml/RennieS05,DBLP:journals/jmlr/HastieMLZ15} and multi-layer neural networks can be regarded as a type of reparameterization~\citep{DBLP:journals/corr/NeyshaburTS14,DBLP:conf/colt/SavareseESS19,DBLP:conf/iclr/OngieWSS20,DBLP:conf/icml/PilanciE20,DBLP:journals/jmlr/ParhiN21,DBLP:conf/nips/DaiKS21,tibshirani2021equivalences,yang2022better,DBLP:conf/colt/JagadeesanRG22,DBLP:journals/corr/abs-2205-15809,DBLP:journals/spm/ParhiN23}.

\textbf{Reparameterization without weight decay also benefits sparsity.}
Reparameterization closely relates to the sparsity even without weight decay~\citep{DBLP:conf/colt/AmidW20,DBLP:conf/nips/00050LA22,DBLP:conf/icml/LiuZQY22}. 
From the theoretical perspective, existing works have shown the implicit bias of reparameterization under specific loss forms, \eg, linear regimes~\citep{DBLP:conf/nips/GunasekarLSS18,DBLP:conf/nips/VaskeviciusKR19,DBLP:conf/colt/WoodworthGLMSGS20,DBLP:conf/iclr/GissinSD20,DBLP:journals/corr/abs-2112-11027,DBLP:conf/aistats/WuR21,DBLP:conf/nips/LiNHW21,zhao2022high,DBLP:conf/colt/Pillaud-VivienR22,DBLP:journals/corr/abs-2207-07612}, matrix factorization~\citep{DBLP:conf/nips/AroraCHL19,DBLP:conf/nips/YouZQM20}, and single-index models~\citep{fan2022understanding}.

\textit{Discussion: relation to the mirror-flow line of work.}
A complementary line of work analyzes reparameterization-based sparsification via the mirror-flow framework of \citet{DBLP:conf/nips/00050LA22}, under which gradient flow on $\x = g(\boldsymbol{w})$ is equivalent to a mirror flow with a Bregman potential determined by $g$.
Building on this, \citet{jacobs2025mask} study the $\boldsymbol{m} \odot \boldsymbol{w}$ parameterization ($K=2$) with time-varying weight decay and obtain a time-dependent Bregman potential interpolating between implicit $\ell_2$ and $\ell_1$; \citet{gadhikar2026sign} use the same parameterization to enable sign flips; and \citet{jacobs2025mirror} characterize how explicit regularization reshapes the implicit bias.

This framework is tied to $K=2$: its mirror-flow equivalence requires a commuting condition that, as shown in Theorem~D.2 of \citet{jacobs2025mirror}, fails for the $K$-fold product with weight decay when $K > 2$.
In the $\x$-domain ($\epsilon = 0$), our \ReWA{} iteration reads
\begin{equation*}
\x(t+1) = \x(t) \odot \left[ (1 - \lambda \eta_t)\boldsymbol{1} - \eta_t \, \x^{(M-1)/K}(t) \odot \nabla f(\x(t)) \right]^K.
\end{equation*}
The coefficient $\x^{(M-1)/K}$ is nonzero at the origin only at $M=1$, giving $p = 1 + (1-M)/K = 1$ ($\ell_1$), which in the non-adaptive setting is the $K = M+1 = 2$ case covered by the mirror-flow framework and offers little benefit.
In the regime of interest, $p < 1$ ($M > 1$), the coefficient vanishes at the origin (a high-order saddle), where a continuous flow would stall.
The discrete dynamics can nonetheless cross zero: near the origin, a large learning rate drives the per-coordinate factor below $-1$, and raising it to the $K$-th power ($K$ large and odd) yields a large sign-flipping step across zero, accelerated by larger $K$.
Hence a large early learning rate enables exploration through zero crossings (and sign correction), while a small late learning rate keeps the factor within $(-1, 1)$ and lets the iterate settle.
This is consistent with the observation in \citet{gadhikar2026sign} that early sign flips are crucial.

In contrast to the $\boldsymbol{m} \odot \boldsymbol{w}$ line, we keep the factors tied ($\x = \y^K$), operate in discrete time, and target $\ell_p$ with $p < 1$ rather than only $\ell_1$.
Our analysis thus offers a complementary perspective that reaches the $K > 2$ regime lying outside the mirror-flow framework.

\textbf{$\ell_p$ regularization.} The reparameterized form is derived from $\ell_p$ regularization tasks~\citep{frank1993statistical,fu1998penalized}. 
Compared to $\ell_1$ regularization, $\ell_p$ regularization is more robust and stable~\citep{DBLP:journals/spl/Chartrand07,chartrand2008restricted,saab2008stable,DBLP:conf/isbi/Chartrand09,DBLP:journals/tit/ZhengMWWL17}.
As $p$ goes to zero, $\ell_p$-minimization naturally converge to $\ell_0$ optimization. 
Unfortunately, although $\ell_p$-minimization is solvable in linear regimes~\citep{DBLP:journals/tsp/MarjanovicS12,marjanovic2013exact,DBLP:journals/access/WenCLQ18}, 
optimizing it in general regimes is hard due to its unbounded gradient and non-smooth nature.
Existing approaches usually require approximation on loss~\citep{fan2001variable,hunter2005variable}, or lack theoretical backbone~\citep{DBLP:journals/spl/Chartrand07}.

\textbf{Other sparse optimization techniques.} The most popular technique is $\ell_1$ regularization~\citep{zhao2018sparse}, \eg, compressed sensing~\citep{candes2006near,candes2008restricted,DBLP:journals/corr/abs-1908-01014} which derives that $\ell_1$-approximation exactly recovers the signal under noiseless linear regimes.  
However, $\ell_1$-approximation may not be the best choice in general regimes due to its bias~\citep{zhang2008sparsity,huang2022phase}.
Besides $\ell_1$ and $\ell_p$ regularization, other techniques include (a) directly solving $\ell_0$ regularization using combinatorial tricks~\citep{portilla2007l0,feng2013complementarity}, and (b) smoothing the $\ell_0$ regularization problems~\citep{DBLP:conf/iclr/LouizosWK18}.

\textbf{Applications of sparsity.}
Sparsity is among the most fundamental factors in applications, which can be dated back to variable selection~\cite{DBLP:conf/icml/BradleyM98,george2000variable} and signal processing~\citep{lustig2007sparse,lustig2008compressed,duarte2009learning,marvasti2012unified,stankovic2019tutorial}.
In modern machine learning regimes, the most direct application is network pruning, which is an approach to reducing the computational and memory cost for neural networks~\citep{DBLP:journals/corr/HanMD15,DBLP:conf/nips/HanPTD15,DBLP:conf/iclr/0022KDSG17,DBLP:conf/iclr/LouizosWK18,DBLP:conf/nips/SavareseSM20,xia2023sheared}.
Researchers also find the utility of sparsity in continual learning~\citep{DBLP:conf/nips/SchwarzJPLT21,DBLP:conf/icml/KangMMYHHY22}, fine-tuning~\citep{DBLP:conf/icml/PanigrahiSZA23}, and out-of-distribution~\citep{DBLP:conf/icml/ZhangAX0C21,DBLP:conf/eccv/SunL22a}.

\textbf{Adaptive learning rate methods.}
Adaptive optimizers such as Adam~\citep{kingma2014adam} and AdamW~\citep{DBLP:conf/iclr/LoshchilovH19} scale each parameter's update by the square root of its accumulated squared gradients, enabling robust optimization across diverse loss landscapes~\citep{kingma2014adam}.
In the context of high-order reparameterization $\x = \y^K$, the chain rule introduces a multiplicative factor $K\y^{K-1}$, which shrinks the effective gradient dramatically when $\y$ is near zero.
Adam's per-coordinate normalization implicitly cancels this factor, providing a principled explanation for why adaptive methods are preferable over plain SGD in reparameterized sparse training.
AdamW~\citep{DBLP:conf/iclr/LoshchilovH19} further decouples weight decay from the gradient update, aligning naturally with the structure of \ReWA\ where the regularization term and the objective gradient are handled separately.
However, standard Adam corresponds only to the special case $M=0, \epsilon \neq 0$ of \ReWA; our explicit $M$ parameter provides finer control over the sparsity-inducing regularization strength, which is not available in off-the-shelf adaptive optimizers.

% \jiaye{more cites, see https://arxiv.org/pdf/2302.02596.pdf, page 5}

\section{Comparison with SCAD, MCP, and Adaptive Lasso}
\label{appendix: scad-comparison}

This section elaborates on the relationship between $\ell_p$ regularization and other nonconvex methods such as SCAD, MCP, and adaptive Lasso.
Our claim is not that $\ell_p$ dominates these methods, but that they offer \textbf{complementary strengths} and that $\ell_p$ is independently important as a research direction.
As illustrated in Figure~\ref{fig: ell1_fails_while_ellp_works}, there exist regimes where $\ell_p$ correctly identifies sparse structure where $\ell_1$ (and methods with $\ell_1$-like local behavior) may not.
We support this via three arguments below.

\paragraph{Long-standing independent interest.}
$\ell_p$ regularization ($p < 1$) has a rich and independent theoretical history:
\citet{DBLP:journals/spl/Chartrand07} proved exact sparse recovery via $\ell_p$;
\citet{chartrand2008restricted} derived RIP conditions for $\ell_p$;
\citet{xu2012l12regularization} developed thresholding theory for $\ell_{1/2}$ showing its superiority over $\ell_1$;
\citet{DBLP:journals/tit/ZhengMWWL17} established conditions where $\ell_p$ strictly outperforms $\ell_1$;
\citet{DBLP:journals/corr/abs-2307-03571} extended reparameterization-based $\ell_p$ to SGD.
This body of work shows $\ell_p$ is a distinct and well-studied research direction, not merely an alternative to SCAD/MCP.

\paragraph{Direct connection to $\ell_0$.}
As $p \to 0$, $\ell_p$ approaches $\ell_0$.
This connection provides geometric insights fundamentally different from those offered by SCAD/MCP, whose design targets unbiasedness for large coefficients rather than approximating $\ell_0$.

\paragraph{Complementary empirical and geometric value.}

\textbf{Empirical evidence.}
\citet{bui2021improving} treats $\ell_p$, $T\ell_1$, MCP, and SCAD as parallel important sparse penalties, finding complementary behavior rather than single-method dominance.
Concretely:
\begin{itemize}[nosep]
    \item \emph{$\ell_p$ yields significantly higher compression}: it generally prunes more parameters and FLOPs than $\ell_1$, MCP, and SCAD.
    In high-sparsity regimes, $\ell_p$ models remain functional where SCAD/MCP become over-pruned.
    \item \emph{SCAD/MCP yield better post-retraining accuracy} at similar compression levels to $\ell_1$.
    This reflects their oracle property \citep{fan2001variable}: eliminating shrinkage bias for large coefficients ensures unbiased estimation for surviving weights, effectively preserving model capacity.
\end{itemize}
Thus, $\ell_p$ excels at achieving high sparsity, while SCAD/MCP excel at accuracy preservation under moderate pruning.

\textbf{Connection to ReWA.}
\citet{bui2021improving} optimized $\ell_p$ via subgradient descent, which suffers from unbounded gradients near zero, causing pruning instability.
ReWA addresses this: reparameterization yields bounded gradients around zero (Section~\ref{subsec 1}), and the adaptive learning rate stabilizes training (Example~\ref{example: adaptive lr}).
This retains $\ell_p$'s compression advantage while mitigating the instability of direct subgradient optimization.
ReWA matches SGD in runtime and memory (Table~\ref{tab:overhead}).
Since prox-SCAD/MCP likewise require an extra element-wise step, they offer no complexity advantage over ReWA.
In this sense, ReWA is to $\ell_p$ what proximal operators are to SCAD/MCP, which transforming a challenging penalty into a reliable optimization method.

\textbf{Geometric analysis.}
We next provide a geometric explanation for the complementarity.
By definition, $\mathrm{SCAD}(\theta_i) \equiv \lambda |\theta_i|$ for all $|\theta_i| \leq \lambda$.
Thus, for any fixed $\lambda$, SCAD inherits the exact local shrinkage behavior of $\ell_1$ within this region.
Its oracle property is an asymptotic guarantee that activates only for large coefficients reaching the unbiased tail ($|\theta_i| > a\lambda$).
Consequently, under finite-sample, weak-signal, or heterogeneous-scale settings, SCAD behaves locally like $\ell_1$ for a substantial subset of weights.
In contrast, $\ell_p$ ($p < 1$) maintains a sharp singularity at zero at all scales, inducing stronger penalization on small coefficients.
This fundamental geometric difference explains why $\ell_p$ inherently favors higher sparsity even when SCAD's asymptotic conditions are unmet.

To illustrate, consider a two-dimensional constrained analysis with $(\theta_0 - r)^2 + (\theta_1 - r)^2 = r^2$ (a circle tangent to both axes at the origin, parametrized by signal strength $r$; see Figure~\ref{fig: ell1_fails_while_ellp_works} for an illustration of the sparse vs.\ dense solution geometry).
This is analogous to Example~\ref{example: ell1 and ellp}, which already establishes that $\ell_p$ ($p \leq 0.5$) favors sparse solutions in settings where $\ell_1$ does not.
For SCAD, the analysis extends as follows:
\begin{itemize}[nosep]
    \item $\ell_p$ consistently favors sparse solutions (for $p < 0.564$), regardless of the signal strength $r$.
    \item SCAD yields dense solutions when the signal $r$ falls below a critical threshold $r_c$.
    Crucially, $r_c \geq 2.56\lambda$ for all $a > 2$ (e.g., $r_c \approx 4.03\lambda$ at $a = 3.7$); no choice of $a$ eliminates this dense-preference regime, as coefficients remain in the $\ell_1$-like region.
\end{itemize}

\textbf{Summary.}
$\ell_p$'s scale-invariant singularity inherently enforces sparsity even for moderate or weak signals (enabling high compression).
While SCAD preserves large weights via its oracle property but relies on $\ell_1$-like shrinkage for smaller ones, $\ell_p$ ($p < 1$) induces scale-adaptive shrinkage (via $|\theta|^{p-1}$), driving stronger sparsification on small coefficients, while its bias on larger weights vanishes as $p \to 0$.
ReWA stabilizes this effect during optimization.
Both approaches offer distinct, non-redundant value, motivating the critical need for reliable optimizers such as ReWA.

\begin{algorithm*}[t]
\caption{\ReWA with AdamW
} 
\label{alg: optimization sparsity-adam} 
\begin{algorithmic}[1] 
\REQUIRE object function $f(\x)$ with gradient $\nabla f(\x)$, initialization $\x(0)$, weight decay parameter $\lambda$, learning rate scheduler $\eta_t \in \bR$, element-wise power function $(\cdot)^K$.

\STATE{Set $\y(0) = \x(0)^{1/K}$}
\STATE{Initialize $Adam$ optimizer using $y(0)$ as parameters}
\FOR{$t \in [T]$}
\STATE{Calculate $\nabla f(\y^K(t))$ }
\STATE{Replace gradient of $\y(t)$ in $Adam$ optimizer with $\eta_t \frac{\y^{\MM}(t)}{\y^{K-1}(t)+\epsilon \boldsymbol{1}} \odot \y^{K-1}(t) \odot \nabla f(\y^K(t))$}
\STATE{Optimize $\y(t)$ using $Adam$ optimizer with replaced gradient}
\STATE{\text{(Optional)} $\x(t+1) = \y^K(t+1)$}
\ENDFOR

\ENSURE $\x(T) = \y^K(T)$ .

\end{algorithmic} 
\end{algorithm*}

% \begin{figure}[t]  
%  \centering
% \subfloat{\includegraphics[width= \textwidth]{fig/sparseoptimization.pdf}}
%   \caption{$\ell_1$ regularization fails in general cases. The solution is sparse if it falls in the coordinate. 
%   Left: $\ell_1$ regularization returns sparse solutions in linear cases (compressed sensing). Middle: $\ell_1$ regularization returns non-sparse solutions in general non-linear cases. Right: $\ell_p$ regularization ($p \in (0, 1)$) may perform better and return sparse solutions in general non-linear cases. }  
%   \label{fig: ell1 fails while ellp works}  
% \end{figure}  

\begin{figure}[t]  
 \centering
% 删除subfloat，直接加载图片，避免宏包冲突
\includegraphics[width=\textwidth]{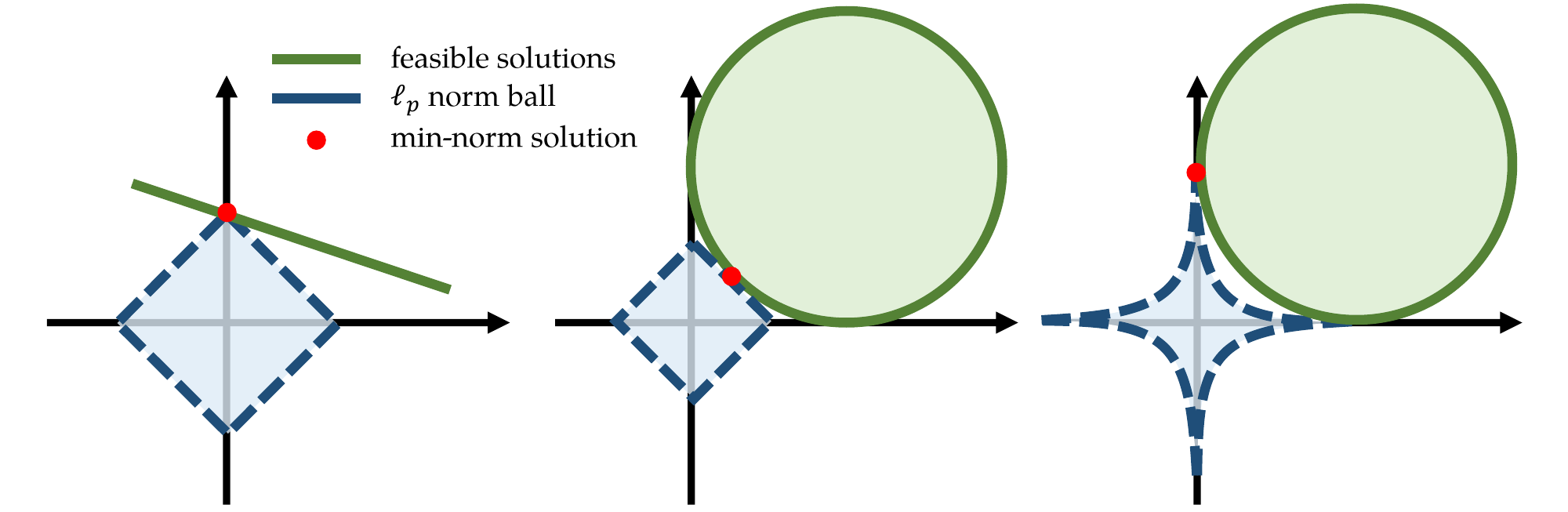}
% 优化标题排版：拆分长标题+规范格式，标签去掉空格
\caption{$\ell_1$ regularization fails in general cases. The solution is sparse if it falls in the coordinate.
\\ Left: $\ell_1$ regularization returns sparse solutions in linear cases (compressed sensing).
\\ Middle: $\ell_1$ regularization returns non-sparse solutions in general non-linear cases.
\\ Right: $\ell_p$ regularization ($p \in (0, 1)$) may perform better and return sparse solutions in general non-linear cases.}  
\label{fig: ell1_fails_while_ellp_works}  % 去掉标签中的空格，避免警告
\end{figure}  

\clearpage

\section{Omitted Proofs}
\label{appendix: omitted proof}

This section provides omitted proofs. 
Section~\ref{appendix: prelim} provides necessary preliminaries and additional notations.
Section~\ref{appendix: relationship} provides proofs of Lemma~\ref{lem: SparseReWAwithLP}.
Section~\ref{appendix: A} provides proofs of Example~\ref{example: adaptive lr}. 
Section~\ref{appendix: regularization} provides proofs of Theorem~\ref{thm: ReWARegularizer}.
Section~\ref{appendix: R} provides proofs of Theorem~\ref{prop: reparamaterization}.
Section~\ref{appendix: W} provides proofs of Example~\ref{example: weight decay}.
Section~\ref{appendix: W-pro} provides proofs of Theorem~\ref{thm: weightdecay}.
Section~\ref{appendix: A-pro} provides proofs of Theorem~\ref{thm: adaptive learning rate}.
Section~\ref{appendix: compare} provides proofs of Proposition~\ref{prop: adaptive lr compare}.
Section~\ref{appendix: l0lp} provides proofs of Example~\ref{example: ell1 and ellp}.
Section~\ref{appendix: compressed sensing} provides proofs of Theorem~\ref{thm: ReWA}.

\begin{figure*}[!htbp]
    \centering
\includegraphics[width=0.9\linewidth]{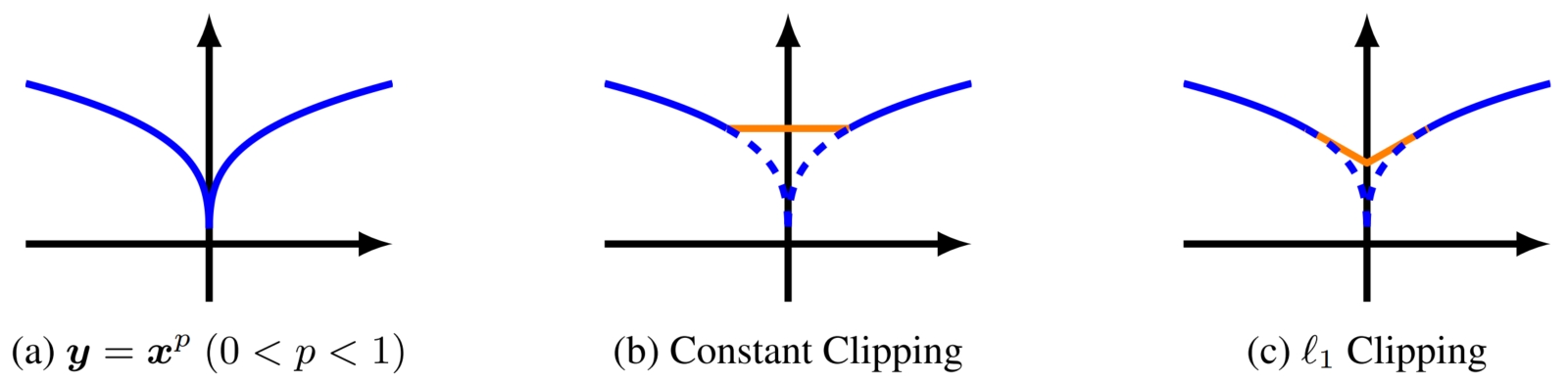}
\caption{Clipping for $\ell_p$ regularization. The blue lines (solid and dotted) represent the original $\ell_p$ regularization, and the orange lines represent the clipped version at some clipping point.}
\label{fig: clipping}
\end{figure*}

\subsection{Preliminaries}
We first present preliminaries before the proof, including additional notations and basic knowledge about compressed sensing. We would start from the notations on $\ell_0$ minimization. 

Sparse optimization problems can be formally represented through $\ell_0$ minimization (\Lzero\ in Equation~\ref{eqn: ell minimization}). 
However, directly solving $\ell_0$ minimization is known to be NP-hard~\citep{DBLP:journals/siamcomp/Natarajan95,DBLP:conf/icml/ChenGWWYY17}.
To address this challenge, prior works propose two relaxations: $\ell_1$ minimization (\Lone\ in Equation~\eqref{eqn: ell minimization}) and $\ell_p$ minimization ($p \in (0, 1)$, \Lp\ in Equation~\eqref{eqn: ell minimization}). 

\begin{equation}
\label{eqn: ell minimization}
\begin{split}
&\Lzero: \min_{\x} \| \x\|_0 \quad \text{subject to} \quad  \x \in \argmin_\bu f(\bu) , \\
&\Lone: \min_{\x} \| \x\|_1 \quad \text{subject to} \quad  \x \in \argmin_\bu f(\bu), \\
&\Lp: \min_{\x} \| \x\|_p \quad \text{subject to} \quad  \x \in \argmin_\bu f(\bu), \\    
\end{split}
\end{equation}
where $\x \in \bR^d$ denotes the signal, and $f(\x) \in \bR^m$ denotes the constraints. 
One of the most famous $\ell_1$ relaxations is \emph{compressed sensing}~\citep{candes2006near} illustrated in Example~\ref{example: compressed sensing}.

\begin{example}[Compressed Sensing]
\label{example: compressed sensing}
    Compressed sensing tasks consider the constraints $f(\x) = \| A \x - \y \|^2$, where $A \in \bR^{m \times d}$ denotes the measurement matrix and $\y$ denotes the corresponding observation. 
In compressed sensing, the \Lzero\ solution can be exactly recovered by solving \Lone\ under noiseless linear regimes.
However, \Lone\ might not always be the most suitable choice, particularly for non-linear problems.
We refer to Figure~\ref{fig: ell1_fails_while_ellp_works} for further illustration.
\end{example}

For the three problems in Equation~\ref{eqn: ell minimization}, we also focus on their corresponding optimization formulations in Equation~\ref{eqn: ell optimization}.
In general, \LoneOpt\ can be efficiently optimized while \LzeroOpt\ and \LpOpt\ cannot, since \LzeroOpt\ is non-continuous and \LpOpt\ is non-smooth with unbounded gradient around zero.
\begin{equation}
\label{eqn: ell optimization}
    \begin{split}
&\LzeroOpt: \min_{\x} L_0(\x) = f(\x) + \lambda  \| \x \|_0 , \\
&\LoneOpt: \min_{\x} L_1(\x) = f(\x) + \lambda  \| \x \|_1 , \\
&\LpOpt: \min_{\x} L_p(\x) = f(\x) + \lambda  \| \x \|_p^p . \\
    \end{split}
\end{equation}

In this paper, we focus on a different optimization form presented in Equation~\eqref{Eqn: Lp-repara}, which reparameterizes the vector $\x$ with $\y_1 \odot \y_2 \odot  \cdots \odot \y_K$ where $\odot$ denotes the element-wise product:
\begin{equation}
\label{Eqn: Lp-repara}
   \LpRepara: \min_\y L_p^\prime(\y)  = f( \y_1 \odot \y_2 \odot  \cdots \odot \y_K ) + \lambda \sum_{i \in [K]} \| \y_i \|_2^2.
\end{equation}
We set $K$ as an odd number for simplicity.
 \label{appendix: prelim}
\subsection{Proof of Lemma~\ref{lem: SparseReWAwithLP}}

We prove the three arguments separately. 
The last two arguments are inspired by \citet{DBLP:journals/corr/abs-2307-03571}.

\textbf{Stationary Points.} Let $\tilde{\y}_i, i\in [K]$ denote the stationary points of \LpRepara.
Firstly, we would like to show that $\tilde{\y}_1 \odot \tilde{\y}_1 = \tilde{\y}_2 \odot \tilde{\y}_2= \cdots = \tilde{\y}_K \odot \tilde{\y}_K$. 
Notice that for each $\tilde{\y}_i$, the gradient of \LpRepara\ satisfies that
\begin{equation}
    \odot_{j \neq i} \tilde{\y}_j \nabla f(\tilde{\y}_1 \odot \cdots \odot \tilde{\y}_K) + 2\lambda \tilde{\y}_i = {\boldsymbol{0}}.
\end{equation}
Therefore, for each $i$, it holds that 
\begin{equation*}
    \tilde{\y}_1 \odot \cdots \odot \tilde{\y}_K \nabla f(\tilde{\y}_1 \odot \cdots \odot \tilde{\y}_K) = -2\lambda \tilde{\y}_i \odot \tilde{\y}_i,
\end{equation*}
and therefore $\tilde{\y}_i \odot \tilde{\y}_i = \tilde{\y}_j \odot \tilde{\y}_j$ for any $i, j\in [K]$.

Let $\tilde{\y} \in \bR^d$ denote the vector whose absolute value is equal to that of $\tilde{\y}_i$, and sign the same as $\odot \tilde{\y}_i = \tilde{\y}_1 \odot \cdots \odot \tilde{\y}_K$.
Note that by denoting $\tilde{\x} = \odot \tilde{\y}_i$, it holds that $\tilde{\x} = \tilde{\y}^{K}$, where the power denotes the element-wise operation.
We can then rewrite the gradient of \LpRepara\ as 
\begin{equation*}
    \begin{split}
        & \odot_{j \neq i} \tilde{\y}_j \nabla f(\tilde{\y}_1 \odot \cdots \odot \tilde{\y}_K) + 2\lambda \tilde{\y}_i \\
        =& \odot_{j \neq i} \tilde{\y}_j [\nabla f(\tilde{\y}_1 \odot \cdots \odot \tilde{\y}_K) + 2\lambda \tilde{\y}^{2-K}] \\ 
        =& \odot_{j \neq i} \tilde{\y}_j [\nabla f(\tilde{\x}) + 2\lambda \tilde{\x}^{(2-K)/K}]\\
        =& {\boldsymbol{0}}.
    \end{split}
\end{equation*}
By denoting $p = 2/K$ and adjusting the sign, the above equation is equivalent to 
\begin{equation}
    \begin{split}
        \tilde{\x} [\nabla f(\tilde{\x}) + 2\lambda \tilde{\x}^{p-1}] = {\boldsymbol{0}},
    \end{split}
\end{equation}
where we use the fact that $\odot_{j \neq i} \tilde{\y}_j = {\boldsymbol{0}}$ iff $\tilde{\x} = {\boldsymbol{0}}$ for each coordinate.
% Due to the element-wise product with an odd $K$, we can just set $\tilde{\y}_i$ to be the same value and  

Note that the gradient of \LpOpt\ is 
\begin{equation}
    \nabla f(\tilde{\x}) + pK\lambda \tilde{\x}^{p-1}.
\end{equation}

Therefore, the stationary points of \LpRepara\ is the $V$-substationary points of \LpOpt, with the parameter $K\lambda = \frac{2}{p} \lambda$. 

\textbf{Local Minimizer.}
Different from the analysis in stationary points, here we do not require that the function is differentiable. 
Still, we first prove by contradiction that the local minimizer satisfies that $$| \tilde{\y}_1 | = | \tilde{\y}_2 |  = \cdots = | \tilde{\y}_K |,$$
where the absolute operation is element-wise. 
We first assume without loss of generality that there exists a local minimizer $\bar{\y}_1, \bar{\y}_2, \cdots,  \bar{\y}_K $, such that $\bar{\y}_1^{(1)}  < \frac{1}{\sqrt{1+\delta}} \bar{\y}_2^{(1)}$, where $\cdot^{(i)}$ denotes the $i$-th coordinate and $\delta$ denotes a sufficiently small constant.
Then we conduct a small perturbation $\bar{\y}^\prime$ around $\bar{\y}$, where
\begin{equation*}
    \begin{split}
        & \bar{\y}_1^{(1)\prime} = \sqrt{1+\delta^\prime} \bar{\y}_1 \\
        & \bar{\y}_2^{(1)\prime} = \frac{1}{\sqrt{1+\delta^\prime}} \bar{\y}_2,
    \end{split}
\end{equation*}
for some small enough $\delta^\prime < \delta$ and the other elements keep remained.
It is obviously that the distance of $\bar{\y}^\prime$ and $\bar{\y}$ can always be bounded for a small $\delta^\prime$.
Then it holds that
\begin{equation*}
\begin{split}
    &f( \bar{\y}_1^{\prime} \odot \bar{\y}_2^{\prime} \odot  \cdots \odot \bar{\y}_K ) = f( \bar{\y}_1 \odot \bar{\y}_2 \odot  \cdots \odot \bar{\y}_K ) , \\ 
    & {\|\bar{\y}_1^{(1)\prime }\|}^2 + {\|\bar{\y}_2^{(1)\prime }\|}^2 = (1+\delta^\prime) \|\bar{\y}_1^{(1)}\|^2 + \frac{1}{1+\delta^\prime} \|\bar{\y}_2^{(1)}\|^2 <   
    \|\bar{\y}_1^{(1)}\|^2 + \|\bar{\y}_2^{(1)}\|^2,
\end{split}
\end{equation*}
where the last equation can be verified under the condition that $\bar{\y}_1^{(1)}  < \frac{1}{\sqrt{1+\delta}} \bar{\y}_2^{(1)}$ and $\delta^\prime < \delta$.
That is to say, perturbing $\bar{\y}$ to $\bar{\y}^\prime$ would decrease the loss function, which contradicts the requirement that $\bar{\y}$ is a local minimizer. 
Therefore,
the local minimizer must satisfy that $| \tilde{\y}_1 | = | \tilde{\y}_2 |  = \cdots = | \tilde{\y}_K |.$

We next only consider the case with $| \tilde{\y}_1 | = | \tilde{\y}_2 |  = \cdots = | \tilde{\y}_K |$, and rewrite $\tilde{\x} = \odot \tilde{\y}_i$.
In such a case, $\| \tilde{\y}_i \|^2 = \| \tilde{\x} \|^{2/K} $.
Notably, one can always rewrite \LpOpt\ using the form of \LpRepara\ and restricting it under the case $| \tilde{\y}_1 | = | \tilde{\y}_2 |  = \cdots = | \tilde{\y}_K |$.
% Therefore one can write the  problem using .

In the \LpRepara\ problem, the point $\tilde{\y}$ is a local minimizer means that there exists a ball $\cB_\delta(\tilde{\y})$ with radius $\delta$ such that
\begin{equation*}
    \cL_p^\prime(\tilde{\y}) \leq \min_{\y \in \cB_\delta(\tilde{\y})} \cL_p^\prime(\y).
\end{equation*}
Note that the relationship between $\x$ and $\y$ is monotone, and therefore the small perturbation of $\y$ covers a perturbation of $\x$ (although with a different radius).
Therefore, the corresponding $\tilde{\x} = \tilde{\y}^K$ is also a local minimizer. 

\textbf{Global Minimizer.}
From previous analysis on local minimizer, it holds that $| \tilde{\y}_1 | = | \tilde{\y}_2 |  = \cdots = | \tilde{\y}_K |.$ and we can rewrite \LpRepara\ as the form in \LpOpt.
Notice that for every parameter $\x$, it can be represented as the form that satisfies $| \tilde{\y}_1 | = | \tilde{\y}_2 |  = \cdots = | \tilde{\y}_K |.$
Therefore, the global minimizer of \LpRepara\ corresponds to the global minimizer of \LpOpt.
 \label{appendix: relationship}
\subsection{Proof of Example~\ref{example: adaptive lr}}
\label{appendix: proof of A}

\textbf{Constant learning rate.}
Recall the iteration
\begin{equation}
    \y(t+1) = \y(t) - 2 \eta^\prime \y^{K-1}(t) (\y^{K}(t) - 1) .
\end{equation}
We next use deduction to prove that $\y(t) \in [-1, 0]$ for all $T$.

Firstly, when $T = 0$, $\y(t) \in [-1, 0]$ directly holds.  

We next assume that $\y(t) \in [-1, 0]$ and prove that $\y(t+1) \in [-1, 0]$.
Note that for $\y(t) \in [-1, 0]$, it holds that 
\begin{equation*}
    \y(t)^{K-2} (\y(t)^{K} - 1) \leq 2.
\end{equation*}
This is due to the fact that for function $g(x) = x^{K-2}(x^K - 1)$ where $K>1$ is odd, $g(x)$ is non-increasing in $[-1, 0]$ and therefore $g(x) \leq g(-1) = 2$. 
Due to the assumption that $\eta^\prime < 1/4$, it holds that
\begin{equation*}
    \y(t+1) = \y(t)[1 - 2\eta^\prime \y^{K-2}(t) (\y^{K}(t) - 1)] \leq 0.
\end{equation*}

Besides, since $\y^{K-1}(t) (\y^{K}(t) - 1) < 0$ always holds, it holds that $\y(t+1) \geq \y(t) \geq \y(0) = -1$.
In summary, it holds that $\y(t) \in [-1, 0]$ for all $T$, leading to the conclusion that $| \y(t) - 1 | \geq 1$.

\textbf{Adaptive learning rate.}
Recall the iteration 
\begin{equation}
    \y(t+1) = \y(t) - 2 \eta (\y^{K}(t) - 1) .
\end{equation}
This is equivalent to 
\begin{equation*}
     1 - \y(t+1)  = (1 - \y(t) ) (1 - 2 \eta \frac{\y(t)^{K} - 1}{\y(t) -1}).
\end{equation*}
Firstly, notice that $\frac{\y(t)^{K} - 1}{\y(t) -1} \leq K$ when $\y(t)\in[-1, 1]$.
Therefore, when $\eta < \frac{1}{2K}$ and $\y(0) = -1$, it holds that $\y(t) \leq 1$ for all $t$.
Furthermore, since $\y^{K}(t) - 1 < 0$ always holds, $\y(t+1) \geq \y(t) \geq \y(0) = -1$.
In summary, it holds that for all $t$, $$\y(t) \in [-1, 1].$$

Besides, due to Fact~\ref{fact: ineq}, it holds that
\begin{equation*}
    \frac{\y(t)^{K} - 1}{\y(t) -1} \geq \frac{1}{K-1}.
\end{equation*}
Therefore, 
    \begin{equation*}
        | 1 - \y(t)  | \leq  (1 - \frac{2\eta}{K-1}) | 1 - \y(t-1) |  \leq 2(1 - \frac{2\eta}{K-1})^T.
    \end{equation*}

\begin{fact}
\label{fact: ineq}
    For $x \in [-1, 1]$ and an odd $K > 1$, it holds that $\frac{x^K - 1}{x -1} \geq \frac{1}{K-1}$. 
\end{fact}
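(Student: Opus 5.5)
We need to prove Fact~\ref{fact: ineq}: for $x\in[-1,1]$ and odd $K>1$, $\frac{x^K-1}{x-1}\ge \frac1{K-1}$, where at $x=1$ the quotient is read as its limit $K$. The plan is to work with the geometric-sum form
\[
h(x) \triangleq \frac{x^K-1}{x-1} = \sum_{j=0}^{K-1} x^j ,
\]
which is a polynomial and hence continuous on $[-1,1]$, and to split $[-1,1]$ into two cases.

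\textbf{Case $x\in[0,1]$.} Every term of the sum is nonnegative and the $j=0$ term equals $1$. Hence $h(x)\ge 1\ge \frac1{K-1}$, since $K\ge 3$.

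\textbf{Case $x\in[-1,0)$.} Here we use the closed form directly. Because $K$ is odd, $x^K\in[-1,0)$, so $1-x^K\ge 1$. At the same time $1-x\in(1,2]$. Therefore
\[
h(x)=\frac{1-x^K}{1-x}\ge \frac{1}{2}.
\]
Since $K\ge 3$, we have $\frac12\ge \frac1{K-1}$, which closes this case.

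\textbf{Remaining cases and potential obstacles.} The only subtle points are bookkeeping ones:
\begin{itemize}
\item \emph{The removable singularity at $x=1$.} This is harmless once we use the polynomial form, which gives $h(1)=K$.
\item \emph{The case $K=1$.} This is excluded by $K>1$; odd $K>1$ forces $K\ge 3$. If one instead needed to allow $K=2$, the bound $\frac12$ would fail to dominate $\frac1{K-1}=1$. Parity is essential here in any event: it is what gives $1-x^K\ge1$ on the negative side.
\end{itemize}

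Nothing here is a genuine obstacle. The bound $\frac1{K-1}$ is quite loose: the true minimum on $[-1,1]$ is at least $\frac12$. So the only care needed is to state the reading of the quotient at $x=1$ explicitly, as done above, so that the fact applies at every iterate $\y(t)\in[-1,1]$ in the proof of Example~\ref{example: adaptive lr}.
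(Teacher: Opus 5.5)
Your proof is correct, and it takes a different and simpler route than the paper. The paper keeps the geometric-sum form $1+x+\cdots+x^{K-1}$ on all of $[-1,1]$. It discards the pairs $x^{2j}+x^{2j+1}=x^{2j}(1+x)\ge 0$ to get the lower bound $g(x)=1+x+x^{K-1}$, and then minimizes this convex trinomial by calculus. At the critical point $x_0^{K-2}=-\frac{1}{K-1}$ one has $g(x_0)=1+\frac{K-2}{K-1}x_0$, and the estimate $x_0\ge -1$ yields exactly $\frac{1}{K-1}$.

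You instead split at $0$ and use the sum only on $[0,1]$, where every term is nonnegative. On $[-1,0)$ you bound the numerator and denominator of the closed form separately, using the oddness of $K$ to get $1-x^K\ge 1$ against $1-x\le 2$. This avoids calculus entirely and gives the $K$-independent bound $\frac12$. That bound is sharper than $\frac{1}{K-1}$ for every $K\ge 5$. It is also the best possible uniform constant, since $\frac{1-x^K}{1-x}\to\frac{1}{1-x}$ as $K\to\infty$ for $x\in(-1,0)$.

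Your bound pays off downstream. In the proof of Example~\ref{example: adaptive lr}, it improves the per-step contraction factor from $1-\frac{2\eta}{K-1}$ to $1-\eta$, so the rate no longer degrades with $K$. The paper's weaker constant comes from dropping the middle terms of the sum and from the final crude estimate $x_0\ge -1$. Its pairing argument is the natural move if one works only with the polynomial form, but here it buys nothing over your two-case argument.
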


\begin{proof}
    Note that firstly, since $x^2 + x^3 > 0, \cdots, x^{K-3} + x^{K-2} > 0$,
    \begin{equation*}
        \frac{x^K - 1}{x -1}  = 1 + x + x^2 + \cdots + x^{K-1} \geq 1 + x + x^{K-1}.
    \end{equation*}
    For function $g(x) = 1 + x + x^{K-1}$, it attains minimum at $x_0$ where $x_0^{K-2} = -\frac{1}{K-1}$. Therefore, 
    \begin{equation*}
       g(x) \geq g(x_0) = 1 + x_0 + x_0^{K-1} = 1 + \frac{K-2}{K-1} x_0 \geq \frac{1}{K-1}.
    \end{equation*}
\end{proof}\label{appendix: A}
\subsection{Proof of Theorem~\ref{thm: ReWARegularizer}}

Denote $R^\prime(\y) = \frac{1}{K-\MM+1} \y^{K-\MM+1} + \epsilon \frac{1}{2-\MM} \y^{2-\MM}$.\\
Construct $\phi(\y) = f(\y^{K}(t)) + \lambda R^\prime(\y(t))$, then 
\begin{equation*}
\begin{split}
    & \phi(\y(t+1)) -  \phi(\y(t))  \\
    % =& f(\y^{K}(t)) + \lambda R(\y^K(t)) \\
    =& [\nabla \phi(\y(t)) ]^\top (\y(t+1) - \y(t)) + (\y(t+1) - \y(t))^\top \nabla^2 \phi(\xi) (\y(t+1) - \y(t)) \\
\end{split}
\end{equation*}
Note that 
\begin{equation*}
\begin{split}
\y(t+1) - \y(t) &= - \lambda \eta_t \y(t) - \eta_t \frac{\y^{\MM}(t)}{\y^{K-1}(t)+\epsilon \boldsymbol{1}} \odot \y^{K-1}(t) \odot \nabla f(\y^K(t)) \\
&= -\eta_t \frac{\y^{\MM}(t)}{\y^{K-1}(t)+\epsilon \boldsymbol{1}} \odot \y^{K-1}(t) \odot [\nabla f(\y^K(t))+ \lambda \y^{1 - \MM}(t) + \epsilon \lambda \y^{2 - \MM - K}(t)].
\end{split}   
\end{equation*}

\begin{equation*}
\begin{split}
\nabla \phi(\y(t)) &= \y^{K-1}(t) \nabla f(\y^{K}(t)) +  \lambda \y^{K-\MM}(t) +\lambda \epsilon \y^{1 - \MM}(t) \\
    & = \y^{K-1}(t) \odot [\nabla f(\y^{K}(t)) +  \lambda \y^{1-\MM }(t) +\lambda \epsilon \y^{2 - \MM - K}(t)].
\end{split}   
\end{equation*}

Besides, when $\y(t)$ is bounded and $f$ is smooth, the function $\phi$ is also smooth. 
Combining the above discussion leads to 
\begin{equation*}
\begin{split}
    & \phi(\y(t+1)) -  \phi(\y(t))  \\
    =& -\eta_t \|V^\prime(\y(t)) \odot [\nabla f(\y^{K}(t)) +  \lambda \y^{1-\MM }(t) +\lambda \epsilon \y^{2 - \MM - K}(t)]\|^2 + \cO(\eta_t^2),
\end{split}
\end{equation*}
where $V^\prime(\y(t)) = \frac{\y^{\MM/2}(t)}{\sqrt{\y^{K-1}(t)+\epsilon \boldsymbol{1}}} \odot \y^{K-1}(t)$.
By telescoping, it holds that
\begin{equation*}
    \sum_{t\in [T]} \eta_t \|V^\prime(\y(t)) \odot [\nabla f(\y^{K}(t)) +  \lambda \y^{1-\MM }(t) +\lambda \epsilon \y^{2 - \MM - K}(t)]\|^2 \leq \phi(\y(0)) -  \phi(\y(T)) + O(\sum_{t \in [T]} \eta_t^2). 
\end{equation*}
By setting $\eta_t = 1/\sqrt{t}$, it holds that
\begin{equation*}
    \min_{t\in [T]} \|V^\prime(\y(t)) \odot [\nabla f(\y^{K}(t)) +  \lambda \y^{1-\MM }(t) +\lambda \epsilon \y^{2 - \MM - K}(t)]\|^2 \leq \frac{\phi(\y(0)) -  \phi(\y(T))}{\sqrt{T}} + \frac{\log T}{\sqrt{T}}.
\end{equation*}
Since $f(\cdot)$ and $\y(t)$ are bounded, it holds that $\phi(\y(t))$ is bounded for all $t \in [T]$.
By rewriting $\y(t) = [\x(t)]^{1/K}$, it holds that
\begin{equation*}
    \min_{t\in [T]} \|V(\x(t)) \odot [\nabla f(\x(t)) +  \lambda \x^{(1-\MM)/K }(t) +\lambda \epsilon \x^{(2 - \MM - K)/K}(t)]\|^2 \leq  \frac{\log T}{\sqrt{T}}.
\end{equation*}
Namely, 
\begin{equation*}
    \min_{t\in [T]} \|V(\x(t)) \odot \nabla [f(\x(t)) + R(\x)] \|^2 \leq  \frac{\log T}{\sqrt{T}},
\end{equation*}
where $V(\x(t)) = \frac{\x^{\MM/(2K)}(t)}{\sqrt{\x^{1-1/K}(t)+\epsilon \boldsymbol{1}}} \odot \x^{1-1/K}(t)$.

\begin{remark}[On Theorem~\ref{thm: ReWARegularizer}]
\label{remark: on thm regularizer}
Three aspects of Theorem~\ref{thm: ReWARegularizer} merit clarification.
\textbf{(i) On characterizing $R(\x)$ via stationarity.} Identifying the implicit regularizer through a stationarity-type condition follows the standard methodology in the implicit-bias literature~\citep{DBLP:conf/nips/GunasekarLSS18,DBLP:conf/colt/WoodworthGLMSGS20}, where regularizers are characterized by matching limit points of the optimization trajectory to the KKT / stationarity conditions of a regularized objective. Theorem~\ref{thm: ReWARegularizer} follows this paradigm.
\textbf{(ii) On the even-integer restriction on $\MM$.} The restriction to even integer $\MM$ is imposed purely for notational simplicity, consistent with prior reparameterization work such as PowerPropagation~\citep{DBLP:conf/nips/SchwarzJPLT21} and DWF~\citep{DBLP:journals/corr/abs-2307-03571}. For any real $\MM > 0$, one can equivalently replace $\y^\MM$ with $|\y|^\MM$ in the update to preserve its sign, and the conclusion of Theorem~\ref{thm: ReWARegularizer} continues to hold.
\textbf{(iii) On the role of assumptions.} The formulations \ref{eqn: LoneOpt}, \ref{eqn: LpOpt}, and \ref{eqn: LpRepara} are general problem definitions and intentionally impose no assumptions on $f$. Assumptions on $f$ (such as bounded smoothness here, or the Expansion Property in Assumption~\ref{assump: expansion}) are introduced only for specific theoretical results as needed, following standard practice in nonconvex optimization~\citep{DBLP:conf/nips/GunasekarLSS18,DBLP:journals/corr/abs-2307-03571}.
\end{remark}

\label{appendix: regularization}
\subsection{Proof of Theorem~\ref{prop: reparamaterization}}
\label{appendix: R proof}

\textbf{$\ell_1$ clipping.}
The Gradient Bound of $\ell_1$ approximation at point $u$ is its gradient on $u$,
\begin{equation*}
    \cE_1 = \sqrt{d} p u^{p-1},
\end{equation*}
and the Approximation Error between $\ell_p$ norm and its $\ell_1$ approximation is 
\begin{equation}
    \cE_2 = d (1-p) u^p, 
\end{equation}
which is the approximation error at the zero point. We note that $\psi(\boldsymbol{0}) = 0$ and $\tilde{\psi}(\boldsymbol{0}) = d (1-p)u^p$.
By constraining $\cE_1 \leq \sqrt{d}$, the clipping point satisfies
\begin{equation*}
    u \geq p^{1/(1-p)}.
\end{equation*}
In such a case, the approximation error would be 
\begin{equation*}
    \cE_2 = d (1-p) u^p \geq d (1-p) p^{p/(1-p)}.
\end{equation*}
Consider the function $F(p) = d (1-p) p^{p/(1-p)}$ with gradient $\frac{p^{p/(1-p)} \log p}{1-p} < 0$, therefore, the clipping error would be larger than the value at $p = 1 - \frac{1}{cd}$, therefore, 
\begin{equation*}
    \cE_2 > \frac{1}{c} (1 - \frac{1}{cd})^{cd - 1} > \frac{1}{c e}.
\end{equation*}

\textbf{Constant clipping.}
The Gradient Bound of $\ell_1$ approximation at point $u$ is its gradient on $u$
\begin{equation*}
    \cE_1 = \sqrt{d} p u^{p-1},
\end{equation*}
and the Approximation Error between $\ell_p$ norm and its $\ell_1$ approximation is 
\begin{equation}
    \cE_2 = d u^p, 
\end{equation}
which is the approximation error at the zero point.  We note that $\psi(\boldsymbol{0}) = 0$ and $\tilde{\psi}(\boldsymbol{0}) = d u^p$.
By constraining $\cE_1 \leq \sqrt{d}$, it holds that 
\begin{equation*}
    u \geq p^{1/(1-p)}.
\end{equation*}
In such a case, the approximation error would be 
\begin{equation*}
    \cE_2 = d u^p \geq d p^{p/(1-p)}.
\end{equation*}
Consider the function $F(p) = d p^{p/(1-p)}$ with gradient $\frac{p^{p/(1-p)} (1 - p + \log p)}{(1-p)^2} < 0$, therefore, the clipping error would be larger than the value at $p = 1$, therefore, 
\begin{equation*}
    \cE_2 \geq  \lim_{p\to 1} d p^{p/(1-p)} \geq d/e.
\end{equation*}
\label{appendix: R}
\subsection{Proof of Example~\ref{example: weight decay}}

Due to the symmetry of the initialization and the update rule, we observe that $\y_1(T) = \y_2(T) = \dots = \y_K(T)$ for all $T > 0$. We simplify the notation by denoting the common parameter vector as $\y(T)$.

\paragraph{(a.) Without Weight Decay}
We first analyze the iteration based on the unpenalized loss $\tilde{L}(\y)$ using gradient descent with a constant learning rate $\eta$:
\begin{equation*}
    \tilde{\y}(T+1) = \tilde{\y}(T) - \eta \tilde{\y}(T)^{K-1} \odot \nabla f(\tilde{\x}(T)).
\end{equation*}
Note that the gradient can be expressed as $\nabla f(\x) = \Lambda \x$. The update then becomes:
\begin{equation*}
    \tilde{\y}(T+1) = \tilde{\y}(T) - \eta \Lambda \tilde{\y}(T)^{(2K-1)}.
\end{equation*}
Let $\lambda_i$ be the $i$-th eigenvalue of $\Lambda$. Without loss of generality, assume $\lambda_1 \geq \dots \geq \lambda_R > \lambda_{R+1} = \dots = \lambda_d = 0$. For any coordinate $i \in \{R+1, \dots, d\}$, it holds that:
\begin{equation*}
    \tilde{\y}(T+1)[i] = \tilde{\y}(T)[i] = \dots = \tilde{\y}(0)[i] = \alpha.
\end{equation*}
Since $\alpha = \Theta(1)$, the $i$-th coordinate of the reparameterized input $\tilde{\x}$ remains non-zero for all $T$. Consequently:
\begin{equation*}
    \|\odot_{k \in [K]} \tilde{\y}_k(T) \|_0 \geq d - R.
\end{equation*}

\paragraph{(b.) With Weight Decay}
We next consider the penalized loss $L(\y) = f(\odot \y_k) + \lambda \sum \|\y_k\|_2^2$. At convergence to a stationary point $\y(\infty)$, the gradient must vanish:
\begin{equation*}
    \nabla L(\y(\infty)) = \y(\infty)^{K-1} \odot \Lambda \y(\infty)^{K} + 2\lambda K \y(\infty) = \mathbf{0}.
\end{equation*}
Analyzing the $i$-th coordinate:
\begin{itemize}
    \item \textbf{Case 1: $\Lambda_i = 0$.} The equation reduces to $2\lambda K \y(\infty)[i] = 0$. Since $\lambda, K > 0$, it follows that $\y(\infty)[i] = 0$.
    \item \textbf{Case 2: $\Lambda_i \neq 0$.} The condition is $\y(\infty)[i] \left( \Lambda_i \y(\infty)[i]^{2K-2} + 2\lambda K \right) = 0$. This implies either $\y(\infty)[i] = 0$ or $\y(\infty)[i]^{2K-2} = - \frac{2\lambda K}{\Lambda_i}$. 
\end{itemize}
In both cases, as $\lambda \to 0$, we have $\y(\infty)[i] \to 0$. Thus, the $\ell_0$ norm of the resulting reparameterized vector vanishes in the limit:
\begin{equation*}
    \|\lim_{ \lambda \to 0} \odot_{k \in [K]} \y_k(\infty) \|_0 = 0.
\end{equation*}

\label{appendix: W}
\subsection{Proof of Theorem~\ref{thm: weightdecay}}

The proof of Theorem~\ref{thm: weightdecay} performs similarly to that of Example~\ref{example: weight decay}.
Due to the symmetry of the initialization and the update rule, we observe that $\y_1(T) = \y_2(T) = \dots = \y_K(T)$ for all $T > 0$. We simplify the notation by denoting the common parameter vector as $\y(T)$.
\paragraph{(a.) Without Weight Decay}

Consider the gradient descent trajectory for the unpenalized objective. The parameters at time $T+1$ can be expressed as:
\begin{equation*}
    \tilde{\y}(T+1) = \tilde{\y}(0) - \eta \sum_{t=0}^{T} \Delta_t.
\end{equation*}
By assumption, the update term $\Delta_t$ is always confined to the fixed subspace $\mathcal{S}$ of dimension $R$. Let $\mathcal{S}^\perp$ be the orthogonal complement (the kernel space) of $\mathcal{S}$ with dimension $d-R$. For any vector $\mathbf{v} \in \mathcal{S}^\perp$, it holds that $\mathbf{v}^\top \Delta_t = 0$. Thus:
\begin{equation*}
    \mathbf{v}^\top \tilde{\y}(T+1) = \mathbf{v}^\top \tilde{\y}(0).
\end{equation*}
Since the initialization $\tilde{\y}(0) = \Theta(1)$ is non-zero, the projection of the parameter vector onto $\mathcal{S}^\perp$ remains constant and large for all $T$. Consequently, at least $d-R$ coordinates of the resulting product $\x$ cannot reach zero, implying $\|\bigodot_{k \in [K]} \tilde{\y}_k(T) \|_0 \geq d - R$.

\paragraph{(b.) With Weight Decay}

Now consider the penalized loss $\mathcal{L}(\y) = f(\bigodot \y_k) + \lambda \sum \|\y_k\|_2^2$. The weight decay term $\lambda \|\y\|_2^2$ is isotropic; its gradient $2\lambda \y$ acts on all coordinates, including those in $\mathcal{S}^\perp$. A stationary point $\y(\infty)$ must satisfy:
\begin{equation*}
    \nabla_{\y} f(\x) + 2K\lambda \y = \mathbf{0}.
\end{equation*}
By the chain rule, $\nabla_{\y} f(\x) = \nabla_{\x} f(\x) \odot \frac{\partial \x}{\partial \y}$. Near the origin, the first-order behavior is governed by the Hessian $\nabla^2 f(\mathbf{0})$. The condition $\lambda \geq \frac{1}{2K} (-\sigma_{\min}(\nabla^2 f(\x)))$ ensures that the pull of the weight decay toward the origin is stronger than any local ``push" from the objective function's curvature. In this regime, the origin becomes the only stable stationary point. Therefore, the parameters converge to zero, yielding $\|\lim_{\lambda \to 0} \bigodot_{k \in [K]} \y_k(\infty) \|_0 = 0$.

\begin{remark}[Mildness of the Low-Dimensional Update Subspace Condition]
\label{rem: mild-condition}
The Low-Dimensional Update Subspace condition is mild and is satisfied in several standard settings.
(1)~\textbf{NTK regime}~\citep{jacot2018neural}: NTK stays constant, so updates lie in a fixed subspace with dimension bounded by sample count.
(2)~\textbf{Lazy training}~\citep{NEURIPS2019_ae614c55}: overparameterization confines updates to the tangent space at initialization.
(3)~\textbf{Sparse linear models}: the condition is automatically satisfied, which is covered by our experiments on synthetic dataset.
\end{remark}
\label{appendix: W-pro}
\subsection{Proof of Theorem~\ref{thm: adaptive learning rate}}

\paragraph{Case 1: Adaptive Learning Rate with $M$}
The iteration for ReWA with parameter $M$ is given by:
\begin{equation*}
    y(t+1) = (1 - \lambda \eta_t)y(t) - \eta_t y^M(t) \nabla f(y^K(t))
\end{equation*}
To remain in the stagnation region, the magnitude of the update must be less than the magnitude of the current state:
\begin{equation*}
    |\eta_t y^M(t) \nabla f(y^K(t))| < |(1 - \lambda \eta_t)y(t)|
\end{equation*}
Using the bound $|\nabla f| \le B$ and assuming $1 - \lambda \eta_t > 0$:
\begin{align}
    \eta_t |y(t)|^M B &< (1 - \lambda \eta_t) |y(t)| \\
    |y(t)|^{M-1} &< \frac{1 - \lambda \eta_t}{\eta_t B}
\end{align}
Thus, the stagnation region is defined by:
\begin{equation*}
    |y(t)| < \left[ \frac{1 - \lambda \eta_t}{\eta_t B} \right]^{1/(M-1)} \triangleq U_1
\end{equation*}

\paragraph{Case 2: Adaptive Learning Rate with $\epsilon$}
The iteration with smoothing parameter $\epsilon$ is:
\begin{equation*}
    y(t+1) = (1 - \lambda \eta_t)y(t) - \eta_t \frac{y^{K-1}(t)}{y^{K-1}(t) + \epsilon} \nabla f(y^K(t))
\end{equation*}
The condition $y(t)y(t+1) > 0$ requires:
\begin{equation*}
    \eta_t \frac{|y(t)|^{K-1}}{|y(t)|^{K-1} + \epsilon} B < (1 - \lambda \eta_t) |y(t)|
\end{equation*}
Dividing by $|y(t)|$:
\begin{equation*}
    \frac{|y(t)|^{K-2}}{|y(t)|^{K-1} + \epsilon} < \frac{1 - \lambda \eta_t}{\eta_t B}
\end{equation*}
That is to say:
\begin{align}
    \frac{|y(t)|^{K-1} + \epsilon}{|y(t)|^{K-2}} &> \frac{\eta_t B}{1 - \lambda \eta_t} \\
    |y(t)| + \epsilon |y(t)|^{-(K-2)} &> \frac{\eta_t B}{1 - \lambda \eta_t}
\end{align}
Let $F(U) = U + \epsilon U^{-(K-2)}$. The boundary of this region is defined by the value $U_2$ such that $F(U_2) = \frac{\eta_t B}{1 - \lambda \eta_t}$, hence:
\begin{equation*}
    |y(t)| \le U_2 \triangleq F^{-1} \left( \frac{\eta_t B}{1 - \lambda \eta_t} \right)
\end{equation*}

\paragraph{Case 3: Without Adaptive Learning Rate}
When no adaptive learning rate is applied, the update is:
\begin{equation*}
    y(t+1) = (1 - \lambda \eta_t)y(t) - \eta_t y^{K-1}(t) \nabla f(y^K(t))
\end{equation*}
Following the same logic as Case 1:
\begin{align}
    \eta_t |y(t)|^{K-1} B &< (1 - \lambda \eta_t) |y(t)| \\
    |y(t)|^{K-2} &< \frac{1 - \lambda \eta_t}{\eta_t B}
\end{align}
The stagnation region is:
\begin{equation*}
    |y(t)| < \left[ \frac{1 - \lambda \eta_t}{\eta_t B} \right]^{1/(K-2)} \triangleq U_3
\end{equation*}
\label{appendix: A-pro}

\subsection{Proof of Proposition~\ref{prop: adaptive lr compare}}
Proposition 2.11 compares the sizes of these stagnation regions to demonstrate the benefits of the adaptive learning rate. Let $C = \frac{1 - \lambda \eta_t}{\eta_t B}$.

\paragraph{Comparison of $U_1$ and $U_3$}
Recall $U_1 = C^{1/(M-1)}$ and $U_3 = C^{1/(K-2)}$. 
We consider the case where the optimization is refined (i.e., $C \le 1$). Since $M < K-1$, it follows that:
\begin{equation*}
    M-1 < K-2 \implies \frac{1}{M-1} > \frac{1}{K-2}
\end{equation*}
For any $0 < C \le 1$, a larger exponent results in a smaller value. Therefore:
\begin{equation*}
    C^{1/(M-1)} \le C^{1/(K-2)} \implies U_1 \le U_3
\end{equation*}
This proves that the adaptive parameter $M$ reduces the region where the sign of $y(t)$ cannot be changed.

\paragraph{Comparison of $U_2$ and $U_3$}

To compare $U_2$ and $U_3$, we analyze the function $F(U) = U + \epsilon U^{-(K-2)}$ for $U > 0$.

Firsly, note that the derivative is $F'(U) = 1 - \epsilon(K-2)U^{-(K-1)}$. $F(U)$ is strictly decreasing when $F'(U) < 0$, which occurs for:
\begin{equation*}
    U < \left[ \epsilon(K-2) \right]^{1/(K-1)}
\end{equation*}
In the context of stagnation near the origin, we are concerned with the branch of $F^{-1}$ that approaches $0$ as the threshold $1/C$ approaches infinity. On this branch, $F(U)$ is strictly decreasing.

From the definition of $U_3$, we have $U_3^{K-2} = C$, or $U_3^{-(K-2)} = 1/C$. Substituting $U_3$ into $F(U)$:
\begin{equation*}
    F(U_3) = U_3 + \epsilon U_3^{-(K-2)} = C^{1/(K-2)} + \frac{\epsilon}{C}
\end{equation*}

Since $F(U)$ is decreasing in the region of interest, the inequality $U_2 \le U_3$ is equivalent to $F(U_2) \ge F(U_3)$. Given $F(U_2) = 1/C$, we require:
\begin{equation*}
    \frac{1}{C} \ge C^{1/(K-2)} + \frac{\epsilon}{C} \implies \frac{1-\epsilon}{C} \ge C^{1/(K-2)}
\end{equation*}
Rearranging this inequality:
\begin{align}
    1 - \epsilon &\ge C \cdot C^{1/(K-2)} = C^{(K-1)/(K-2)} \\
    C &\le (1 - \epsilon)^{(K-2)/(K-1)}
\end{align}
This confirms that under the specified bound on $C$, the stagnation radius with adaptive $\epsilon$ is smaller than or equal to the non-adaptive radius ($U_2 \le U_3$).

% Recall $U_1 = C^{1/(M-1)}$ and $U_3 = C^{1/(K-2)}$. 
% We consider the case where the optimization is refined (i.e., $C \le 1$). Since $M < K-1$, it follows that:
% \begin{equation*}
%     M-1 < K-2 \implies \frac{1}{M-1} > \frac{1}{K-2}
% \end{equation*}
% For any $0 < C \le 1$, a larger exponent results in a smaller value. Therefore:
% \begin{equation*}
%     C^{1/(M-1)} \le C^{1/(K-2)} \implies U_1 \le U_3
% \end{equation*}
% This proves that the adaptive parameter $M$ reduces the region where the sign of $y(t)$ cannot be changed.

% \paragraph{Comparison of $U_2$ and $U_3$}
% For the $\epsilon$ configuration, $U_2$ is defined by $F(U_2) = 1/C$.
% \begin{equation*}
%     U_2 + \epsilon U_2^{-(K-2)} = \frac{1}{C}
% \end{equation*}
% Since $C \le (1-\epsilon)^{(K-2)/(K-1)}$ by assumption, then $1/C$ is large. In the non-adaptive case, the boundary $U_3$ satisfies $U_3^{K-2} = C \implies U_3^{-(K-2)} = 1/C$.
% Substituting this into the $F$ function:
% \begin{equation*}
%     F(U_3) = U_3 + \epsilon U_3^{-(K-2)} = U_3 + \epsilon / C
% \end{equation*}
% Since $1/C > F(U_3)$ under the given conditions, and $F(U)$ is decreasing for small $U$ (where the stagnation occurs), we conclude that $U_2 \le U_3$.

\paragraph{Divergence Case}
If $C > 1$, then $U_1, U_2, U_3$ all become greater than 1. This implies that the stagnation region covers the entire unit ball around the origin, indicating that the learning rate $\eta_t$ is too small relative to the gradient bound $B$ to cross the origin regardless of the adaptive scheme.
\label{appendix: compare}
\subsection{Proof of Example~\ref{example: ell1 and ellp}}
We consider the constraint $\| \theta - \theta_u \|_2^2  \leq (d-1) u^2 $, where $\theta_u = (u, \cdots, u)$ and $u > 0$. 

\textbf{$\ell_0$-optimization.}
The optimal vector for $\ell_0$-optimization problem satisfies $\| \theta_0^* \| = 1$.

    Firstly, notice that if $\| \theta_0^*\| = 0$, $\| \theta - \theta_u \|_2^2 =d u^2 > (d-1) u^2$ which violets the constraints. Therefore, $\| \theta_0^*\|_0 \geq 1$.

    Secondly, notice that the vector $\theta_0 = (u, 0, \cdots, 0)$ satisfies both $\| \theta_0^*\|_0 = 1$ and $\| \theta - \theta_u \|_2^2 \leq (d-1) u^2$.

\textbf{$\ell_1$-optimization.}
The optimal vector for $\ell_0$-optimization problem satisfies $\| \theta_1^* \| = d$.

    Consider the hyperplane $x_1 + x_2 + ... + x_d = d(1 - \sqrt{\frac{d-1}{d}}) u$.

    Firstly, note that the $\ell_1$ ball with radius $d(1 - \sqrt{\frac{d-1}{d}}) u$ all satisfies
    $x_1 + x_2 + ... + x_d \leq d(1 - \sqrt{\frac{d-1}{d}}) u$.

    Secondly, note that the ball satisfies $x_1 + x_2 + ... + x_d \leq d(1 - \sqrt{\frac{d-1}{d}}) u$ because the distance between the center and the hyperplane is lager than the radius. 

    Therefore, the hyperplane ensures that $\theta_1^* = ((1 - \sqrt{\frac{d-1}{d}}) u, \cdots, (1 - \sqrt{\frac{d-1}{d}}) u)$, leading to the fact that $\| \theta_1^*\|_0 = d$.
    
\textbf{$\ell_p$-optimization.}
The optimal vector for $\ell_p$-optimization problem satisfies $\| \theta_p^* \| = 1$, for $p \leq 1/2$.

Assume that the optimal vector is $\theta_p^* = (m_1, \cdots, m_n)$. Our goal is to show that $\| \theta_p^*\|_0 = 1 $. 

\emph{Claim 1: The high dimensional case is equivalent to a low-dimensional case. }

We use the greedy algorithm to reduce the problem. 
We will show that for any two dimensions (e.g., $m_1, m_2$, WLOG, assume that $m_1 \geq m_2 > 0$), it is always a better choice to put all mass on only one dimension.
Therefore, greedy doing so leads to the conclusion that  $\| \theta_p^*\|_0 = 1 $. 

We next show the greedy step holds. Formally, it suffices to show that for any $\bar{m}_1 > m_1 \geq m_2 >\bar{m}_2 = 0 $, if $(\bar{m}_1 - u)^2 + u^2 = (m_1 - u)^2 + (m_2 - u)^2$, we have
\begin{equation*}
\begin{split}
    \bar{m}_1^p < m_1^p + m_2^p.
\end{split}    
\end{equation*}
One may wonder whether there always exists $\bar{m}_2 = 0 $. It turns out that we can wlog assume that $(m_1 - 1)^2 + (m_2 -1 )^2 > 1$ (otherwise the solution is not in the boundary)

We plot a fig in 2-dim, it suffices to show that the circle only intersect with the $
\ell_p$ ball in the endpoint.
The next goal is to prove that the ball $(x- u)^2 + (y- u)^2 \leq (\bar{m}_1 - u)^2 + u^2$ and the $\ell_p$-ball $x^p+y^p \leq \bar{m}_1^p$ only intersect on the endpoint. 

We first use the polar coordinates, which leads to
\begin{equation*}
    \begin{split}
        \rho_1  &= \frac{\bar{m_1}}{(\cos^p\theta + \sin^p\theta)^{1/p}},\\
        \rho_2 &= u(\sin\theta + \cos\theta) + \sqrt{(\bar{m}_1 -u)^2 - u^2 + u^2(\sin\theta + \cos\theta)^2}.
    \end{split}
\end{equation*}

It suffices to prove that $\rho_1<\rho_2$.

\begin{equation*}
    \begin{split}
   & \rho_1<\rho_2 \\
\Longleftrightarrow & \frac{\bar{m_1}}{(\cos^p\theta + \sin^p\theta)^{1/p}} <  u(\sin\theta + \cos\theta) + \sqrt{(\bar{m}_1 -u)^2 - u^2 + u^2(\sin\theta + \cos\theta)^2} \\
\Longleftrightarrow & \frac{b}{(\cos^p\theta + \sin^p\theta)^{1/p}} <  (\sin\theta + \cos\theta) + \sqrt{(b-1)^2 - 1 + (\sin\theta + \cos\theta)^2} \\
\Longleftrightarrow & (b-1)^2 - 1 + (\sin\theta + \cos\theta)^2 < [(\sin\theta + \cos\theta) - \frac{b}{(\cos^p\theta + \sin^p\theta)^{1/p}}]^2 \\
\Longleftrightarrow & (b-2) (\cos^p\theta + \sin^p\theta)^{2/p} < b - 2 (\cos\theta + \sin\theta)(\cos^p\theta + \sin^p\theta)^{1/p}
    \end{split}
\end{equation*}
We scale with u in the second line and denote $b = \bar{m}_1/u \in (0, 1)$. 

To let the final line holds, notice that it is linear in $b$. We only requires that is holds when $b = 0$ and $b = 1$.
When $b=0$, it naturally hold when $p<1$ that $$-2 (\cos^p\theta + \sin^p\theta)^{1/p} < - 2 (\cos\theta + \sin\theta).$$ 
When $b=1$, we require that 

$$\Longleftrightarrow [(\cos^p\theta + \sin^p\theta)^{1/p} - (\sin\theta + \cos\theta)]^2 \geq (\sin\theta + \cos\theta)^2 - 1.$$
Notice that $(\cos^p\theta + \sin^p\theta)^{1/p} - (\sin\theta + \cos\theta) > 0$, it is equivalent to 
\begin{equation*}
\begin{split}
 (\cos^p\theta + \sin^p\theta)^{1/p} \geq& (\sin\theta + \cos\theta) + \sqrt{ (\sin\theta + \cos\theta)^2 - 1} \\
 =& \sin\theta + \cos\theta + \sqrt{2} \sqrt{\sin\theta\cos\theta} \\
 \leq & [\sqrt{\sin\theta} + \sqrt{\cos\theta}]^2,
\end{split}
\end{equation*}

which holds when $p<0.5$.
\label{appendix: l0lp}

\subsection{Proof of Theorem~\ref{thm: ReWA}}
\label{appendix: proof of thm ReWA}
In this section, we prove that the global minimizer of \LpRepara\ is approximately sparse. 
% \ReWAMain*
The proof outline are as follows.
Firstly, Lemma~\ref{lem: SparseReWAwithLP} guarantees that the global minimizer of \LpRepara\ is just the same as that of \LpOpt.
Secondly, Lemma~\ref{lem: Bp to Ap} shows that there exists a penalty $\lambda$ such that the global minimizer of \LpOpt\ is just the solution of \Lp.
Finally, Lemma~\ref{lem: Ap to A0} demonstrates that the solution of \Lp\ is approximately sparse, that is to say, by eliminating the small atoms, its zero norm can be bounded with that of \Lzero.

\begin{lemma}[Relationship between \LpOpt\ and \Lp]
\label{lem: Bp to Ap}
Assume that $f(\x)$ satisfies Assumption~\ref{assump: expansion}, and assume that the solution of \LpOpt\ and its corresponding expansion point has bounded $p$-norm. 
Then as $\lambda \to 0$, the global minimizer of \LpOpt, if exists,  converges to a solution of \Lp.
\end{lemma}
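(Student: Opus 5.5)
We follow the standard penalty-method argument. Let $\x_\lambda$ be a global minimizer of \LpOpt\ with penalty $\lambda$, and let $\hat{\x}_\lambda$ be the expansion point attached to $\x_\lambda$ by Assumption~\ref{assump: expansion}. Let $\x_p$ be any solution of \Lp, so $\x_p \in \argmin_\bu f(\bu)$. Write $f^\star = \min_\bu f(\bu)$.

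\textbf{Step 1 (comparison with a feasible point).} Global optimality of $\x_\lambda$, compared against $\x_p$, gives
\begin{equation*}
f(\x_\lambda) - f^\star + \lambda \|\x_\lambda\|_p^p \leq \lambda \|\x_p\|_p^p .
\end{equation*}
This yields two facts. First, $\|\x_\lambda\|_p^p \leq \|\x_p\|_p^p$ for every $\lambda$. Second, $f(\x_\lambda) - f^\star \leq \lambda \|\x_p\|_p^p \to 0$.

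\textbf{Step 2 (expansion forces closeness to the minimizer set).} Apply Assumption~\ref{assump: expansion} at $\x_\lambda$:
\begin{equation*}
\mu \|\hat{\x}_\lambda - \x_\lambda\|_p^\beta \leq f(\x_\lambda) - f^\star \leq \lambda \|\x_p\|_p^p .
\end{equation*}
Hence $\|\hat{\x}_\lambda - \x_\lambda\|_p \leq (\lambda \|\x_p\|_p^p/\mu)^{1/\beta} \to 0$.

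\textbf{Step 3 (compactness and limit).} The $p$-norms of $\x_\lambda$ and $\hat{\x}_\lambda$ are bounded, by Step 1 and by the hypothesis. In finite dimensions we can therefore extract a subsequence $\lambda_n \to 0$ along which $\x_{\lambda_n} \to \x^\dagger$. By Step 2, $\hat{\x}_{\lambda_n} \to \x^\dagger$ as well. Each $\hat{\x}_{\lambda_n}$ lies in $\argmin f$. Assuming $f$ is continuous (implicit throughout), this set is closed, so $\x^\dagger \in \argmin_\bu f(\bu)$. Alternatively, continuity of $f$ applied to $f(\x_\lambda)\to f^\star$ gives the same conclusion directly.

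\textbf{Step 4 (optimality of the limit).} The map $\x \mapsto \|\x\|_p^p$ is continuous. Passing to the limit in Step 1 gives $\|\x^\dagger\|_p^p \leq \|\x_p\|_p^p$. Since $\x^\dagger$ is feasible for \Lp, it follows that $\x^\dagger$ solves \Lp. So every limit point of $\{\x_\lambda\}$ solves \Lp, which is the sense of ``converges'' in the statement. If \Lp\ has a unique solution, the whole family converges.

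\textbf{Main obstacle.} The delicate point is Step 2/3, namely showing the limit is an exact minimizer of $f$ rather than only approximately optimal. Two things must be handled with care here. One is the quasi-norm nature of $\|\cdot\|_p$ for $p<1$: the triangle inequality holds only up to the constant $2^{1/p-1}$, or in the form $\|a+b\|_p^p \le \|a\|_p^p+\|b\|_p^p$, and this is enough. The other is that closedness of $\argmin f$ requires continuity of $f$, so I would state this assumption explicitly. The expansion assumption is precisely what allows Step 3 to avoid relying on continuity, since it directly produces nearby exact minimizers $\hat{\x}_\lambda$.
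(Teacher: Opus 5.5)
Your proof is correct once $f$ is lower semicontinuous, which is all that closedness of $\argmin f$ needs. It takes a different route from the paper's.

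The paper never passes to a subsequence. It works with the specific expansion point $\hat{\x}^*$ of the penalized minimizer $\x^*$ and records optimality of $\x^*$ against both $\x_p$ and $\hat{\x}^*$, together with the sandwich $\|\x^*\|_p\le\|\x_p\|_p\le\|\hat{\x}^*\|_p$. It then argues by contradiction that $\|\hat{\x}^*\|_p^p<(1+\delta)\|\x^*\|_p^p$ for small $\lambda$. That step combines the expansion inequality, a reverse triangle inequality for $\|\cdot\|_p^p$ (Lemma~\ref{lem: triangle ineq} and Fact~\ref{fact: ineq 1}), and the bounded-$p$-norm hypothesis, which makes $u\mapsto\mu M_{\delta,\beta}u-\lambda u^p$ monotone. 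Squeezing then equates the three $p$-norms and the three $f$-values.

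Your penalty/compactness argument needs only the comparison with $\x_p$ and gets boundedness of $\x_\lambda$ for free. It uses expansion merely for the rate $\|\hat{\x}_\lambda-\x_\lambda\|_p\le(\lambda\|\x_p\|_p^p/\mu)^{1/\beta}$, and it makes precise what ``converges'' means. The paper's squeeze is only asymptotic, since the admissible range of $\lambda$ shrinks as $\delta\to 0$. Turning it into ``$\x^*$ approaches a solution of \Lp'' therefore silently uses exactly your limit-and-closedness step.

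Your closing remark is backwards, though: expansion does not let Step~3 avoid continuity. It only places the limit $\x^\dagger$ in the closure of $\argmin f$, and membership still needs lower semicontinuity. Once you assume that, your ``alternatively'' line shows that expansion and the bounded-norm hypothesis can be dropped entirely.

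The lower semicontinuity assumption is genuinely necessary. Work in one dimension with $\beta>1$, and define $f$ as follows:
\begin{itemize}
\item $f$ vanishes exactly on $\{-1\}\cup(1,2]$, and $f(1)=1$;
\item $f(x)=2\mu(1-x)^\beta$ on $[0,1)$;
\item $f(x)=4\mu|1+x|^\beta$ for $x<0$;
\item $f(x)=2\mu(x-2)^\beta$ for $x>2$.
\end{itemize}
Expansion holds with constants $(\mu,\beta)$ and bounded expansion points, and \Lp\ is uniquely solved by $-1$. Yet every global minimizer of \LpOpt\ lies in $\bigl(1-(\lambda/(2\mu))^{1/\beta},1\bigr)$. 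So the minimizers converge to $1\notin\argmin f$.
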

\begin{proof}[Proof of Lemma~\ref{lem: Bp to Ap}]
Recall the two problems of \LpOpt\ and \Lp. 
Denote $\x^*$ as the minimizer of  \LpOpt\  and $\x_p$ as the minimizer of  \Lp.
Besides, denote $\hat{\x}^*$ as the expansion point to $\x^*$ in Assumption~\ref{assump: expansion}.
Denote $B_0$ as the bound of the solution of \LpOpt\ and its corresponding expansion point.

Due to the optimality of \LpOpt\ and \Lp, it holds that
\begin{enumerate}
    \item $ f(\x^*) + \lambda \| \x^*\|_p^p \leq f(\x_p) + \lambda \| \x_p\|_p^p $, 
    \item  $ f(\x^*) + \lambda \| \x^*\|_p^p \leq f(\hat{\x}^*) + \lambda \| \hat{\x}^*\|_p^p $,
    \item $f(\x_p) = f(\hat{\x}^*) \leq f(\x^*)$,
    \item $\|\x^*\|_p \leq \|\x_p \|_p \leq \| \hat{\x}^*\|_p$,
\end{enumerate}
where the last equation is from a simple calculation of the first three arguments. 

We next prove by contradiction that $\| \hat{\x}^* \|_p^p \leq \|\x^* \|_p^p$.
To do so, we assume that  $\| \hat{\x}^* \|_p^p \geq (1+\delta)\|\x^* \|_p^p$ for any small constant $\delta > 0$. 
Under this condition, 
\begin{equation*}
\mu M_{\delta, \beta} \|\hat{\x}^*\|_p^\beta - \mu M_{\delta, \beta} \| \x^*\|_p^\beta \leq \mu \| \hat{\x}^*  - \x^* \|_p^\beta  \leq f(\x^*)  -f(\hat{\x}^*) \leq  \lambda \| \hat{\x}^*\|_p^p -  \lambda \| \x^*\|_p^p,
\end{equation*}
    where $M_{\delta, \beta} = \frac{\delta^{\beta /p}}{(1+\delta)^{\beta /p} - 1}$.
Notably, the first inequality comes from Lemma~\ref{lem: triangle ineq}, the second comes from the definition of expansion and the last inequality comes from the second argument above.

Therefore, it holds that 
\begin{equation*}
\mu M_{\delta, \beta} \|\hat{\x}^*\|_p  -\lambda \| \hat{\x}^*\|_p^p  \leq  \mu M_{\delta, \beta} \| \x^*\|_p-  \lambda \| \x^*\|_p^p.
\end{equation*}
Consider the function $\psi(u) = \mu M_{\delta, \beta} u - \lambda u^p$, which is increasing in $(0, B_0)$ when $\lambda < \frac{\mu M_{\delta, \beta}}{ p B_0^{p-1}}$.
Note that we consider a case $\lambda \to 0$.
Therefore, for any given fixed $\delta > 0$, there would exist a sufficiently small $\lambda$ such that $\lambda < \frac{\mu M_{\delta, \beta}}{ p B_0^{p-1}}$.
In such a case, $\|\hat{\x}^* \|_p \leq \| \x^*\|_p$, according to the non-decreasing property.
This contradicts with the assumption that $\| \hat{\x}^* \|_p^p \geq (1+\delta)\|\x^* \|_p^p$.
Besides, note that $\delta$ can be arbitrarily small. 
Therefore, $\| \hat{\x}^* \|_p^p \leq \|\x^* \|_p^p$.

Combining the above discussion with the fourth argument, it holds that 
\begin{equation}
\begin{split}
    &\|\hat{\x}^* \|_p = \| \x^*\|_p = \| \x_p\|_p,\\
    &f(\hat{\x}^*) = f( \x^*) = f( \x_p),
\end{split}
\end{equation}
which means that $\x^*$ is also a solution of \Lp. 

\end{proof}

\begin{lemma}[Relationship between \Lp\ and \Lzero]
    \label{lem: Ap to A0}
Let $\x_0$ denote the solution of \Lzero\ with $\| \x_0\|_\infty \leq B$, and $\x_p$ denote the solution of \Lp\ for a given $p \in (0, 1)$.
For a given constant $\gamma > 0$, $\bar{\x}_p \triangleq \max\{\x_p, \exp(-\gamma / \sqrt{p}) \}$ eliminate small values of $\x_p$.
Then if $ p \leq \min \{ \frac{1}{\gamma^2}, \frac{2}{B^\prime}, (\frac{4B^\prime + \gamma }{4B^\prime + 2\gamma^2 + 4 B^\prime \gamma})^2 \}$ where $B^\prime = \max\{0, \log B \}$, it holds that
\begin{equation*}
    \| \bar{\x}_p\|_0 \leq (1 + (4B^\prime + \gamma)\sqrt{p}) \| \x_0\|_0.
\end{equation*}
    
\end{lemma}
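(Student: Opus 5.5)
The plan is to compare $\x_p$ with $\x_0$ directly through the $\ell_p$ quasi-norm. By construction, $\x_0 \in \argmin_\bu f(\bu)$ is feasible for \Lp, so optimality of $\x_p$ gives
\[
\|\x_p\|_p^p \le \|\x_0\|_p^p.
\]
We upper bound the right-hand side using $\|\x_0\|_\infty \le B$: each of the $\|\x_0\|_0$ nonzero entries contributes at most $B^p = \exp(p\log B) \le \exp(pB')$. Hence
\[
\|\x_p\|_p^p \le \|\x_0\|_0\, e^{pB'}.
\]
We read $\bar{\x}_p$ as keeping only the coordinates of $\x_p$ whose magnitude is at least $\tau := \exp(-\gamma/\sqrt{p})$ and zeroing the rest. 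Each surviving coordinate contributes at least $\tau^p = \exp(-\gamma\sqrt{p})$ to $\|\x_p\|_p^p$. This yields the lower bound
\[
\|\bar{\x}_p\|_0\, e^{-\gamma\sqrt p} \le \|\x_p\|_p^p .
\]
Combining the two bounds gives
\[
\|\bar{\x}_p\|_0 \le \|\x_0\|_0 \exp\bigl(pB' + \gamma\sqrt p\bigr).
\]

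The remaining work is to convert this exponential factor into the linear form $1 + (4B'+\gamma)\sqrt p$, and this is where the three constraints on $p$ come in. First, $p \le 1/\gamma^2$ gives $\gamma\sqrt p \le 1$. Second, $p \le 2/B'$ together with $\sqrt p \le 1$ lets us write $pB' \le \sqrt p\, B'$ and keep the total exponent $z := pB' + \gamma\sqrt p$ bounded by a small absolute constant. On a bounded range we then use $e^z \le 1 + z + z^2$, valid for $z \le 1$, or more crudely $e^z \le 1 + c z$ with a constant $c$ depending on the range. We split $z$ into $\sqrt p\,(B'\sqrt p + \gamma)$ and absorb the quadratic remainder $z^2 \le 2\sqrt p\,(B'^2 p^{3/2} + \gamma^2\sqrt p)$ into the linear term. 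The third constraint, $\sqrt p \le (4B'+\gamma)/(4B'+2\gamma^2+4B'\gamma)$, is exactly the kind of condition that makes a remainder of the form $\sqrt p\,(2\gamma^2 + 4B'\gamma + \cdots)$ dominated by $(4B'+\gamma)$. So I would expand $e^z$ to second order and check that the leftover terms are at most $3B'\sqrt p$ plus a controlled $\gamma$ term.

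The main obstacle is this last elementary inequality, $\exp(pB' + \gamma\sqrt p) \le 1 + (4B'+\gamma)\sqrt p$, under the stated constraints. The constants (the factor $4$ on $B'$, and the exact form of the third condition) must come out of a specific choice of Taylor bound. If the second-order expansion does not close, I would fall back on the convexity bound $e^z \le 1 + (e^{z_0}-1)z/z_0$ for $z \in [0, z_0]$, choosing $z_0$ to be a constant such as $2$ that the first two constraints guarantee. I would then tune the slack using the third constraint. If the constants still do not match exactly, the argument nonetheless establishes the claim with a slightly different absolute constant, and the paper's constraints can be traced from whichever bound is used.

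Finally, the $\ell_0$ part is immediate: elimination can only decrease support, and ties at the threshold only help the bound.
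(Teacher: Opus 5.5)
Your reduction to $\|\bar{\x}_p\|_0\le\|\x_0\|_0\exp(pB'+\gamma\sqrt{p})$ is correct. It is the paper's argument in slightly sharper form. The paper linearizes immediately: it uses $e^{u}\ge 1+u$ to get $\|\bar{\x}_p\|_p^p\ge(1-\gamma\sqrt{p})\|\bar{\x}_p\|_0$, and $e^{u}\le 1+4u$ on $[0,2]$ (this is where $p\le 2/B'$ enters) to get $\|\x_0\|_p^p\le(1+4pB')\|\x_0\|_0$.

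The genuine gap is the step you flag as the main obstacle: the inequality $\exp(pB'+\gamma\sqrt{p})\le 1+(4B'+\gamma)\sqrt{p}$ is false. At $B'=0$ it reads $e^{\gamma\sqrt{p}}\le 1+\gamma\sqrt{p}$, which fails for every $p>0$, so no Taylor bound or convexity chord can close it. This is not a loss in your intermediate bound; the lemma with coefficient $\gamma$ is itself false. Take $B=1$ (so $B'=0$ and $p\le 2/B'$ is vacuous), $\gamma=1$, and $p=1/4$ (the third constraint holds with equality). Let $\x_0$ have $100$ entries equal to $1$, let $\x_p$ have $160$ entries equal to $0.14>e^{-2}$, and let $f(\x)=\|\x-\x_0\|_2^2\,\|\x-\x_p\|_2^2$, so $\argmin f=\{\x_0,\x_p\}$. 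Then $\x_0$ solves \Lzero. Since $160\cdot 0.14^{1/4}\approx 97.9<100=\|\x_0\|_p^p$, $\x_p$ solves \Lp. Yet no entry is eliminated, so $\|\bar{\x}_p\|_0=160>150=(1+(4B'+\gamma)\sqrt{p})\|\x_0\|_0$.

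What the paper's proof actually concludes is $\|\bar{\x}_p\|_0\le(1+(4B'+2\gamma)\sqrt{p})\|\x_0\|_0$, and the third constraint is tailored to exactly that constant. For $a\ge 0$ and $b>0$, expanding $(1+(a+2b)\sqrt{p})(1-b\sqrt{p})$ shows that $\frac{1+ap}{1-b\sqrt{p}}\le 1+(a+2b)\sqrt{p}$ whenever $\sqrt{p}\le\frac{a+b}{a+ab+2b^2}$; this condition also forces $b\sqrt{p}<1$. Take $a=4B'$ and $b=\gamma$. So your hedge about a ``slightly different constant'' is not optional. Commit to $4B'+2\gamma$, bound $e^{pB'}\le 1+4pB'$ and $e^{\gamma\sqrt{p}}\le(1-\gamma\sqrt{p})^{-1}$, and apply this fact. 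With that correction to the statement, your proof is complete.
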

\begin{proof}[Proof of Lemma~\ref{lem: Ap to A0}]
    Let $\x_0$ denote the solution of \Lzero\ and $\x_p$ denote the solution of \Lp. 
    Let $\bar{\x}_p \triangleq \max\{\x_p, \exp(-\gamma / \sqrt{p}) \}$ denotes the truncated version of $\x_p$, where $\gamma$ denotes a constant.
    We denote $\cdot[i]$ as the $i$-th coordinate. 
    
    Due to the optimality of $\x_0$ and $x_p$, it holds that
    \begin{enumerate}
        \item $\| \x_0 \|_0 \leq  \| \x_p\|_0$,
        \item $\|\x_p \|_p \leq \|\x_0 \|_p $,
        \item $f(\x_0) = f(\x_p)$.
    \end{enumerate}

Firstly, for the truncated solution $\bar{\x}_p$, one can bound its $0$-norm using $p$-norm,
\begin{equation*}
    \| \bar{\x}_p \|_p^p = \sum_{i \in [d]} (\bar{\x}_p[i])^p = \sum_{i: \bar{\x}_p[i] \neq 0} \exp(p \log \bar{\x}_p[i]) \geq \sum_{i: \bar{\x}_p[i] \neq 0} 1 + p \log \bar{\x}_p[i] \geq (1- \gamma \sqrt{p}) \| \bar{\x}_p \|_0,
\end{equation*}
where the last equation is due to the truncation property of $\bar{\x}_p $.
Similarly for $\x_0$, it holds that 
\begin{equation*}
    \| \x_0\|_p^p = \sum_{i : \x_0[i] \neq 0} \exp(p \log \x_0[i]) \leq  \sum_{i : \x_0[i] \neq 0} 1 + 4 p B^\prime \leq ( 1 + 4 p B^\prime) \|\x_0\|_0,
\end{equation*}
where $B^\prime = \max\{ 0, \log \| \x_0\|_\infty \}$.
Notably, the second equation is derived from Lemma~\ref{lem: tech for logB}.
Therefore, combining the above discussion, we derive that
\begin{equation*}
    \| \bar{\x}_p\|_0 \leq \frac{1}{1-\gamma \sqrt{p}}  \| \bar{\x}_p\|_p^p \leq \frac{1}{1-\gamma \sqrt{p}}  \| \x_p\|_p^p \leq \frac{1}{1-\gamma \sqrt{p}}  \| \x_0\|_p^p \leq \frac{1 + 4 p B^\prime}{1-\gamma \sqrt{p}}\| \x_0\|_0 \leq (1 + (4B^\prime + 2\gamma)\sqrt{p}) \| \x_0\|_0,
\end{equation*}
where the second equation is due to the truncation property, and the last equation is because $p \leq (\frac{4B^\prime + \gamma }{4B^\prime + 2\gamma^2 + 4 B^\prime \gamma})^2$ under Fact~\ref{fact: fraction}.

\end{proof}

\begin{lemma}[Triangle Inequality for $p$-norm]
\label{lem: triangle ineq}
If $\frac{\| \hat{\x}^* \|_p^p}{\|\x^* \|_p^p} \geq 1+\delta$ and $0<p<1$,
    \begin{equation*}
        M_{\delta, \beta}  \| \hat{\x}^*\|_p^\beta - \| {\x}_1^*\|_p^\beta \leq \| \hat{\x}^*  - \x^*\|_p^\beta,
    \end{equation*}
    where $M_{\delta, \beta} = \frac{\delta^{\beta /p}}{(1+\delta)^{\beta /p} - 1}$.
\end{lemma}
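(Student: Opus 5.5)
The plan is to reduce the claim to a one-variable inequality, using the subadditivity of $t\mapsto t^p$ for $0<p<1$ and homogeneity. I read $\x_1^*$ in the statement as $\x^*$. I also use the standing requirement $\beta\ge p$ from Assumption~\ref{assump: expansion}, so that $q\triangleq\beta/p\ge 1$.

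\textbf{Step 1 (quasi-triangle inequality).} Since $|a+b|^p\le |a|^p+|b|^p$ coordinatewise for $0<p<1$, we have $\|\hat{\x}^*\|_p^p\le \|\x^*\|_p^p+\|\hat{\x}^*-\x^*\|_p^p$. Write $a=\|\x^*\|_p^p$, $A=\|\hat{\x}^*\|_p^p$ and $D=\|\hat{\x}^*-\x^*\|_p^p$. Then $D\ge A-a>0$, and the hypothesis gives $A\ge(1+\delta)a$. The goal $M_{\delta,\beta}A^{q}-a^{q}\le D^{q}$ therefore follows from
\begin{equation*}
M_{\delta,\beta}A^q-a^q\le (A-a)^q .
\end{equation*}
If $a=0$ this reads $M_{\delta,\beta}A^q\le A^q$. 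That holds because $M_{\delta,\beta}\le 1$, which is equivalent to $\delta^q+1\le(1+\delta)^q$, i.e.\ superadditivity of $t^q$ for $q\ge1$.

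\textbf{Step 2 (normalization).} For $a>0$, divide by $a^q$ and set $s=A/a\ge 1+\delta$. It suffices to show $M_{\delta,\beta}s^q-1\le (s-1)^q$. Since $M_{\delta,\beta}\le 1$, we have $M_{\delta,\beta}s^q-1\le M_{\delta,\beta}(s^q-1)$. So it is enough to prove
\begin{equation*}
\rho(s)\triangleq\frac{(s-1)^q}{s^q-1}\ \ge\ \frac{\delta^q}{(1+\delta)^q-1}=\rho(1+\delta)\qquad\text{for all } s\ge 1+\delta ,
\end{equation*}
that is, that $\rho$ is nondecreasing on $(1,\infty)$.

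\textbf{Step 3 (monotonicity of $\rho$), the main obstacle.} I would differentiate $\log\rho(s)=q\log(s-1)-\log(s^q-1)$. Its derivative is nonnegative iff $\frac{q}{s-1}\ge\frac{qs^{q-1}}{s^q-1}$, i.e.\ iff $s^q-1\ge s^{q-1}(s-1)$. This simplifies to $s^{q-1}\ge 1$, which is true for $s>1$ and $q\ge1$. If this direct computation hit a snag, the fallback is to substitute $s=1/(1-r)$ and use convexity of $r\mapsto r^q+(1-r)^q$.

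With $\rho$ nondecreasing, Steps 1 and 2 combine to give $M_{\delta,\beta}\|\hat{\x}^*\|_p^\beta-\|\x^*\|_p^\beta\le(A-a)^q\le D^q=\|\hat{\x}^*-\x^*\|_p^\beta$. The only delicate points are checking $q\ge1$, which relies on $\beta\ge p$, and the trivial case $a=0$.
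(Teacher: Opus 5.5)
Your proof is correct and follows the paper's own argument. Both use the $p$-subadditivity bound $\|\hat{\x}^*-\x^*\|_p^p\ge\|\hat{\x}^*\|_p^p-\|\x^*\|_p^p$, and both then show that $t\mapsto (t-1)^c/(t^c-1)$ is nondecreasing on $(1,\infty)$ by the same derivative computation (the paper's Fact on $F(t)$, with $c=\beta/p$). You are slightly more careful than the paper in two places: you state explicitly that $M_{\delta,\beta}\le 1$ is needed to pass from $M_{\delta,\beta}(a^c-b^c)$ to the stated form $M_{\delta,\beta}a^c-b^c$, and that $\beta\ge p$ is needed so that $c\ge 1$. The paper uses both facts without comment.
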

\begin{proof}
    Note that due to the concavity of $\| \cdot\|_p^p$, it holds that
    \begin{equation*}
        \| \hat{\x}^*  - \x^*\|_p^p \geq \| \hat{\x}^* \|_p^p - \|\x^* \|_p^p.
    \end{equation*}
    By setting $a = \| \hat{\x}^*\|_p^p$, $b =  \| {\x}_1^*\|_p^p$ and $c = \beta /p$ in Fact~\ref{fact: ineq 1}, it holds that
    \begin{equation*}
        (\| \hat{\x}^*\|_p^p - \| {\x}_1^*\|_p^p)^{\beta/p} \geq M_{\delta, \beta} (\| \hat{\x}^*\|_p^\beta - \| {\x}_1^*\|_p^\beta).
    \end{equation*}
    Therefore, combining the two inequalities together leads to 
    \begin{equation*}
       M_{\delta, \beta}  \| \hat{\x}^*\|_p^\beta - \| {\x}_1^*\|_p^\beta \leq \| \hat{\x}^*  - \x^*\|_p^\beta.
    \end{equation*}
\end{proof}

\begin{fact}
\label{fact: ineq 1}
    If $a/b > 1+\delta$, it holds that for $c>1$,
    \begin{equation*}
        (a - b)^{c} \geq M_{\delta, \beta} (a^c - b^c),
    \end{equation*}
    where $M_{\delta, \beta} = \frac{\delta^c}{(1+\delta)^c - 1}$ and $c = \beta / p$.
\end{fact}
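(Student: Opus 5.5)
The plan is to reduce the inequality to a one-variable monotonicity statement using homogeneity. Throughout, $a$ and $b$ are nonnegative: in the application they are the $p$-th powers of $p$-norms, $a=\|\hat{\x}^*\|_p^p$ and $b=\|\x^*\|_p^p$. Also $c=\beta/p>1$ and $\delta>0$.

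\textbf{Degenerate case $b=0$.} Here the claim reads $a^c\ge M_{\delta,\beta}a^c$, so it suffices to show $M_{\delta,\beta}\le 1$. This is equivalent to $(1+\delta)^c\ge 1+\delta^c$. That holds for $c>1$ by superadditivity of $t\mapsto t^c$ on $[0,\infty)$: this function is convex and vanishes at $0$, so $(x+y)^c\ge x^c+y^c$.

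\textbf{Main case $b>0$.} Both sides of the inequality are homogeneous of degree $c$ in $(a,b)$. Dividing by $b^c$ and setting $r=a/b>1+\delta$, the claim becomes
\begin{equation*}
g(r)\triangleq\frac{(r-1)^c}{r^c-1}\;\ge\;\frac{\delta^c}{(1+\delta)^c-1}=g(1+\delta).
\end{equation*}
Here I use that $r^c-1>0$, so multiplying through preserves the inequality. It therefore suffices to show that $g$ is increasing on $(1,\infty)$.

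\textbf{Monotonicity of $g$.} Differentiating, the numerator of $g'(r)$ is
\begin{equation*}
c(r-1)^{c-1}(r^c-1)-(r-1)^c\,c\,r^{c-1}=c(r-1)^{c-1}\bigl[r^c-1-(r-1)r^{c-1}\bigr]=c(r-1)^{c-1}\bigl[r^{c-1}-1\bigr].
\end{equation*}
For $r>1$ and $c>1$ this is strictly positive, so $g$ is strictly increasing there. Since $r>1+\delta$, we get $g(r)>g(1+\delta)$, which yields the claim.

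There is no real obstacle here. The only care needed is in the sign bookkeeping: dividing by $r^c-1$ is legitimate only because $r>1$, and the case $b=0$ must be handled separately, via the bound $M_{\delta,\beta}\le1$.
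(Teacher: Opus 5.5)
Your proof is correct and follows the same route as the paper. Both arguments set $t=a/b$ and reduce the claim to the monotonicity of $F(t)=\frac{(t-1)^c}{t^c-1}$ on $(1,\infty)$, proved by the identical derivative computation $F'(t)=\frac{c(t-1)^{c-1}(t^{c-1}-1)}{(t^c-1)^2}>0$. Beyond what the paper does, you also handle $b=0$ separately via $(1+\delta)^c\ge 1+\delta^c$, and you state the needed nonnegativity of $a,b$; the paper leaves both points implicit.
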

\begin{proof}[Proof of Fact~\ref{fact: ineq 1}]
    We firstly note that function $F(t) = \frac{(t-1)^c}{t^c - 1}$ is non-decreasing for $t>1$ and $c>1$. This can be derived from its derivation $\frac{\rm d}{\rm d t} F(t) = \frac{c(t-1)^{c-1}}{(t^c-1)^2}(t^{c-1} - 1)$ .
    Therefore, by plugging into $t = a/b > 1+\delta$, it holds that
    \begin{equation*}
        F(t) = \frac{(t-1)^c}{t^c - 1} \geq F(1+\delta) =\frac{\delta^c}{(1+\delta)^c - 1}.
    \end{equation*}
    Therefore, it holds that
    \begin{equation*}
        (a - b)^{c} \geq M_{\delta, \beta} (a^c - b^c).
    \end{equation*}
\end{proof}

\begin{fact}
\label{fact: fraction}
    $\frac{1+a p}{1-b\sqrt{p}} \leq 1 + (a+2b)\sqrt{p}$ if $p \leq (\frac{a+b}{a + 2b^2 + ab})^2$.
\end{fact}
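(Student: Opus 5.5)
The plan is to substitute $s = \sqrt{p} \ge 0$, clear the denominator, and reduce the claim to a linear inequality in $s$ that is exactly the stated hypothesis. Throughout, take $a, b \ge 0$ with $(a,b) \neq (0,0)$. This is the regime in which Fact~\ref{fact: fraction} is used in Lemma~\ref{lem: Ap to A0}, with $a = 4B^\prime \ge 0$ and $b = \gamma > 0$.

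\emph{Step 1: the denominator is positive.} Multiplying through requires $1 - bs > 0$. The hypothesis gives $s \le \frac{a+b}{a + ab + 2b^2}$, so it suffices to show $b(a+b) < a + ab + 2b^2$. This simplifies to $0 < a + b^2$, which holds under the standing assumptions. Hence $bs < 1$.

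\emph{Step 2: reduce to a polynomial inequality.} Since $1 - bs > 0$, the claim is equivalent to $1 + a s^2 \le (1 + (a+2b)s)(1 - bs)$. Expanding the right-hand side gives
\begin{equation*}
(1 + (a+2b)s)(1 - bs) = 1 + (a+b)s - (ab + 2b^2)s^2 .
\end{equation*}
Cancelling the constant $1$, the claim becomes $a s^2 + (ab + 2b^2) s^2 \le (a+b) s$.

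\emph{Step 3: conclude.} If $s = 0$ the inequality holds trivially. For $s > 0$, dividing by $s$ gives
\begin{equation*}
s\,(a + ab + 2b^2) \le a + b ,
\end{equation*}
which is exactly $\sqrt{p} \le \frac{a+b}{a + 2b^2 + ab}$. Squaring both sides yields the hypothesis $p \le \bigl(\frac{a+b}{a+2b^2+ab}\bigr)^2$.

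The computation is routine. The only point needing care is Step 1, where the sign of $1 - b\sqrt{p}$ must be confirmed before cross-multiplying; the strict inequality $0 < a + b^2$ handles this.
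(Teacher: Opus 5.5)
Your proof is correct and follows the same route as the paper: clear the denominator, expand $(1+(a+2b)\sqrt{p})(1-b\sqrt{p}) = 1+(a+b)\sqrt{p}-(ab+2b^2)p$, and reduce to $(a+ab+2b^2)\sqrt{p}\le a+b$. Your Step 1 check that $1-b\sqrt{p}>0$ (via $a+b^2>0$) and your explicit assumption $a,b\ge 0$ are details the paper's one-line proof leaves implicit. They are genuinely needed, since the fact fails otherwise, e.g.\ for $a=0$, $b=-1$, $p=1/100$.
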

\begin{proof}[Proof of Fact~\ref{fact: fraction}]
    It suffices to show that when $p \leq (\frac{a+b}{a + 2b^2 + ab})^2$
    \begin{equation*}
        1+a p \leq (1 + (a+2b)\sqrt{p})(1-b\sqrt{p}).
    \end{equation*}
    By simple calculation, it suffices to show that
    \begin{equation*}
        (a + ab + 2b^2) \sqrt{p} \leq a+b.
    \end{equation*}
\end{proof}

\begin{lemma}
\label{lem: tech for logB}
    If $p \max_i \log \x_0[i] \leq 2$, it holds that $\exp(p \log \x_0[i]) \leq 1 + 4 p B^\prime$, where $B^\prime = \max\{ 0, \log B\}$.
\end{lemma}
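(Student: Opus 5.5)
The plan is to reduce everything to a one-variable inequality for the exponential on a bounded interval. Fix a coordinate $i$ with $\x_0[i] \neq 0$; as in the proof of Lemma~\ref{lem: Ap to A0}, we interpret $\x_0[i]$ through its absolute value. Set $s \triangleq p \log \x_0[i]$, so that the quantity to bound is $\exp(s)$. The first step is to record the two facts about $s$ that we will use.

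First, the hypothesis gives $s \leq 2$. Second, since $\| \x_0\|_\infty \leq B$, we have $\log \x_0[i] \leq \log B \leq B^\prime$, hence $s \leq p B^\prime$. We then split into two cases according to the sign of $s$.

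\emph{Case $s \leq 0$}, i.e.\ $\x_0[i] \leq 1$. Here $\exp(s) \leq 1 \leq 1 + 4pB^\prime$, because $B^\prime \geq 0$ by definition. Nothing more is needed.

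\emph{Case $0 < s \leq 2$.} Here I would use convexity of $\exp$ on $[0,2]$: the graph lies below the chord joining $(0,1)$ and $(2,e^2)$, so
\begin{equation*}
\exp(s) \leq 1 + \frac{e^2 - 1}{2}\, s .
\end{equation*}
Since $(e^2-1)/2 \approx 3.19 \leq 4$, this gives $\exp(s) \leq 1 + 4s$. Combining with $s \leq pB^\prime$ yields $\exp(s) \leq 1 + 4pB^\prime$, as required.

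The only step needing any care is the constant in the chord bound, namely checking $(e^2-1)/2 \leq 4$, i.e.\ $e^2 \leq 9$. This holds since $e < 3$. The rest is bookkeeping: we must make sure that the upper bound on $s$ used in the second case is $pB^\prime$ and not merely the hypothesis $s \leq 2$. The hypothesis serves only to place $s$ in the interval $[0,2]$ where the chord bound is valid.
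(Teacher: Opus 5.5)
Your proof is correct and follows the same route as the paper. You split on the sign of $s = p\log \x_0[i]$, use $\exp(s)\le 1$ for $s\le 0$ and $\exp(s)\le 1+4s$ on $[0,2]$, and then bound $s\le pB^\prime$ using $\|\x_0\|_\infty\le B$; the only addition is that you justify the constant $4$ via the convexity chord bound $(e^2-1)/2<4$, which the paper simply asserts.
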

\begin{proof}[Proof of Lemma~\ref{lem: tech for logB}]
    Consider the function $\exp(u)$, it holds that
    \begin{enumerate}
        \item If $u \in [0, 2]$, $\exp(u) \leq 1+4u $,
        \item If $u \in (-\infty, 0)$, $\exp(u) \leq 1$.
    \end{enumerate}
    Therefore, if $p \log \x_0[i] \leq 2$, it holds that 
    \begin{equation*}
        \exp(p \log \x_0[i]) \leq 1 + 4p \max\{ 0, \log \x_0[i]\} \leq 1 + 4 p B^\prime.
    \end{equation*}
\end{proof}

\label{appendix: compressed sensing}

\section{Experiment Details}
\label{appendix: experiment}

\subsection{Synthetic Dataset}

We set $\beta^*$ as a nearly all-zero 10,000-dimensional vector, except for the first dimension to be 1.

\paragraph{Ablation study of $K$ and $M$ (Figure~\ref{fig: ablation_study_of_K_and_M}).}
We perform a grid search over $K\in\{3,5,7,9,11,13\}$ and $M\in\{0,2,4,6,8,10\}$ subject to $M < K$, using the following fixed hyperparameters: dimension $d=10{,}000$, dataset size $2{,}000$, batch size $25$, $800$ epochs, learning rate $2\times10^{-4}$ with cosine schedule, and weight decay $\kappa=5\times10^{-5}$, $\varepsilon=0$.

\paragraph{Hyperparameter sensitivity analysis (Figure~\ref{fig:compare in linear}).}
All four subfigures share the same base setting: $d=10{,}000$, dataset size $2{,}000$, batch size $25$, $800$ epochs, learning rate $2\times10^{-4}$ with cosine schedule.
Sparsity is measured at thresholds $\{10^{-7}, 5\times10^{-7}, 10^{-6}, 5\times10^{-6}, 10^{-5}, 5\times10^{-5}, 10^{-4}, 5\times10^{-4}, 10^{-3}\}$, and each configuration is run 3 times (mean $\pm$ std reported).

\begin{itemize}
    \item \textbf{(a) Algorithms.} Compares \ReWA ($K=9$, $M=5$, $\varepsilon=0$, $\lambda=5\times10^{-5}$), SGD-$\ell_1$ ($K=1$, $M=0$, $\varepsilon=0$, $\lambda=5\times10^{-5}$), and Lasso.
    \item \textbf{(b) Reparameterization $K$.} Fixes $\varepsilon=0$, $\lambda=5\times10^{-5}$, and varies $(K,M)\in\{(9,5),(7,4),(5,2),(3,0),(1,0)\}$.
    \item \textbf{(c) Adaptive learning rate $M$.} Fixes $K=9$, $\varepsilon=0$, $\lambda=5\times10^{-5}$, and varies $M\in\{0,2,5\}$.
    \item \textbf{(d) Weight decay $\lambda$.} Fixes $K=9$, $M=5$, $\varepsilon=0$, and varies $\lambda\in\{10^{-5},2.5\times10^{-5},5\times10^{-5},10^{-4}\}$.
\end{itemize}
\subsection{CIFAR10}

We directly follow the experimental setting of Spred~\citep{DBLP:conf/icml/Liu023}. Specifically, we train a ResNet18 model~\citep{he2016deep} on the CIFAR10~\citep{krizhevsky2009learning} dataset. We use a learning rate initially set to 0.256, a weight decay of 1e-4, a momentum of 0.875, and a batch size of 256 for 100 epochs. During the first 5 epochs, we employ a linear warmup, and then apply a cosine scheduler to adjust the learning rate. Specially, for $\ell_1$ case, we set the $\ell_1$ weight decay to 1e-5.

During the evaluation, we also follow the approach of Spred~\citep{DBLP:conf/icml/Liu023}. Specifically, we utilize predetermined thresholds to zero out weights whose absolute values do not exceed these thresholds. Subsequently, we compute the compression ratio of the model alongside its test accuracy on the CIFAR10 test dataset.

Our CIFAR10 experiments can be completed within 3 hours per case on a single 2080Ti GPU.

\paragraph{Ablation of \ReWA components (Figure~3a).}
For \ReWA, we fix $K=9$ and ablate the weight decay $\lambda$, adaptive learning rate order $M$, and stabilizer $\varepsilon$, each run 3 times (mean $\pm$ std reported):
\begin{itemize}
    \item \textbf{\ReWA w/ M} (full, best): $M=2$, $\varepsilon=0$, $\lambda=10^{-4}$.
    \item \textbf{\ReWA w/o M \& $\varepsilon$}: $M=0$, $\varepsilon=0$, $\lambda=10^{-4}$.
    \item \textbf{\ReWA w/ $\varepsilon$}: $M=0$, $\varepsilon=10^{-5}$, $\lambda=10^{-4}$.
    \item \textbf{\ReWA w/o wd, M \& $\varepsilon$}: $M=0$, $\varepsilon=0$, $\lambda=0$.
\end{itemize}
The $\ell_1$ baseline sweeps its regularization coefficient over $\{10^{-5}, 3\times10^{-5}, 10^{-4}, 3\times10^{-4}, 10^{-3}\}$; the best curve is shown.

\subsection{ImageNet}

We adhere to the prescribed training regimen and evaluation framework established by FFCV~\citep{leclerc2023ffcv} for the ImageNet~\citep{deng2009imagenet} dataset. Our training protocol entails training the ResNet50 model for 88 epochs using a batch size of 512, a momentum of 0.9, a label smoothing factor of 0.1, and a cyclic learning rate scheduler. For vanilla SGD, we employ a learning rate of 1.7 and a weight decay of 1e-4. When employing SGD with $\ell_1$ weight decay, we set the $\ell_1$ weight decay to 2e-6 and the learning rate to 1.7. In experiments with $K=3$, we initialize the learning rate to 1 and set the weight decay to $5e-5$, with $\epsilon$ set to 1e-6.

Our ImageNet experiments can be complete within 4 hours on 8 A100 GPUs for each training task.

\paragraph{Ablation of \ReWA components (Figure~3b).}
We fix $K=3$ and $M=2$ and ablate weight decay $\lambda$ and stabilizer $\varepsilon$ on ResNet-50/ImageNet, with learning rate $1.0$, cyclic schedule, 88 epochs, run 3 times each.
The five \ReWA variants shown are:
\begin{itemize}
    \item \textbf{\ReWA w/ wd$=5\times10^{-5}$, $\varepsilon=10^{-3}$} (best): $\lambda=5\times10^{-5}$, $\varepsilon=10^{-3}$.
    \item \textbf{\ReWA w/ wd$=5\times10^{-5}$, w/o $\varepsilon$}: $\lambda=5\times10^{-5}$, $\varepsilon=0$.
    \item \textbf{\ReWA w/ wd$=5\times10^{-5}$, $\varepsilon=10^{-6}$}: $\lambda=5\times10^{-5}$, $\varepsilon=10^{-6}$.
    \item \textbf{\ReWA w/o wd, $\varepsilon=10^{-6}$}: $\lambda=0$, $\varepsilon=10^{-6}$.
    \item \textbf{\ReWA w/o wd, $\varepsilon$}: $\lambda=0$, $\varepsilon=0$.
\end{itemize}
Baselines use the same FFCV training pipeline: vanilla SGD (lr$=1.7$, wd$=10^{-4}$) and SGD-$\ell_1$ (lr$=1.7$, $\ell_1$ coefficient $2\times10^{-6}$).

\subsection{CIFAR-10 with AdamW}

To verify that \ReWA is compatible with Adam-type optimizers, we train ResNet-18 on CIFAR-10
using \ReWA combined with AdamW ($K=9$, $M=2$, $\varepsilon=0$, weight decay $10^{-4}$).
The accuracy--compression ratio tradeoff is reported in Table~\ref{tab:rewa_adam}.
\ReWA+AdamW maintains $90.30\%$ accuracy at compression ratio $\approx 122\times$,
confirming that \ReWA is not restricted to SGD-based optimizers.

\begin{table}[h]
\centering
\caption{\ReWA combined with AdamW on CIFAR-10 / ResNet-18 ($K=9$, $M=2$, $\varepsilon=0$, wd$=10^{-4}$).}
\label{tab:rewa_adam}
\begin{tabular}{lccccc}
\toprule
Compression Ratio & $10.3\times$ & $52.4\times$ & $82.3\times$ & $122.4\times$ & $157.6\times$ \\
\midrule
Accuracy (\%) & 91.61 & 91.58 & 91.45 & 90.30 & 82.33 \\
\bottomrule
\end{tabular}
\end{table}

\subsection{Ablation Studies}
\label{appendix: ablation}

We consolidate the ablation results here to clearly demonstrate the role of each module in \ReWA.

\textbf{Role of Reparameterization ($K$) and Adaptive Learning Rate ($M$).}
$K$ and $M$ jointly govern the sparsity and stability of \ReWA, as \ReWA induces an $\ell_p$ regularizer with $p = 1+(1-\MM)/K$.
Figure~\ref{fig: ablation_study_of_K_and_M} illustrates instability when $p$ is small due to the strong gradient of the regularizer.
Figures~\ref{fig:linearReparameterization} and~\ref{fig:linearAda} further confirm this.
On CIFAR-10 (Figure~\ref{fig:cifar_seed}), enabling the adaptive learning rate substantially extends the achievable compression ratio.

\textbf{Role of Weight Decay ($\lambda$).}
Figure~\ref{fig:linearWD} shows that larger weight decay $\lambda$ leads to sparser solutions in the linear regime.
On CIFAR-10 (Figure~\ref{fig:cifar_seed}), the variant with weight decay (``w/o M \& $\varepsilon$'', i.e., \ReWA with only weight decay) and the variant without any regularization (``w/o wd, M \& $\varepsilon$'') are comparable.
On ImageNet (Figure~\ref{fig:imagenet_seed}), comparing ``w/o wd'' versus ``w/ wd=5e-5'' variants confirms that weight decay substantially improves the accuracy-sparsity tradeoff.

\textbf{Role of Threshold $\varepsilon$.}
The experiments verify Proposition~\ref{prop: ReWA condition}.
On CIFAR-10 (Figure~\ref{fig:cifar_seed}), adding $\varepsilon$ (``w/ $\varepsilon$'') and the no-regularization baseline are comparable.
On ImageNet (Figure~\ref{fig:imagenet_seed}), the choice of $\varepsilon$ matters: ``w/ wd=5e-5, $\varepsilon$=1e-3'' achieves the best compression ratios.

\subsection{Computational Overhead}
\label{appendix: overhead}

We report training time and memory usage on the synthetic dataset ($d=10{,}000$, 800 epochs) in Table~\ref{tab:overhead}.
The overhead of \ReWA is negligible: element-wise $O(d)$ operations are dominated by forward/backward passes.

\begin{table}[h]
\centering
\caption{Training overhead on the synthetic dataset.}
\label{tab:overhead}
\begin{tabular}{lcc}
\toprule
Metric & SGD-L1 & \ReWA \\
\midrule
Total time & 202.1s & 200.1s \\
CPU peak mem & 307.6\,MB & 307.8\,MB \\
GPU peak alloc & 3.94\,MB & 3.98\,MB \\
\bottomrule
\end{tabular}
\end{table}

\subsection{Hyperparameter Summary and Tuning Guidelines}
\label{appendix:hyperparams}

Table~\ref{tab:hyperparam-summary} summarizes the best hyperparameter configurations used across all experiments.

\begin{table}[h]
\centering
\caption{Best hyperparameter configurations for each experimental setting.}
\label{tab:hyperparam-summary}
\begin{tabular}{lccc}
\toprule
 & \textbf{Synthetic} & \textbf{CIFAR-10} & \textbf{ImageNet} \\
\midrule
Optimizer       & \ReWA(SGD) & \ReWA(SGD) & \ReWA(SGD) \\
$K$ / $M$ / $\varepsilon$ & $9$ / $4$ / $0$ & $9$ / $2$ / $0$ & $3$ / $2$ / $10^{-3}$ \\
$\lambda$ (wd)  & $5\times10^{-5}$ & $1\times10^{-4}$ & $5\times10^{-5}$ \\
Initialization  & Kaiming & Kaiming & Kaiming \\
LR / Schedule   & $2\times10^{-4}$ / Cosine & $0.256$ / Cosine (warmup) & $1.7$ / Cyclic \\
Epochs / Runs   & $800$ / $3$ & $100$ / $3$ & $88$ / $3$ \\
\bottomrule
\end{tabular}
\end{table}

\paragraph{How to select $K$ and $M$ for different models.}
\begin{itemize}
    \item \textbf{Simple models (linear/sparse recovery):} Use Configuration A ($\varepsilon=0$, $M>1$), e.g., $K=9$, $M=4$.
    \item \textbf{Moderate scale (CIFAR-10/ResNet-18):} $K=9$, $M=2$, $\varepsilon=0$, wd$=10^{-4}$.
    \item \textbf{Large scale (ImageNet/ResNet-50):} Use Configuration B, $K=3$, $M=2$, $\varepsilon=10^{-3}$, wd$=5\times10^{-5}$.
    \item \textbf{Rules of thumb:} Start with $K\in\{3,9\}$, $M=2$; increase $M$ for higher target sparsity; add $\varepsilon\in\{10^{-3},10^{-6}\}$ if training is unstable.
\end{itemize}

% \subsection{Ablation Study of $M$}

% We conducted an ablation study on the selection of $ M $ over the Synthetic Dataset. A grid search was performed for both $ K $ and $ M $. With the constraint that $ M < K $, we trained models using various combinations of $ K $ and $ M $. The results, as depicted in Figure \ref{fig: ablation_study_of_K_and_M}, show unsuccessful convergence in red and successful convergence in blue. We observed that for each $ K $, there exists a threshold $ M' $, below which all values of $ M $ lead to successful convergence with similar training performance, while $ M > M' $ results in failure to converge. Moreover, the size of the threshold $ M' $ exhibits monotonicity with respect to $ K $.

\end{document}